\def\isarxiv{1}
\pgfplotsset{compat=1.8}
\tikzset{elegant/.style={smooth,thick,samples=500,magenta}}
\theoremstyle{plain}
\newtheorem{theorem}{Theorem}[section]
\newtheorem{lemma}[theorem]{Lemma}
\newtheorem{remark}[theorem]{Remark}
\newtheorem{corollary}[theorem]{Corollary}
\newtheorem{proposition}[theorem]{Proposition}
\theoremstyle{definition}
\newtheorem{definition}[theorem]{Definition}
\newtheorem{assumption}[theorem]{Assumption}
\newtheorem{example}[theorem]{Example}
\newtheorem{claim}[theorem]{Claim}
\crefname{assumption}{Assumption}{Assumptions}
\def\1{\bm{1}}
\newcommand{\ve}{\@ifnextchar\bgroup{\velong}{{\bm{e}}}}
\newcommand{\velong}[1]{{\bm{#1}}}
\DeclareMathAlphabet{\mathsfit}{\encodingdefault}{\sfdefault}{m}{sl}
\SetMathAlphabet{\mathsfit}{bold}{\encodingdefault}{\sfdefault}{bx}{n}
\def\gA{{\mathcal{A}}}
\def\gF{{\mathcal{F}}}
\def\gN{{\mathcal{N}}}
\def\gS{{\mathcal{S}}}
\def\gV{{\mathcal{V}}}
\newcommand{\E}{\mathbb{E}}
\newcommand{\R}{\mathbb{R}}
\DeclareMathOperator*{\argmax}{arg\,max}
\DeclareMathOperator*{\argmin}{arg\,min}
\DeclareMathOperator{\sign}{sign}
\newcommand{\wh}{\widehat}
\newcommand{\wt}{\widetilde}
\newcommand{\ov}{\overline}
\newcommand{\PP}{\mathbb{P}}
\renewcommand{\tilde}{\wt}
\renewcommand{\hat}{\wh}
\newcommand{\T}{{\cal T}}
\newcommand{\F}{{\cal F}}
\newcommand{\states}{\mathcal{S}}
\newcommand{\actions}{\mathcal{A}}
\newcommand{\mdps}{\mathcal{M}}
\newcommand{\poly}{\mathrm{poly}}
\newcommand{\polylog}{\mathrm{polylog}}
\newcommand{\red}[1]{{\color{red}#1}}
\definecolor{b2}{RGB}{51,153,255}
\definecolor{mygreen}{RGB}{80,180,0}
\title{Going Beyond Linear RL: Sample Efficient Neural Function Approximation}
\author{
Baihe Huang\thanks{\texttt{baihehuang@pku.edu.cn}. Peking University.}
\and
Kaixuan Huang\thanks{\texttt{kaixuanh@princeton.edu}. Princeton University.}
\and
Sham M. Kakade\thanks{\texttt{sham@cs.washington.edu}. University of Washington}
\and 
Jason D. Lee\thanks{\texttt{Jasondl@princeton.edu}. Princeton University.}
\and 
Qi Lei\thanks{\texttt{qilei@princeton.edu}. Princeton University}
\and
Runzhe Wang\thanks{\texttt{runzhew@princeton.edu}. Princeton University}
\and
Jiaqi Yang\thanks{\texttt{yangjq17@gmail.com}. Tsinghua University}
}
\date{}
\author{
	Baihe Huang\textsuperscript{1$\ast$}, Kaixuan Huang\textsuperscript{2$\ast$}, Sham M. Kakade\textsuperscript{3,4$\ast$}, Jason D. Lee\textsuperscript{2$\ast$}\\ \textbf{ Qi Lei\textsuperscript{2$\ast$}, Runzhe Wang\textsuperscript{2$\ast$}, Jiaqi Yang\textsuperscript{5}}\thanks{Alphabetical order. Correspondence to: Baihe Huang, \url{baihehuang@pku.edu.cn},
	Kaixuan Huang, \url{kaixuanh@princeton.edu},
	Sham M. Kakade, \url{sham@cs.washington.edu},
	Jason D. Lee, \url{Jasondl@princeton.edu},
	Qi Lei, \url{qilei@princeton.edu},
	Runzhe Wang, \url{runzhew@princeton.edu}, and Jiaqi Yang, \url{yangjq17@gmail.com}.}\\
	\\
	\textsuperscript{1}Peking University \quad \textsuperscript{2}Princeton University \quad \textsuperscript{3}Harvard University\\
	\textsuperscript{4}Microsoft Research \quad \textsuperscript{5}{Tsinghua University}
}
\date{}
\begin{document}

\ifdefined\isarxiv

\else

\fi

\ifdefined\isarxiv
  \maketitle
  \begin{abstract}
  Deep Reinforcement Learning (RL) powered by neural net approximation of the Q function has had enormous empirical success. While the theory of RL has traditionally focused on linear function approximation (or eluder dimension) approaches, little is known about nonlinear RL with neural net approximations of the Q functions. This is the focus of this work, where we study function approximation with two-layer neural networks (considering both ReLU and polynomial activation functions).  Our first result is a computationally and statistically efficient algorithm in the generative model setting under completeness for two-layer neural networks. Our second result considers this setting but under only realizability of the neural net function class.  Here, assuming deterministic dynamics, the sample complexity scales linearly in the algebraic dimension. In all cases, our results significantly improve upon what can be attained with linear (or eluder dimension) methods.

  \end{abstract}


\else
\maketitle
\begin{abstract}

\end{abstract}
\fi

\section{Introduction}

In reinforcement learning (RL), an agent aims to learn the optimal decision-making rule by interacting with an unknown environment \cite{sutton2018reinforcement}. Deep Reinforcement Learning, empowered by deep neural networks \cite{lecun2015deep, goodfellow2016deep}, has achieved tremendous success in various real-world applications, such as Go \cite{silver2016alphago}, Atari \cite{mks+13}, Dota2 \cite{bbc+19}, Texas Hold\'em poker \cite{moravvcik2017deepstack}, and autonomous driving \cite{shalev2016safe_drive}. Those modern RL applications are characterized by large state-action spaces, and the empirical success of deep RL corroborates the observation that deep neural networks can extrapolate well across state-action spaces~\cite{henderson2018deep,mnih2013playing,lillicrap2015continuous}.

Although in practice non-linear function approximation scheme is prevalent, theoretical understandings of the sample complexity of RL mainly focus on tabular or linear function approximation settings~\cite{strehl2006pac,jaksch2010near,azar2017minimax,jin2018q,russo2019worst,zanette2019tighter,abbasi2019exploration,jin2019linear,jin2019provably, wang2019optimism}. These results rely on finite state space or exact linear approximations. Recently, sample efficient algorithms under non-linear function approximation settings are proposed \cite{wen2017efficient,dann2018oracle,du2019provably,dong2019sqrt,lcyw19,wcyw20,dym21,zhou2020neuralucb,yang2020kernel}. Those algorithms are based on Bellman rank \cite{jiang2017contextual}, eluder dimension \cite{rvr13}, neural tangent kernel \cite{jgh18,als19_dnn,dllwz19,zou2018stochastic}, or sequential Rademacher complexity \cite{rst15a,rst15b}. However, the understanding of how deep RL learns and generalizes in large state spaces is far from complete. Whereas the aforementioned works study function approximation structures that possess the nice properties of linear models, such as low information gain and low eluder dimensions, the highly non-linear nature of neural networks renders challenges on the applicability of existent analysis to deep RL. For one thing, recent wisdoms in deep learning theory cast doubt on the ability of neural tangent kernel and random features to model the actual neural networks. Indeed, the neural tangent kernel approximately reduces neural networks to random feature models, but the RKHS norm of neural networks is exponential \cite{yehudai2019power}. Moreover, it remains unclear what neural networks models have low eluder dimensions. For example, recent works \cite{dym21,li21eluder} show that two layer neural networks have exponential eluder dimension in the dimension of features. Furthermore, \cite{malik21sample} demonstrates hard instances where learning RL is exponentially hard even in the case that the target value function can be approximated as a neural network and the optimal policy is softmax linear. Thus, the mismatch between the empirical success of deep RL and its theoretical understanding remains significant, which yields the following important question:
\begin{center}
	\textit{What are the structural properties that allow sample-efficient algorithms for RL with neural network function approximation?}
\end{center}

In this paper, we advance the understanding of the above question by displaying several structural properties that allow efficient RL algorithms with neural function approximations. We consider several value function approximation models that possess high information gain and high eluder dimension. Specifically, we study two structures, namely two-layer neural networks and structured polynomials (i.e. two-layer neural networks with polynomial activation functions), under two RL settings, namely RL with simulator model and online RL. In the simulator (generative model) setting \cite{kak03,sww+18}, the agent can simulate the MDP at any state-action pair. In online RL, the agent can only start at an initial state and interact with the MDP step by step. 
The goal in both settings is to find a near-optimal policy while minimizing the number of samples used. 

We obtain the following results. For the simulator setting, we propose sample-efficient algorithms for RL with two-layer neural network function approximation. Under either policy completeness, Bellman completeness, or gap conditions, our method provably learns near-optimal policy with polynomial sample complexities. For online RL, we provide sample-efficient algorithms for RL with structured polynomial function approximation. When the transition is deterministic, we also present sample-efficient algorithms under only the realizability assumption \cite{du2019representation,wwk21}.
Our main techniques are based on neural network recovery \cite{zsj+17,jsa15,glm18}, and algebraic geometry \cite{milneAG,shafarevich2013basic,bochnak2013real,wang2019generalized}.

\subsection{Summary of our results}

Our main results in different settings are summarized in Table~\ref{tab:results}. We consider two-layer neural networks $f(x) = \langle v, \sigma(Wx) \rangle$ (where $\sigma$ is ReLU activation) and rank $k$  polynomials (see Example \ref{ex:rank-k}).
\begin{table}[h]
	\centering
	\caption[Caption for Table]{Baselines and our main results for the sample complexity to find an $\epsilon$-optimal policy.}
	\resizebox{\textwidth}{!}{
		\begin{tabular}{ |c|c|c|c|c|c| }
			\hline
			\multirow{2}{*}{} & \multicolumn{2}{c|}{rank $k$ polynomial} & \multicolumn{3}{c|}{ Neural Net of Width $k$} \\
			\cline{2-6}
			& Sim. + Det. (R) & Onl. + Det. (R)& Sim. + Det. (R) & Sim. + Gap. (R) & Sim. + Stoch. (C)\\\hline
			Baseline & $O(d^p)$ &  $O(d^p)$ & $O(d^{\poly(1/\epsilon)})$ (*) & $O(d^{\poly(1/\epsilon)})$ & $O(d^{\poly(1/\epsilon)})$ \\
			\hline
			Our results & $\red{O(dk)}$ & $\red{O(dk)}$ & $\red{\Tilde{O}(\poly(d)\cdot \exp(k))}$ & $\red{\Tilde{O}( \poly(d,k))}$ & $\red{\Tilde{O}( \poly(d,k)/\epsilon^2)}$ \\
			\hline
	\end{tabular}}
	\label{tab:results}
\end{table}

We make the following elaborations on Table~\ref{tab:results}.
\begin{itemize}
	\item {For simplicity, we display only the dependence on the feature dimension $d$, network width or polynomial rank $k$, precision $\epsilon$, and degree $p$ (of polynomials)}. 
	\item In the table \textit{Sim.} denotes simulator model, \textit{Onl.} denotes online RL, \textit{ Det.} denotes deterministic transitions, \textit{ Stoch.} denotes stochastic transitions, \textit{ Gap.} denotes gap condition, \textit{(R)} denotes realizability assumption only, and \textit{(C)} denotes completeness assumption (either policy complete or Bellman complete) together with realizability assumption. 
	\item We apply \cite{du2020agnostic} for the deterministic transition baseline, and apply \cite{du2019good} for the stochastic transition baseline. We are unaware of any methods that can directly learn  MDP with neural network value function approximation\footnotemark. 
	\item In polynomial case, the baseline first vectorizes the tensor $\begin{pmatrix}1 \\x\end{pmatrix}^{\otimes p}$ into a vector in $\R^{(d+1)^p}$ and then performs on this vector.  In the neural network case, the baseline uses a polynomial of degree $1/\epsilon$ to approximate the neural network with precision $\epsilon$ and then vectorizes the polynomial into a vector in $\R^{d^{\poly(1/\epsilon)}}$. The baseline method for realizable model (denoted by (*)) needs a further gap assumption of $\text{gap} \ge d^{\poly(1/\epsilon)} \epsilon$ to avoid the approximation error from escalating~\cite{du2020agnostic}; note for small $\epsilon$ this condition never holds but we include it in the table for the sake of comparison.
	\item In rank $k$ polynomial case, our result $O(dk)$ in simulator model can be found in Theorem~\ref{thm:gen-rl} and our result $O(dk)$ in online RL model can be found in Theorem~\ref{thm:online-rl}. These results only require a realizability assumption. Efficient explorations are guaranteed by algebraic-geometric arguments. In neural network model, our result $\Tilde{O}(\poly(d) \cdot \exp(k))$ in simulator model can be found in Theorem~\ref{thm:det_nn}. This result also only relies on the realizability assumption. For stochastic transitions, our result $\Tilde{O}(\poly(d,k)/\epsilon^2)$ works for either policy complete or Bellman complete settings, as in Theorem~\ref{thm:policy_nn} and Theorem~\ref{thm:bellman_nn} respectively. The $\Tilde{O}(\poly(d,k))$ result for gap condition can be found in Theorem~\ref{thm:gap_Q_star}.
\end{itemize}
\footnotetext{Prior work on neural function approximation has focused on neural tangent kernels, which would require $d^{\poly(1/\epsilon)}$ to approximate a two-layer network~\cite{ghorbani2021linearized}. }

\subsection{Related Work}

\paragraph{Linear Function Approximation.}  RL with linear function approximation has been widely studied under various settings, including linear MDP and linear mixture MDP \cite{jin2019provably,zanette2020learning,yang2019reinforcement}. While these papers have proved efficient regret and sample complexity bounds, their analyses relied heavily on two techniques: they used the confidence ellipsoid to quantify the uncertainty, and they used the elliptical potential lemma to bound the total uncertainty \cite{abbasi2011improved}. These two techniques were integral to their analyses but are so restrictive that they generally do not extend to nonlinear cases.

\paragraph{Eluder Dimension.} \cite{russo2013eluder,osband2013more} proposed eluder dimension, a complexity measure of the function space, and proved regret and sample complexity bounds that scaled with the eluder dimension, for bandits and reinforcement learning \cite{wsy20,jlm21}. They also showed that the eluder dimension is small in several settings, including generalized linear models and LQR.
However, as shown in \cite{dym21}, the eluder dimension could be exponentially large even with a single ReLU neuron, which suggested the eluder dimension would face difficulty in dealing with neural network cases. 
The eluder dimension is only known to give non-trivial bounds for linear function classes and monotone functions of linear function classes. For structured polynomial classes, the eluder dimension simply embeds into an ambient linear space of dimension $d^p$, where $d$ is the dimension, and $p$ is the degree. This parallels the lower bounds in  linearization / neural tangent kernel (NTK) works \cite{wei2019regularization,ghorbani2019linearized,allen2019can}, which show that linearization also incurs a similarly large penalty of $d^p$ sample complexity, and more advanced algorithm design is need to circumvent linearization\cite{bai2019beyond,chen2020towards,fang2020modeling,woodworth2019kernel,gao2019convergence,nacson2019lexicographic,ge2018learning,moroshko2020implicit,haochen2020shape,wang2020beyond,damian2021label}.

\paragraph{Bellman Rank and Completeness.} \cite{jiang2017contextual,sun2019model} studied RL with general function approximation. They used Bellman rank to measure the error of the function class under the Bellman operator and gave proved bounds in the term of it. Recently, \cite{dkl+21} propose bilinear rank and encompass more function approximation models. However, it is hard to bound either the Bellman rank or the bilinear rank for neural nets. Therefore, their results are not known to include the neural network approximation setting. Another line of work shows that exponential sample complexity is unavoidable even with good representations \cite{du2019good,weisz2020exponential,wwk21}, which implies the realizability assumption alone might be insufficient for function approximations.

\paragraph{Deterministic RL}

Deterministic system is often the starting case in the study of sample-efficient algorithms, where the issue of exploration and exploitation trade-off is more clearly revealed since both the transition kernel and reward function are deterministic. The seminal work \cite{wen2013efficient} proposes a sample-efficient algorithm for Q-learning that works for a family of function classes. Recently, \cite{du2020agnostic} studies the agnostic setting where the optimal Q-function can only be approximated by a function class with approximation error. The algorithm in \cite{du2020agnostic} learns the optimal policy with the number of trajectories linear with the eluder dimension.

\section{Preliminaries}

An episodic Markov Decision Process (MDP) is defined by the tuple $\mdps = (\states,\actions,H,\PP,r)$ where $\states$ is the state space, $\actions$ is the action set, $H$ is the number of time steps in each episode, $\PP$ is the transition kernel and $r$ is the reward function. In each episode the agent starts at a fixed initial state $s_1$ and at each time step $h \in [H]$ it takes action $a_h$, receives reward $r_h(s_h,a_h)$ and transits to $s_{h+1} \sim \PP(\cdot|s_h,a_h)$.

A deterministic policy $\pi$ is a length-$H$ sequence of functions $\pi = \{\pi_h : \states \mapsto \actions\}_{h=1}^H$. Given a policy $\pi$, we define the value function $V_h^\pi(s)$ as the expected sum of reward under policy $\pi$ starting from $s_h = s$:
\begin{align*}
    V_h^\pi(s) := \E\left[\sum_{t = h}^H r_t(s_t,a_t)|s_h = s\right]
\end{align*}
and we define the Q function $Q_h^\pi(s,a)$ as the the expected sum of reward taking action $a$ in state $s_h = s$ and then following $\pi$:
\begin{align*}
    Q_h^\pi(s,a):= \E\left[\sum_{t = h}^H r_t(s_t,a_t)|s_h = s,a_h = a\right].
\end{align*}
The Bellman operator $\T_h$ applied to Q-function $Q_{h+1}$ is defined as follow
\begin{align*}
    \T_h(Q_{h+1})(s,a) := r_h(s,a)  + \E_{s' \sim \PP(\cdot|s,a)}[\max_{a'} Q_{h+1}(s',a')].
\end{align*}
There exists an optimal policy $\pi^\ast$ that gives the optimal value function for all states, i.e. $V^{\pi^\ast}_h(s) = \sup_{\pi}V_h^\pi(s)$ for all $h \in [H]$ and $s \in \states$. For notational simplicity we abbreviate $V^{\pi^\ast}$ as $V^{\ast}$ and correspondingly $Q^{\pi^\ast}$ as $Q^\ast$. Therefore $Q^\ast$ satisfies the following Bellman optimality equations for all $s \in \states$, $a \in \actions$ and $h \in [H]$:
\begin{align*}
Q^\ast_h(s,a) = \T_h(Q^\ast_{h+1})(s,a).
\end{align*}

The goal is to find a policy $\pi$ that is $\epsilon$-optimal in the sense that $V_1^\ast(s_1) - V_1^\pi(s_1) \leq \epsilon$, within a small number of samples. We consider two query models of interacting with MDP:
\begin{itemize}
    \item In the simulator model (\cite{kak03}, \cite{sww+18}), the agent interacts with a black-box that simulates the MDP. At each time step $h \in [H]$, the agent can start at a state-action pair $(s,a)$ and interact with the black box by executing some policy $\pi$ chosen by the agent.
    \item In online RL, the agent can only start at the initial state and interact with the MDP by using a policy and observing the rewards and the next states. In each episode $k$, the agent proposes a policy $\pi^k$ based on all history up to episode $k-1$ and executes $\pi^k$ to generate a single trajectory $\{s_h^k,a_h^k\}_{h=1}^H$ with $a_h^k = \pi^k_h(s_h^k)$ and $s_{h+1}^k \sim \PP(\cdot |s_h^k,a_h^k)$.
\end{itemize}

\subsection{Function approximation}

In reinforcement learning with value function approximation, the learner is given a function class $\F = \F_1 \times \cdots \times \F_H$, where $\F_h \subset \{ f: \states \times \actions \mapsto [0,1]\}$ is a set of candidate functions to approximate $Q^\ast$. The following assumption is commonly adopted in the literature \cite{jin2019linear,wang2020provably,jlm21,du2021bilinear}.
\begin{assumption}[Realizability]\label{asp:realizability}
$Q^\ast_h \in \F_h$ for all $h \in [H]$.
\end{assumption}
The function approximation is equipped with feature mapping $\phi: \states \times \actions \mapsto \{u \in \R^d: \|u\|_2 \leq B_\phi\}$ that is known to the agent. We focus the continuous action setting (e.g. in control and robotics problems) and make the following regularity assumption on the feature function $\phi$.
\begin{assumption}[Bounded features]\label{asp:dense_feature}
Assume $\phi(s,a) \leq B_{\phi}, \forall (s,a) \in {\cal S \times A}$.
\end{assumption}

\paragraph{Notation}

For any vector $x \in \R^d$, let $x_{\max} := \max_{i \in [d]} x_i$ and $x_{\min}:= \min_{i \in [d]} x_i$. Let $s_i(\cdot)$ denote the $i$-th singular value, $s_{\min}(\cdot)$ denotes the minimum eigenvalue and $s_{\max}(\cdot)$ denotes the maximum eigenvalue. The conditional number is defined by $\kappa(\cdot) := s_{\max}(\cdot)/s_{\min}(\cdot)$. We use $\otimes$ to denote Kronecker product and $\circ$ to denote Hadamard product. For a given integer $H$, we use $[H]$ to denote the set $\{1, 2, \ldots, H\}$. For a function $f: \mathfrak{X} \mapsto \mathfrak{Y}$, we use $f^{-1}(y) := \{x \in \mathfrak{X}: f(x) = y \}$ to denote the preimage of $y \in \mathfrak{Y}$. We use the shorthand $x \lesssim y$ ($x \gtrsim y$) to indicate $x \leq O(y)$ ($x \geq \Omega(y)$).

\section{Neural Network Function Approximation}\label{sec:nn_simulator}

In this section we show sample-efficient algorithms with neural network function approximations. The function class of interest is given in the following definition. More general neural network class is discussed in Appendix~\ref{sec:nn_recovery_guarantee}.

\begin{definition}[Neural Network Function Class]\label{def:nn_class}
	We use $\F_{NN}$ to denote the function class of $f(\phi(s,a)) : \states \times \actions \mapsto \R$ where $f(x) = \langle v, \sigma(W x) \rangle: \mathbb{R}^d \mapsto \R$ is a two-layer neural network where $\sigma$ is ReLU, $\|W\|_F \leq B_W$, $v \in \{\pm 1\}^k$, $\prod_{i=1}^k s_i(W)/s_{\min}(W) \leq \lambda$, $s_{\max}(W)/s_{\min}(W) \leq \kappa$ and $k \le d$. Here $\phi : \mathcal{A} \times \mathcal{S} \mapsto \mathbb{R}^d$ is a known feature map whose image contains a ball $\{u \in \R^d: \|u\|_2 \leq  \delta_\phi \}$ with $\delta_\phi \geq d\cdot\polylog(d)$.\footnote{Here the $\delta_\phi$ is chosen only for simplicity. In general this assumption can be relaxed to that the image of $\phi$ contains an arbitrary dense ball near the origin, since one can always rescale the feature mapping in the neural function approximation.}  
\end{definition}

We introduce the following completeness properties in the setting of value function approximations. Along with Assumption~\ref{asp:realizability}, they are commonly adopted in the literature .

\begin{definition}[Policy complete]
	Given MDP $\mdps = (\states,\actions,\PP,r,H)$, function class $\F_h: \states \times \actions \mapsto \R ,h \in [H]$ is called policy complete if for all $\pi$ and $h \in [H]$, $Q^{\pi}_h \in \F_h$.
\end{definition}

\begin{definition}[Bellman complete]
	Given MDP $\mdps = (\states,\actions,\PP,r,H)$, function class $\F_h: \states \times \actions \mapsto \R ,h \in [H]$ is called Bellman complete if for all $h \in [H]$ and $Q_{h+1} \in \F_{h+1}$, $\T_h(Q_{h+1}) \in \F_h$.
\end{definition}

\subsection{Warmup: Realizable $Q^\ast$ with deterministic transition}
\label{sec:det_nn}
We start by considering the case when the transition kernel is deterministic. In this case only Assumption~\ref{asp:realizability} is required for the expressiveness of neural network function approximations. Algorithm~\ref{alg:det_nn} learns optimal policy from time step $H$ to $1$. Suppose we have learned policies $\pi_{h+1} ,\dots,\pi_H$ at level $h$ and they are exactly the optimal policies. We first explore features $\phi(s_h^i,a_h^i)$ over a standard Gaussian distribution, and if $\|\phi(s_h^i,a_h^i)\|_2 \geq \delta_\phi$ then we simply skip this trial. Recall that $\delta_\phi \geq d \cdot \poly \log (d)$, so with high probability ({w.r.t $d$}) almost all feature samples will be explored. We next construct an estimate $\hat Q_h^i$ of $Q^\ast(s^i_h,a^i_h)$ by collecting cumulative rewards using $\pi_{h+1}, \dots,\pi_{H}$ as the roll-out. Since the transition is deterministic, $\hat Q_h^i=Q^\ast(s^i_h,a^i_h)$ for all explored samples $(s^i_h,a^i_h)$. Recall that $Q^\ast_h$ is a two-layer neural network, we can now recover its parameters in Line~\ref{stp:recover_det_nn} exactly by invoking techniques in neural network optimization (see, e.g. \cite{jsa15,glm17,zsj+17}). Details of this step can be found in Appendix~\ref{sec:nn_recovery_guarantee}, where the method is mainly based on \cite{zsj+17}. This means the reconstructed $\hat Q_h$ in Line~\ref{lin:reconstructed_Q_det} is precisely $Q^\ast$, and the algorithm can thus find optimal policy $\pi^\ast_h$ in the $h$-th level.

\begin{algorithm*}[h]\caption{Learning realizable $Q^\ast$ with deterministic transition}\label{alg:det_nn}
	\begin{algorithmic}[1]
		\For{$h=H,\ldots 1$}
		\State Sample $x^i_h, i \in [n]$ from standard Gaussian $\mathcal{N}(0,I_d)$
		\For{$i \in [n]$}
		\If{$\|x^i_h\| \leq \delta_\phi$}
		\State Find $(s^i_h,a^i_h) \in \phi^{-1}(x^i_h)$ and locate the state $s^i_h$ in the generative model
		\State Pull action $a^i_h$ and use $\pi_{h+1}, \dots,\pi_{H}$ as the roll-out to collect rewards $r_{h}^{(i)} , \dots, r_{H}^{(i)}$
		\State Construct estimation $${\hat Q}_h^i \leftarrow r_{h}^{(i)} + \cdots + r_{H}^{(i)}$$
		\Else
		\State Let ${\hat Q}_h^i \leftarrow 0$.
		\EndIf
		\EndFor
		\State Compute $(v_h,W_h) \leftarrow \textsc{NeuralNetRecovery}(\{(x^i_h,{\hat Q}_h^i): i \in [n] \})$ \label{stp:recover_det_nn}
		\State Set $\hat Q_h(s,a) \leftarrow  v_h^\top \sigma(W_h \phi(s,a))$ \label{lin:reconstructed_Q_det}
		\State Let $\pi_h(s) \leftarrow \argmax_{a \in \states}~ \hat Q_h(s,a)$
		\EndFor
		\State {\bf Return} $\pi_1,\dots,\pi_H$
	\end{algorithmic}
\end{algorithm*}

\begin{theorem}\label{thm:det_nn}
(Informal) If $n \geq d \cdot \poly(\kappa,k,\lambda,\log d, B_W, B_{\phi}, H)$, then with high probability Algorithm~\ref{alg:det_nn} learns the optimal policy.
\end{theorem}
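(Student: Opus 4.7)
The plan is to prove the theorem by backward induction on the time step $h$, establishing the stronger statement that after the $h$-th iteration of the outer loop, we have $\hat Q_h \equiv Q^\ast_h$ on $\states\times\actions$ and hence $\pi_h = \pi^\ast_h$, with high probability. Once this is shown for $h=1$, the returned sequence $\pi_1,\dots,\pi_H$ is optimal. The inductive hypothesis is that $\pi_{h+1}, \dots, \pi_H$ have already been identified with $\pi^\ast_{h+1}, \dots, \pi^\ast_H$; the base case $h = H$ is vacuous since the roll-out is empty.

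For the inductive step, I would first handle the sampling and labeling. A standard Gaussian tail bound gives $\Pr[\|x^i_h\|_2 > \delta_\phi] \leq e^{-\Omega(\delta_\phi^2)} \leq e^{-\Omega(d\polylog d)}$, since $\delta_\phi \geq d\polylog d$ is well above the typical norm $\sqrt d$. A union bound over $i\in[n]$ then ensures that the ``else'' branch is never triggered (with probability $1 - ne^{-\Omega(d)}$). Because the image of $\phi$ contains the ball of radius $\delta_\phi$ (Definition~\ref{def:nn_class}), the preimage step is well-defined and yields $(s^i_h,a^i_h)$ with $\phi(s^i_h,a^i_h) = x^i_h$. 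Determinism of the transition kernel plus the inductive hypothesis imply that the observed roll-out reward equals $Q^\ast_h(s^i_h,a^i_h)$ exactly, and by realizability (Assumption~\ref{asp:realizability}) there exist parameters $(v^\ast_h,W^\ast_h)$ in the parameter set of $\F_{NN}$ such that $Q^\ast_h(s,a)=\langle v^\ast_h,\sigma(W^\ast_h\phi(s,a))\rangle$. Thus the pairs $\{(x^i_h,\hat Q^i_h)\}$ handed to \textsc{NeuralNetRecovery} are noiseless samples of a two-layer ReLU network evaluated on i.i.d.\ standard Gaussian inputs.

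The heart of the argument is then the noiseless neural net recovery guarantee deferred to Appendix~\ref{sec:nn_recovery_guarantee}: given $n \geq d\cdot\poly(\kappa,k,\lambda,\log d,B_W)$ exact samples of the form $(x,f^\ast(x))$ with $x \sim \mathcal{N}(0,I_d)$, the subroutine outputs $(v_h,W_h)$ such that $\langle v_h,\sigma(W_h\cdot)\rangle \equiv \langle v^\ast_h,\sigma(W^\ast_h\cdot)\rangle$ as functions, with probability at least $1 - d^{-\Omega(1)}$. Composing with $\phi$ gives $\hat Q_h \equiv Q^\ast_h$ on $\states\times\actions$, so $\pi_h(s) = \argmax_a\hat Q_h(s,a) = \pi^\ast_h(s)$ (ties, if any, broken consistently). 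A union bound over the $H$ levels combined with the sample budget $n\geq d\cdot\poly(\kappa,k,\lambda,\log d,B_W,B_\phi,H)$ drives the total failure probability below any desired constant, completing the induction.

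The main obstacle is precisely the exactness claim in the recovery step: for a ReLU two-layer network, polynomially many noiseless Gaussian samples must suffice to identify the weights up to the permutation and sign symmetries absorbed by $v\in\{\pm 1\}^k$. The nonconvex landscape of ReLU networks rules out naive gradient descent analyses, so I would expect the appendix to follow the template of \cite{zsj+17}: construct low-order Hermite/score-based moment tensors of the labels, apply a robust tensor decomposition to recover $\{W^\ast_{h,j}/\|W^\ast_{h,j}\|\}_{j\in[k]}$ from the rank-$k$ structure (here the conditioning parameters $\kappa$ and $\lambda$ enter through the stability of the decomposition), then recover norms and signs from a second moment, and finally refine to exact equality using the noiseless structure. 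All other ingredients (Gaussian concentration, preimage inversion, induction, union bound) are essentially bookkeeping and contribute only $\polylog$ factors to the sample complexity.
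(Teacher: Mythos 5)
Your proposal is correct and follows essentially the same route as the paper: backward induction from $h=H$ to $h=1$, noting that determinism plus the inductive hypothesis makes the roll-out rewards exactly $Q^\ast_h$, then invoking the exact noiseless recovery guarantee (Theorem~\ref{thm:exact_nn}, which the paper proves via the method-of-moments initialization followed by a local strong-convexity/gradient-descent refinement from \cite{zsj+17}, as you anticipated) to conclude $\hat Q_h \equiv Q^\ast_h$ and hence $\pi_h = \pi^\ast_h$. The only cosmetic difference is that you argue the else branch is never triggered with high probability, whereas the paper's recovery theorem is stated directly for the truncated model; both handle the same exponentially small truncation event.
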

The formal statement and complete proof are deferred to the Appendix~\ref{sec:proof_det_nn}. The main idea of exact neural network recovery can be summarized in the following. We first use method of moments to find a `rough' parameter recovery. If this `rough' recovery is sufficiently close to the true parameter, the empirical $\ell_2$ loss function is locally strongly convex and there is unique global minimum. Then we can apply gradient descent to find this global minimum which is exactly the true parameter.
\subsection{Policy complete neural function approximation}\label{sec:policy_nn}

Now we consider general stochastic transitions. Difficulties arise in this scenario due to noises in the estimation of Q-functions. In the presence of model misspecification, these noises cause estimation errors to amplify through levels and require samples to be exponential in $H$. In this section, we show that neural network function approximation is still learnable, assuming the function class $\F_{NN}$ is policy complete with regard to MDP $\mdps$. Thus for all $\pi \in \Pi$, we can denote $Q_h^{\pi}(s,a) = \langle v^\pi, \sigma(W^\pi \phi(s,a)) \rangle$.

\begin{algorithm*}[h]\caption{Learn policy complete NN with simulator.}\label{alg:simulator_nn_policy_complete}
	\begin{algorithmic}[1]
		\For{$h=H,\ldots 1$}
		\State Sample $x^i_h, i \in [n]$ from standard Gaussian $\mathcal{N}(0,I_d)$ 
		\For{$i \in [n]$}
		\If{$\|x^i_h\| \leq \delta_\phi$}
		\State Find $(s^i_h,a^i_h) \in \phi^{-1}(x^i_h)$ and locate the state $s^i_h$ in the generative model
		\State Pull action $a^i_h$ and use $\pi_{h+1}, \dots,\pi_{H}$ as the roll-out to collect rewards $r_{h}^{(i)} , \dots, r_{H}^{(i)}$
		\State Construct unbiased estimation of ${ Q}_h^{\pi_{h+1}, \dots,\pi_{H}}(s^i_h,a^i_h)$
		$${\hat Q}_h^i \leftarrow r_{h}^{(i)} + \cdots + r_{H}^{(i)}$$
		\Else
		\State Let ${\hat Q}_h^i \leftarrow 0$.
		\EndIf
		\EndFor
		\State Compute $(v_h,W_h) \leftarrow \textsc{NeuralNetNoisyRecovery}(\{(x^i_h,{\hat Q}_h^i): i \in [n] \})$
		\State Set $\hat Q_h(s,a) \leftarrow  v_h^\top \sigma(W_h \phi(s,a))$
		\State Let $\pi_h(s) \leftarrow \argmax_{a \in \states}~ \hat Q_h(s,a)$
		\EndFor
		\State {\bf Return} $\pi_1,\dots,\pi_H$
	\end{algorithmic}
\end{algorithm*}

Algorithm~\ref{alg:simulator_nn_policy_complete} learns policy from level $H, H-1, \dots, 1$. In level $h$, the algorithm has learned policy $\pi_{h+1}, \dots,\pi_{H}$ that is only sub-optimal by $(H-h)\epsilon/H$. Then it explores features $\phi(s,a)$ from $\mathcal{N}(0,I_d)$. The algorithm then queries $(s,a)$ and uses learned policy $\pi_{h+1}, \dots,\pi_{H}$ as roll out to collect an unbiased estimate of the Q-function ${Q}_h^{\pi_{h+1}, \dots,\pi_{H}}(s,a)$. Since ${Q}_h^{\pi_{h+1}, \dots,\pi_{H}}(s,a) \in \F_{NN}$ is a two-layer neural network, it can then be recovered from samples. Details of this step can be found in Appendix~\ref{sec:nn_recovery_guarantee}, where the methods are mainly based on \cite{zsj+17}. The algorithm then reconstructs this Q-function and finds its optimal policy $\pi_h$.

\begin{theorem}\label{thm:policy_nn}
(Informal) Fix $\epsilon,t$, if $n \geq \epsilon^{-2} \cdot d \cdot \poly(\kappa,k, B_W, B_{\phi}, H, \log (d/t))$, then with probability at least $1-t$ Algorithm~\ref{alg:simulator_nn_policy_complete} returns an $\epsilon$-optimal policy $\pi$. 
\end{theorem}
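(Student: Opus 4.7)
The plan is to proceed by backward induction on $h$ from $H$ down to $1$, with the inductive hypothesis that the rollout $(\pi_{h+1}, \ldots, \pi_H)$ is $(H-h)\epsilon/H$-suboptimal from level $h+1$; equivalently, $V_{h+1}^{\pi_{h+1}, \ldots, \pi_H}(s) \geq V_{h+1}^{\ast}(s) - (H-h)\epsilon/H$ for every $s$. The base case $h=H$ is vacuous. Setting $h=1$ at the end delivers the claimed $\epsilon$-optimality, and I budget failure probability $t/H$ per level and conclude by a union bound.

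The inductive step rests on two pieces. First, policy completeness guarantees $Q_h^{\pi_{h+1}, \ldots, \pi_H} = \langle v^{\pi}, \sigma(W^{\pi}\phi(\cdot,\cdot))\rangle \in \F_{NN}$ for some parameters satisfying the quantitative conditions of Definition~\ref{def:nn_class}. Since the feature map covers a ball of radius $\delta_\phi \gtrsim d\cdot\polylog d$ and each $x^i_h$ is drawn from $\mathcal{N}(0, I_d)$, Gaussian tail bounds imply $\|x^i_h\|_2 \leq \delta_\phi$ for all $i$ except with probability $d^{-\omega(1)}$, so with high probability every sampled point has a well-defined preimage $(s^i_h, a^i_h) \in \phi^{-1}(x^i_h)$. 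Rolling out $\pi_{h+1}, \ldots, \pi_H$ from $(s^i_h,a^i_h)$ in the simulator then produces a label $\hat Q_h^i$ whose conditional expectation is exactly $Q_h^{\pi_{h+1}, \ldots, \pi_H}(s^i_h, a^i_h)$ and whose noise is bounded by $O(H)$ (rewards lie in $[0,1]$).

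Second, \textsc{NeuralNetNoisyRecovery} (the noisy extension of the recovery procedure used for Algorithm~\ref{alg:det_nn}, developed in Appendix~\ref{sec:nn_recovery_guarantee} and built on the moment-based initialization plus locally strongly convex refinement of \cite{zsj+17}) applied to the $n$ samples $\{(x^i_h, \hat Q_h^i)\}$ yields parameters with
\begin{align*}
\|\hat v_h - v^{\pi}\|_2 + \|\hat W_h - W^{\pi}\|_F \;\lesssim\; \frac{\poly(\kappa, k, \lambda, B_W, B_\phi, H, \log(dH/t))}{\sqrt{n}}
\end{align*}
with probability at least $1 - t/H$. Because $(v, W) \mapsto \langle v, \sigma(W\phi(s,a))\rangle$ is $\poly(k, B_W, B_\phi)$-Lipschitz on the bounded feature domain, this parameter error translates into $\|\hat Q_h - Q_h^{\pi_{h+1}, \ldots, \pi_H}\|_\infty \leq \epsilon/(2H)$ whenever $n \geq \epsilon^{-2}\cdot d\cdot\poly(\kappa, k, B_W, B_\phi, H, \log(d/t))$. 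A standard greedy-improvement calculation then closes the induction: since $\pi_h = \argmax_a \hat Q_h(s,a)$,
\begin{align*}
V_h^{\pi_h, \ldots, \pi_H}(s) = Q_h^{\pi_{h+1}, \ldots, \pi_H}(s, \pi_h(s)) &\;\geq\; Q_h^{\pi_{h+1}, \ldots, \pi_H}(s, \pi^{\ast}_h(s)) - \epsilon/H \\
&\;\geq\; V_h^{\ast}(s) - (H-h+1)\epsilon/H,
\end{align*}
where the last step uses one Bellman backup together with the inductive hypothesis $V_{h+1}^{\pi_{h+1}, \ldots, \pi_H} \geq V_{h+1}^{\ast} - (H-h)\epsilon/H$.

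The main obstacle will be the recovery step: extending the noiseless exact-recovery argument of \cite{zsj+17} to $O(H)$-subgaussian labels while keeping the per-level sample complexity linear in $d$. Concretely, one must show that (a) the method-of-moments initialization concentrates at the $1/\sqrt{n}$ rate despite label noise, so that it lands inside the basin of local strong convexity, and (b) gradient descent on the empirical squared loss then contracts to an $O(\mathrm{noise}/\sqrt{n})$ neighborhood of the true parameters. The non-degeneracy quantities $\kappa, \lambda$ in Definition~\ref{def:nn_class} are exactly what ensure a well-conditioned population Hessian, and $B_W, B_\phi$ supply the Lipschitz constants needed for the parameter-to-function-error conversion. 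Uniformity of these constants across the (a priori unknown) policies $(\pi_{h+1}, \ldots, \pi_H)$ arising along the induction is precisely what policy completeness gives us over plain realizability of $Q^{\ast}$.
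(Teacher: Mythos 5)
Your inductive skeleton is essentially identical to the paper's: backward induction with the hypothesis that $(\pi_{h+1},\ldots,\pi_H)$ is $(H-h)\epsilon/H$-suboptimal, policy completeness ensuring $Q_h^{\pi_{h+1},\ldots,\pi_H}\in\F_{NN}$, noisy neural net recovery to get $\|\hat Q_h - Q_h^{\pi_{h+1},\ldots,\pi_H}\|_\infty \leq \epsilon/(2H)$, then a greedy-improvement argument. Your closing step is a slight reorganization: the paper introduces an auxiliary policy $\tilde\pi_h := \argmax_a Q_h^{\pi_{h+1},\ldots,\pi_H}(\cdot,a)$ and splits $V_h^{\pi^\ast}-V_h^\pi$ into three terms (recursion from $h+1$, $\leq 0$ by optimality of $\tilde\pi_h$, and $\leq \epsilon/H$ from the $\ell_\infty$ recovery error), whereas you bound $Q_h^{\pi_{h+1},\ldots,\pi_H}(s,\pi_h(s)) \geq Q_h^{\pi_{h+1},\ldots,\pi_H}(s,\pi^\ast_h(s)) - \epsilon/H$ directly and then apply the IH through one Bellman backup. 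Both are correct and give the same $(H-h+1)\epsilon/H$ error.

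Two inaccuracies in your description of the recovery step, however. First, \textsc{NeuralNetNoisyRecovery} (Algorithm~\ref{alg:mom}, analyzed in Theorem~\ref{thm:mom_recover_nn}) is a pure method-of-moments estimator; the moment-initialization-plus-gradient-descent-refinement pipeline you describe is the \emph{noiseless} routine \textsc{NeuralNetRecovery} (Algorithm~\ref{alg:mom_gd}, Theorem~\ref{thm:exact_nn}), used only in the deterministic-transition Section~\ref{sec:det_nn}. With sub-Gaussian labels, the empirical loss is not exactly zero at the true parameters, so exact recovery is off the table and the paper settles for the $\ell_\infty$ guarantee the moment estimator alone provides. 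Second, and because of this, $\lambda = \prod_i s_i(W)/s_{\min}(W)$ should \emph{not} appear in your intermediate parameter-error bound: Theorem~\ref{thm:mom_recover_nn} depends on $\kappa$ but not $\lambda$, and the paper explicitly flags this after Theorem~\ref{thm:policy_nn} as the key difference from the deterministic case --- dropping $\lambda$ is what avoids the potential $\exp(k)$ blow-up that would otherwise defeat the claimed sample complexity. Your final sample-complexity statement is $\lambda$-free, so this is an internal inconsistency in the write-up rather than a fatal gap, but it does misidentify where $\lambda$ enters and exits the argument.
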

The formal statement and complete proof are deferred to Appendix~\ref{sec:proof_policy_nn}. Notice that unlike the case of Theorem \ref{thm:det_nn}, the sample complexity does not depend on $\lambda$, thus avoiding the potential exponential dependence in $k$.

The main idea of the proof is that at each time step a neural network surrogate of $Q^\ast$ can be constructed by the policy already learned. Suppose we have learned $\pi_{h+1},\dots,\pi_H$ in level $h$, then from policy completeness $Q^{\pi_{h+1},\dots,\pi_H}_h$ belongs to $\F_{NN}$ and we can interact with the simulator to obtain its estimate $\hat Q_h$. If $\|\hat Q_h - Q^{\pi_{h+1},\dots,\pi_H}_h\|_\infty$ is small, the optimistic planning based on $\hat Q_h$ is not far from the optimal policy of $Q^{\pi_{h+1},\dots,\pi_H}_h$. Therefore the errors can be decoupled into the errors in recovering $Q^{\pi_{h+1},\dots,\pi_H}_h$ and the suboptimality of $Q^{\pi_{h+1},\dots,\pi_H}_h$, which depends on level $h+1$. This reasoning can then be recursively performed to level $H$, and thus we can bound the suboptimality of $\pi_h$.

\subsection{Bellman complete neural function approximation}
\label{sec:bellman_nn}

In addition to policy completeness, we show that neural network function approximation can also learn efficiently under the setting where the function class $\F_{NN}$ is Bellman complete with regard to MDP $\mdps$. Specifically, for $Q_{h+1} \in \F_{h+1}$, there are $v^{Q_{h+1}}$ and $W^{Q_{h+1}}$ such that $\T_h(Q_{h+1})(s,a) = \langle v^{Q_{h+1}}, \sigma(W^{Q_{h+1}} \phi(s,a)) \rangle$. 

Algorithm~\ref{alg:simulator_nn_bellman_complete} is similar to the algorithm in previous section. Suppose in level $h$, the algorithm has constructed the Q-function $\hat Q_{h+1}(s,a) = v_{h+1}^\top \sigma(W_{h+1} \phi(s,a))$ that is $(H-h)\epsilon/H$-close to the optimal $Q^\ast_{h+1}$. It then recovers weights $v_{h}, W_{h}$  from $\T_h(\hat Q_{h+1})(s,a) = \langle v^{\hat Q_{h+1}}, \sigma(W^{\hat Q_{h+1}} \phi(s,a)) \rangle$, using unbiased estimates $r_h(s^i_h,a^i_h) + \hat V_{h+1}(s_{h+1}^i)$. The Q-function $\hat Q_{h}(s,a) = v_{h}^\top \sigma(W_{h} \phi(s,a))$ reconstructed from weights $v_{h}, W_{h}$ is thus $(H-h+1)\epsilon/H$-close to the $Q^\ast_h$. 

\begin{algorithm*}[h]\caption{Learn Bellman complete NN with simulator.}\label{alg:simulator_nn_bellman_complete}
	\begin{algorithmic}[1]
		\For{$h=H,\ldots 1$}
		\State Sample $x^i_h, i \in [n]$ from standard Gaussian $\mathcal{N}(0,I_d)$
		\For{$i \in [n]$}
		\If{$\|x^i_h\| \leq \delta_\phi$}
		\State Find $(s^i_h,a^i_h) \in \phi^{-1}(x^i_h)$ and locate the state $s^i_h$ in the generative model
		\State Pull action $a^i_h$ and and observe $r_h(s^i_h,a^i_h), s_{h+1}^i$
		\State Construct unbiased estimation of $\T_h(\hat Q_{h+1})(s^i_h,a^i_h)$ $${\hat Q}_h^i \leftarrow r_h(s^i_h,a^i_h) + \hat V_{h+1}(s_{h+1}^i)$$
		\Else
		\State Let ${\hat Q}_h^i \leftarrow 0$.
		\EndIf
		\EndFor
		\State Compute $(v_h,W_h) \leftarrow \textsc{NeuralNetNoisyRecovery}(\{(x^i_h,{\hat Q}_h^i): i \in [n] \})$
		\State Set $\hat Q_h(s,a) \leftarrow  v_h^\top \sigma(W_h \phi(s,a))$ and $\hat V_h \leftarrow \max_{a \in \actions} \hat Q_h(s,a)$
		\State Let $\pi_h(s) \leftarrow \argmax_{a \in \actions}~ \hat Q_h(s,a)$
		\EndFor
		\State {\bf Return} $\pi_1,\dots,\pi_H$
	\end{algorithmic}
\end{algorithm*}

\begin{theorem}\label{thm:bellman_nn}
(Informal) Fix $\epsilon,t$, if $n \geq \epsilon^{-2} \cdot d \cdot \poly(\kappa,k,B_W, B_{\phi}, H, \log (d/t))$, then with probability at least $1-t$ Algorithm~\ref{alg:simulator_nn_bellman_complete} returns an $\epsilon$-optimal policy $\pi$.  
\end{theorem}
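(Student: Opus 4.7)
The plan is to argue by downward induction on $h$, maintaining the invariant $\|\hat Q_h - Q^*_h\|_\infty \leq (H-h+1)\epsilon_0$ with high probability, where $\epsilon_0 = \Theta(\epsilon/H^2)$ is a per-level tolerance chosen so that the standard greedy policy conversion $V^*_1(s_1) - V^\pi_1(s_1) \leq 2\sum_{h=1}^H \|\hat Q_h - Q^*_h\|_\infty$ returns $O(\epsilon)$. The base case $h = H+1$ is vacuous since $\hat Q_{H+1} \equiv Q^*_{H+1} \equiv 0$.

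For the inductive step, I will invoke Bellman completeness on $\hat Q_{h+1} \in \F_{NN}$ to obtain parameters $(v_h^\star, W_h^\star)$ that respect the structural constraints of $\F_{NN}$ and satisfy $\T_h(\hat Q_{h+1})(s,a) = \langle v_h^\star, \sigma(W_h^\star \phi(s,a))\rangle$. The sample $\hat Q_h^i = r_h(s_h^i,a_h^i) + \hat V_{h+1}(s_{h+1}^i)$ is, conditional on $(s_h^i,a_h^i)$, an unbiased estimate of $\T_h(\hat Q_{h+1})(s_h^i,a_h^i)$ with noise bounded by $H$, hence sub-Gaussian with variance proxy $O(H^2)$. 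Feeding the pairs $\{(x_h^i, \hat Q_h^i)\}$ into \textsc{NeuralNetNoisyRecovery} and invoking the guarantee from Appendix~\ref{sec:nn_recovery_guarantee} (which follows the tensor-method initialization plus local-strong-convexity descent of \cite{zsj+17}) produces recovered parameters $(v_h, W_h)$ whose induced function $\hat Q_h$ satisfies $\|\hat Q_h - \T_h(\hat Q_{h+1})\|_\infty \leq \epsilon_0$ on the image of $\phi$ with failure probability at most $t/H$, provided $n \gtrsim \epsilon_0^{-2} d \cdot \poly(\kappa, k, B_W, B_\phi, H, \log(d/t))$. The Gaussian truncation at $\|x_h^i\| \leq \delta_\phi$ discards a vanishing fraction of samples (since $\delta_\phi \geq d\cdot\polylog d$), and the jump from parameter-level $\ell_2$ accuracy to uniform $\ell_\infty$ control is routine given the Lipschitz constants $B_W$, $B_\phi$ inherited from $\F_{NN}$.

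To close the induction I will use that the Bellman operator is non-expansive in $\ell_\infty$: since $|\max_a f - \max_a g| \leq \|f - g\|_\infty$, we get $\|\T_h(\hat Q_{h+1}) - \T_h(Q^*_{h+1})\|_\infty \leq \|\hat Q_{h+1} - Q^*_{h+1}\|_\infty$. Combining with $Q^*_h = \T_h(Q^*_{h+1})$ and the triangle inequality,
\[
\|\hat Q_h - Q^*_h\|_\infty \leq \|\hat Q_h - \T_h(\hat Q_{h+1})\|_\infty + \|\T_h(\hat Q_{h+1}) - \T_h(Q^*_{h+1})\|_\infty \leq \epsilon_0 + (H-h)\epsilon_0,
\]
which is exactly the invariant at level $h$. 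A union bound across the $H$ levels gives total failure probability $\leq t$, and a standard performance-difference argument converts the layerwise $\ell_\infty$ error into $V^*_1(s_1) - V^\pi_1(s_1) \leq \epsilon$.

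The main obstacle is not in the RL bookkeeping above, which mirrors the policy-complete argument of Theorem~\ref{thm:policy_nn}, but in the noisy recovery engine: verifying that the regression target $\T_h(\hat Q_{h+1})$ is itself a member of $\F_{NN}$ whose parameters obey the norm and conditioning bounds ($B_W$, $\kappa$) uniformly across $h$, so that a single invocation of \textsc{NeuralNetNoisyRecovery} with the stated sample complexity applies at every level. This uniformity is exactly what Bellman completeness of $\F_{NN}$ buys us, and this is the only place where the distinction between the realizable, policy-complete, and Bellman-complete regimes enters the proof.
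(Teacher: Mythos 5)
Your proposal is correct and follows essentially the same route as the paper's proof in Appendix~\ref{sec:proof_bellman_nn}: a downward induction establishing $\|\hat Q_h - Q^*_h\|_\infty \lesssim (H-h+1)\epsilon/H$, driven by (i) Bellman completeness putting $\T_h(\hat Q_{h+1})$ in $\F_{NN}$ so that \textsc{NeuralNetNoisyRecovery} applies with uniform parameter bounds, (ii) the fact that $r_h + \hat V_{h+1}$ is an unbiased, $O(H)$-bounded estimate of $\T_h(\hat Q_{h+1})$, (iii) $\ell_\infty$ non-expansiveness of $\T_h$, and (iv) a greedy-policy performance-difference bound. The only cosmetic differences are the choice of base case ($h=H+1$ vacuous versus the paper starting at $h=H$) and carrying explicit per-level tolerances $\epsilon_0 = \Theta(\epsilon/H^2)$ rather than rescaling $\epsilon$ at the end, but both bookkeepings yield the same sample complexity.
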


Due to Bellman completeness, the error of estimation $\hat Q_h$ can be controlled recursively. In fact, we can show $\|\hat Q_h - Q^\ast(s,a)\|_\infty$ is small by induction. 
The formal statement and detailed proof are deferred to Appendix~\ref{sec:proof_bellman_nn}. Similar to Theorem \ref{thm:policy_nn}, the sample complexity does not explicitly depend on $\lambda$, thus avoiding potentially exponential dependence in $k$.

\subsection{Realizable $Q^\ast$ with optimality gap}\label{sec:gap_Q_star}

In this section we consider MDPs where there is a non-zero gap between the optimal policy and any other ones. This concept, known as optimality gap, is widely used in reinforcement learning and bandit literature \cite{du2019provably,du2019good,du2020agnostic}.

\begin{definition}\label{def:opt_gap}
The optimality gap is defined as 
\begin{align*}
    \rho = \inf_{a: Q^\ast (s,a) \neq V^\ast(s)} V^\ast(s) - Q^\ast(s,a).
\end{align*}
\end{definition}

We show that in the presence positive optimality gap, there exists an algorithm that can learn the optimal policy with polynomial samples even without the completeness assumptions. Intuitively, this is because one only needs to recover the neural network up to precision $\rho/4$ in order to make sure the greedy policy is identical to the optimal one. The formal statement and proof are deferred to Appendix~\ref{sec:proof_gap_nn}.

\begin{theorem}\label{thm:gap_Q_star}
(Informal) Fix $t \in (0,1)$, if $n = \frac{d}{\rho^2} \cdot \poly(\kappa,k, B_W, B_{\phi}, H, \log(d/t))$, then with probability at least $1-t$ there exists an algorithm that returns the optimal policy $\pi^\ast$.
\end{theorem}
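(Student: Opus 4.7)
The plan is to adapt the backward-induction template of Algorithms~\ref{alg:det_nn} and \ref{alg:simulator_nn_policy_complete} by exploiting the gap condition, which effectively replaces the sub-optimality-in-value bookkeeping with an exact match of greedy policies. The key observation is that if at level $h$ we can produce $\hat Q_h$ with $\|\hat Q_h - Q^\ast_h\|_\infty \le \rho/4$, then for any state $s$ and any suboptimal action $a$ we have $\hat Q_h(s,a) \le V^\ast_h(s)-\rho+\rho/4$, while for any optimal $a^\ast$ we have $\hat Q_h(s,a^\ast) \ge V^\ast_h(s)-\rho/4$. Hence $\argmax_a \hat Q_h(s,a) \in \{a : Q^\ast_h(s,a)=V^\ast_h(s)\}$ coincides with an optimal action at every state, i.e.\ $\pi_h = \pi^\ast_h$ exactly.

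Building on this, I would proceed by downward induction from $h=H$ to $h=1$, maintaining the invariant that $\pi_{h+1},\dots,\pi_H$ are exactly optimal. The algorithm mirrors Algorithm~\ref{alg:det_nn}: sample $x^i_h \sim \mathcal{N}(0,I_d)$, keep those with $\|x^i_h\|\le\delta_\phi$, locate $(s^i_h,a^i_h) \in \phi^{-1}(x^i_h)$ in the simulator, execute $a^i_h$ and then roll out with the already-learned $\pi_{h+1},\dots,\pi_H$ to obtain the cumulative reward $\hat Q_h^i = r_h^{(i)}+\cdots+r_H^{(i)}$. The crucial point that replaces the policy-completeness argument of Section~\ref{sec:policy_nn} is that, under the induction hypothesis $\pi_{h+1:H}=\pi^\ast_{h+1:H}$, the random variable $\hat Q_h^i$ is an \emph{unbiased estimator of $Q^\ast_h(s^i_h,a^i_h)$}, because $Q^\ast_h(s,a)=\E[\sum_{t=h}^H r_t \mid s_h=s,a_h=a,\pi^\ast_{h+1:H}]$. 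Realizability ($Q^\ast_h \in \F_{NN}$) then makes the regression target exactly a two-layer network, so we can feed the samples into \textsc{NeuralNetNoisyRecovery} (Appendix~\ref{sec:nn_recovery_guarantee}) and recover $(v_h,W_h)$.

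Since the Gaussian design and the $\{ \|x\|\le\delta_\phi\}$ truncation are the same as in Theorem~\ref{thm:policy_nn}, the recovery guarantee yields an $L^\infty$ error bound of the form $\|\hat Q_h - Q^\ast_h\|_\infty \le C\sqrt{d/n}\cdot \poly(\kappa,k,B_W,B_\phi,H,\log(d/t))$ with probability $1-t/H$. Setting this smaller than $\rho/4$ forces
\[
n \;\gtrsim\; \frac{d}{\rho^2}\cdot \poly\bigl(\kappa,k,B_W,B_\phi,H,\log(d/t)\bigr),
\]
which matches the claimed sample complexity. A union bound over the $H$ levels then guarantees that $\pi_h=\pi^\ast_h$ for all $h$ with probability at least $1-t$, closing the induction and yielding the optimal policy rather than just an $\epsilon$-optimal one.

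The main technical obstacle is the one shared with Sections~\ref{sec:policy_nn}--\ref{sec:bellman_nn}: obtaining an $L^\infty$ (rather than $L^2$) recovery bound for the two-layer network so that the gap-based action-selection argument applies pointwise. This is why $\lambda$ does not appear in the bound (the ``method-of-moments plus local strong convexity'' route of Theorem~\ref{thm:det_nn} is replaced by the noise-tolerant variant), but one must still carefully translate the parameter-level error on $(v_h,W_h)$ into a uniform function-level error on $\phi(\mathcal{S}\times\mathcal{A})$, using boundedness of features ($\|\phi\|_2\le B_\phi$) and Lipschitzness of ReLU. Once that $L^\infty$ guarantee is in hand, the rest of the argument is a clean downward induction driven entirely by the gap condition, with no error amplification in $H$ because each step either recovers $\pi^\ast_h$ exactly or fails outright.
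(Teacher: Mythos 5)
Your proposal matches the paper's proof of Theorem~\ref{thm:gap_Q_star} (Appendix~\ref{sec:proof_gap_nn}) in all essential respects: the paper also adapts Algorithm~\ref{alg:simulator_nn_policy_complete} into Algorithm~\ref{alg:gap_nn}, proceeds by downward induction maintaining that $\pi_{h+1},\dots,\pi_H$ are exactly optimal, invokes Theorem~\ref{thm:mom_recover_nn} (the noisy recovery guarantee) to obtain $\|\hat Q_h - Q^\ast_h\|_\infty \le \rho/4$, and then uses the gap condition to conclude the greedy policy at level $h$ is exactly optimal. Your variant of the final step --- comparing $\hat Q_h$ at optimal versus suboptimal actions --- is a minor cosmetic rearrangement of the paper's decomposition of $V^\ast_h(s) - Q^\ast_h(s,\pi_h(s)) \le \rho/2$ and does not change the argument.
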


\begin{remark}
In all aforementioned methods, there are two key components that allow efficient learning. First, the exploration is conducted in a way that guarantees an $\ell_\infty$ recovery of candidate functions. By $\ell_\infty$ recovery we mean the algorithm recovers a candidate Q-function in this class deviating from the target function $Q^\ast$ by at most $\epsilon$ uniformly for all state-action pairs in the domain of interest. This notion of learning guarantee has received study in active learning \cite{hanneke2014active,krishnamurthy2017active} and recently gain interest in contextual bandits \cite{dylan2018practical}. Second, the agent constructs unbiased estimators of certain approximations to $Q^\ast$ that lie in the neural function approximation class. This allows the recovery error to decouple linearly across time steps, which is made possible in several well-posed MDP instances, such as deterministic MDPs, MDPs with completeness assumptions, and MDPs with gap conditions. In principle, we note that provably efficient RL algorithms with general function approximation is possible as long as the above two components are  present. We will see in the next section another example of learning RL with highly non-convex function approximations, where the function class of interest, admissible polynomial families, also allows for exploration schemes to achieve $\ell_\infty$ recovery.
\end{remark}

\section{Polynomial Realizability}
\label{sec:det:mdp}

In this section, we study the sample complexity to learn deterministic MDPs under polynomial realizability. We identify sufficient and necessary conditions for efficiently learning the MDPs for two different settings --- the generative model setting and the online RL setting. Specifically, we show that if the image of the feature map $\phi_h(s_h,a_h)$ satisfies some positive measure conditions, then by random exploring, we can identify the optimal policy with samples linear in the algebraic dimension of the underlying polynomial class. We also provide a lower bound example showing the separation between the two settings.

Next, we introduce the notion of \textbf{Admissible Polynomial Families}, which are the families of structured polynomials that enable efficient learning.

\begin{definition}[Admissible Polynomial Families]
	For $x \in \mathbb{R}^d$, denote $\widetilde{x} = [1, x^\top]^\top $. Let $\mathcal{X}: = \left\{ \widetilde{x} ^{\otimes p}: x \in \mathbb{R}^d\right\}$. For any algebraic variety $\gV$, we define $\F_{\gV}:=\{f_\Theta(x)= \left\langle \Theta, \widetilde{x} ^{\otimes p} \right \rangle: \Theta \in \mathcal{V}  \}$ as the polynomial family parameterized by $\Theta \in \gV$. We say $\F_{\gV}$ is admissible\footnote{Admissible means the dimension of $\mathcal{X}$ decreases by one when there is an additional linear constraint $\langle \Theta, X\rangle =0$} w.r.t. $\mathcal{X}$, if for any $\Theta \in \gV$, $\text{dim}(\mathcal{X} \cap \{X \in \mathcal{X} : \langle X,  \Theta\rangle =0\rangle\}  )< \text{dim}(\mathcal{X})=d $. 
	%
	We define the dimension $D$ of the family to be the dimension of $\gV$.
\end{definition}

The following theorem shows that to learn an admissible polynomial family, the sample complexity only scales with the algebraic dimension of the polynomial family.

\begin{theorem}[\cite{huang2021optimal}]
	\label{thm:ag}
	Consider the polynomial family $\F_{\gV}$ of dimension $D$.  For $n \ge 2D$, there exists a Lebesgue-measure zero set $N \in \mathbb{R}^d \times \ldots \mathbb{R}^d$, such that if $(x_1,\cdots, x_n) \notin N$, for any $y_i$, there is a unique $f$ (or no such $f$) to the system of equations $y_i =f(x_i)$ for $f \in \F_{\gV}$ .
\end{theorem}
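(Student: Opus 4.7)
The plan is to reduce uniqueness of $f \in \F_\gV$ satisfying $f(x_i) = y_i$ to a vanishing condition on a difference variety, then bound the ``bad'' set of configurations via a standard algebraic-geometric fiber/dimension argument. First I would introduce the image of the parameterization, $\mathcal{P} := \{f_\Theta : \Theta \in \gV\}$. Since $\Theta \mapsto f_\Theta = \langle \Theta, \widetilde{x}^{\otimes p}\rangle$ is a linear morphism on the algebraic variety $\gV$ of dimension $D$, the image $\mathcal{P}$ is a constructible set with $\dim \mathcal{P} \leq D$. Form the difference set $\mathcal{W} := \mathcal{P} - \mathcal{P}$, the image of $\mathcal{P} \times \mathcal{P}$ under subtraction, which is constructible of dimension at most $2D$. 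Two solutions $f_{\Theta_1}, f_{\Theta_2} \in \F_\gV$ to the interpolation system agree on all $x_i$ if and only if $g := f_{\Theta_1} - f_{\Theta_2}$ is an element of $\mathcal{W}$ with $g(x_i) = 0$ for every $i$. Hence the bad set is exactly
\[
N = \{(x_1,\ldots,x_n) \in (\R^d)^n : \exists\, g \in \mathcal{W}\setminus\{0\} \text{ with } g(x_i) = 0 \text{ for all } i\},
\]
and off $N$ the system has at most one solution.

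Next I would bound $\dim N$ by the incidence variety method. Define
\[
I = \{(g, x_1, \ldots, x_n) \in (\mathcal{W} \setminus \{0\}) \times (\R^d)^n : g(x_i) = 0 \ \forall i\},
\]
and slice over $g$. For any nonzero polynomial $g$ in $d$ real variables, the zero set $Z(g) \subset \R^d$ is a proper algebraic subvariety, so $\dim Z(g) \leq d-1$; the fiber of $I$ over $g$ is $Z(g)^n$ of dimension at most $n(d-1)$. Consequently,
\[
\dim I \ \leq\ \dim(\mathcal{W}\setminus\{0\}) + n(d-1)\ \leq\ 2D + n(d-1).
\]
Projecting $I$ onto the $x$-coordinates contains $N$, so $\dim N \leq 2D + n(d-1)$. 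Since $Z(g) = Z(cg)$ for any $c \neq 0$, the same set $N$ is cut out by the projectivized parameter space of $g$, saving one dimension and yielding the sharp bound $\dim N \leq (2D - 1) + n(d-1)$. Under the hypothesis $n \geq 2D$ this gives $\dim N < nd$, so $N$ is a proper constructible subset of $\R^{nd}$ and hence has Lebesgue measure zero.

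The main obstacle is getting the tight constant $n \geq 2D$ rather than $n \geq 2D+1$: the naive fiber count $\dim I \leq 2D + n(d-1)$ only yields measure zero for strictly $n > 2D$, so one must carefully quotient out the $1$-dimensional scaling freedom by working projectively in $g$, noting that the zero-set functor factors through $[g] \in \mathbb{P}(\mathcal{W})$. A secondary subtlety is that $\mathcal{W}$ is merely constructible (not a closed subvariety) because it is the image of a morphism, so the dimension bound must be invoked in the setting of constructible sets and their Zariski closures; admissibility of $\F_\gV$ is used implicitly here, since it guarantees that the evaluation map $\Theta \mapsto (x \mapsto \langle \Theta, \widetilde{x}^{\otimes p}\rangle)$ does not collapse nonzero elements of $\gV$ to the zero polynomial, ensuring that dimension counts in parameter space transfer faithfully to dimension counts of polynomial functions, and that a nonzero $g \in \mathcal{W}$ really does have a proper zero set.
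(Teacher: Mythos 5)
The paper does not actually prove this theorem; it cites it to \cite{huang2021optimal} (see the remark at the start of Appendix B: ``For the proofs of Theorem~\ref{thm:ag}, \ldots, we refer the readers to \cite{huang2021optimal}''). So your proof cannot be compared against a proof appearing in this paper; what follows is a review of your argument on its own terms.

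Your overall strategy --- form the difference set $\mathcal{W} = \mathcal{P} - \mathcal{P}$, build an incidence variety $I$ by slicing over $g \in \mathcal{W}\setminus\{0\}$ with fibers $Z(g)^n$ of dimension at most $n(d-1)$, and project to get $\dim N \leq \dim\mathcal{W} + n(d-1)$ --- is the right kind of algebraic-geometric dimension count, and it correctly yields $\dim N \leq 2D + n(d-1)$. However, this only implies $\dim N < nd$ when $n > 2D$, i.e.\ $n \ge 2D+1$. The step you use to sharpen this to $n \ge 2D$ is the projectivization of the $g$-parameter, and this is where the argument breaks: replacing $g$ by $[g] \in \mathbb{P}(\mathcal{W})$ only saves a dimension if the fibers of $\mathcal{W}\setminus\{0\} \to \mathbb{P}(\mathcal{W})$ are generically one-dimensional, which requires $\mathcal{W}$ to be a cone (closed under scalar multiplication). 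But $\mathcal{W} = \mathcal{P} - \mathcal{P}$ with $\mathcal{P}$ the image of $\gV$ under a linear map is not a cone unless $\gV$ itself is (or has some comparable scaling structure), and the theorem as stated makes no such assumption on $\gV$. Without a cone structure the map to projective space is generically finite-to-one and $\dim \mathbb{P}(\mathcal{W}) = \dim \mathcal{W}$, so nothing is saved; your final inequality $(2D-1)+n(d-1)<nd$ is not established and the threshold you actually reach is $n \ge 2D+1$.

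It is worth noting that the paper is internally inconsistent on this exact constant: the statement you are proving asserts $n \ge 2D$, while the supporting Lemma~\ref{lem:ag:sample} in Appendix~B assumes $n > 2D$. Your argument as written proves the Lemma~\ref{lem:ag:sample} version cleanly, and that weaker threshold is all the algorithmic consequences in Section~\ref{sec:det:mdp} really need. A secondary, smaller point: your claim that admissibility of $\F_{\gV}$ is what guarantees a nonzero $g \in \mathcal{W}$ has a proper zero set is not quite right. You already defined $\mathcal{W}$ as a set of \emph{functions}, so a nonzero element of $\mathcal{W}$ is a nonzero polynomial and $Z(g) \subsetneq \R^d$ automatically has dimension at most $d-1$; no appeal to admissibility is needed there. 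Admissibility (which is stated only for $\Theta \in \gV$, not for differences) serves a different purpose in the paper's framework and does not plug the projectivization gap.
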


We give two important examples of admissible polynomial families with low dimension.
\begin{example} (Low-rank Polynomial of rank $k$)
	The function  $f\in \mathcal{F}_{\gV}$ is a polynomial with $k$ terms, that is 
	\begin{equation*}
	F(x) =  \sum_{i = 1}^k \lambda_i \langle v_i, x \rangle^{p_i}, \label{eq:poly:1}
	\end{equation*}
	where $p  = \max\{p_i\}$. The dimension of this family is upper bounded by $D \le dk$. Neural network with monomial/polynomial activation functions are low-rank polynomials.
	\label{ex:rank-k}
\end{example}
\begin{example}
	\label{example:qUx}
	The function $ f \in \mathcal{F}_\gV$ is of the form $f(x) = q(Ux)$, where $U \in \mathbb{R}^{k \times d}$ and $q$ is a degree $p$ polynomial. The polynomial $q$ and matrix $U$ are unknown. The dimension of this family is upper bounded by $ D \le  d (k+1)^p$. 
\end{example}

Next, we introduce the notion of positive measure.
\begin{definition}
	We say a measurable set $E \in \mathbb{R}^d$ is of positive measure if $\mu(E) > 0$, where $\mu$ is the standard Lebesgue measure on $\mathbb{R}^d$.
\end{definition}

If a measurable set $E$ satisfies $\mu(E)>0$, then there exists a procedure to draw samples from $E$, such that for any $N \subset \mathbb{R}^d$ of Lebesgue-measure zero, the probability that the sample falls in $N$ is zero. In fact, the sampling probability can be given by $\PP_{x \in \mathcal{N}(0,I_d)} (\cdot| x \in E)$. The intuition behind its definition is that for all admissible polynomial families, the set of $(x_1,\cdots,x_n)$ with "redundant information" about learning the parameter $\Theta$ is of Lebesgue-measure zero. Therefore, a positive measure set allows you to query randomly and avoids getting coherent measurements.

Next two theorems identify the sufficent conditions for efficiently learning deterministic MDPs under polynomial realizability. Specifically, under online RL setting, we require the strong assumption that the set $\{\phi_h(s, a) | a \in \gA\}$ is of positive measure for all $h \in [H]$ and all $s \in \gS$, while under generative model setting, we only require the union set $\bigcup_{s \in \gS} \{\phi_h(s, a) | a \in \gA\}$ to be of positive measure for all $h \in [H]$. The algorithms for solving the both cases are summarized in Algorithms~\ref{alg:querypoly} and~\ref{alg:onlinerlpoly}.

\begin{assumption}[Polynomial Realizability]
	\label{assump:pr}
	For all $ h \in [H]$,  $Q^\ast_h(s_h, a_h)$, viewed as the function of $\phi_h(s_h, a_h)$, lies in some admissible polynomial family $\F_{\gV_h}$ with dimension bounded by $D$.
\end{assumption}

\begin{theorem} For the generative model setting, assume that the set $\{\phi_h(s,a) \mid s\in \gS, a \in \gA \}$ is of positive measure at any level $h$. Under the polynomial realizability, Algorithm \ref{alg:querypoly} almost surely learns the optimal policy $\pi^\ast$ with at most  $N = 2 D H$ samples.
	\label{thm:gen-rl}
\end{theorem}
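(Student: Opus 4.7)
The plan is a backward induction on the level $h$ from $H$ down to $1$: at each level I will assume that $\pi_{h+1},\ldots,\pi_H$ already equal the corresponding optimal policies, draw $2D$ fresh queries to the generative model whose features almost surely avoid the degenerate set of Theorem~\ref{thm:ag}, exactly recover $Q_h^\ast \in \F_{\gV_h}$ from these $2D$ samples, and then set $\pi_h$ to be its greedy action. For the base case $h=H$ we use $Q_H^\ast(s,a) = r_H(s,a)$ directly; for $h < H$ we use the deterministic transition to play $a_i$ and then roll out $\pi_{h+1},\ldots,\pi_H$, which produces $Q_h^\ast(s_i,a_i)$ \emph{exactly}, not merely in expectation. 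Summing the per-level contributions yields the $2DH$ count.

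The mechanism for sampling at each level is the key ingredient. Because $\{\phi_h(s,a): s\in\gS, a\in\gA\}$ has positive Lebesgue measure, I can draw $2D$ features i.i.d.\ from a distribution that is absolutely continuous with respect to Lebesgue measure on this image (for example, $\mathcal{N}(0,I_d)$ conditioned on lying in the image). Theorem~\ref{thm:ag} supplies a Lebesgue-null bad set $N_h \subset (\mathbb{R}^d)^{2D}$ outside of which any system $y_i = f(x_i)$ has at most one solution $f \in \F_{\gV_h}$; absolute continuity of the sampling law ensures that the queries land outside $N_h$ with probability one. Locating a preimage $(s_i,a_i) \in \phi_h^{-1}(x_i)$ and feeding the exact rollout values into the algebraic system therefore pins down $Q_h^\ast$ uniquely, so $\pi_h(s) := \argmax_{a} Q_h^\ast(s,a)$ agrees with $\pi^\ast_h(s)$.

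The main obstacle is the measure-theoretic glue: I must rule out degenerate configurations at \emph{all} $H$ levels simultaneously. Each $N_h$ is Lebesgue-null in $(\mathbb{R}^d)^{2D}$ and the sampling law at level $h$ is absolutely continuous on the image of $\phi_h$, hence assigns probability zero to $N_h$; a union bound over the $H$ levels preserves almost-sure identifiability. This is precisely where the positive-measure hypothesis is used: without it, no absolutely continuous sampling distribution supported on the image would exist and Theorem~\ref{thm:ag} could not be invoked to bypass $N_h$. The determinism of the transition plays the parallel role of eliminating noise in the rollout targets, so the recovery of $Q_h^\ast$ is exact rather than approximate; together these two structural assumptions let error-free recovery cascade from $h=H$ back to $h=1$ and produce the claimed optimal policy with $2DH$ samples almost surely.
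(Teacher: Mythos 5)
Your measure-theoretic core — drawing $2D$ features per level from $\mathcal{N}(0,I_d)$ conditioned on the image of $\phi_h$, invoking Theorem~\ref{thm:ag} to produce a Lebesgue-null set $N_h \subset (\mathbb{R}^d)^{2D}$ outside of which the polynomial system has a unique solution, observing that the conditioned Gaussian is absolutely continuous and hence assigns probability zero to $N_h$, and taking a finite union over the $H$ levels — is exactly the content of the paper's Lemma~\ref{lem:ag:sample}, and your backward induction makes explicit what the paper leaves at the level of ``it is not hard to see that the algorithm works.'' So the uniqueness argument is correct and matches the paper.

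Where you deviate is in the regression target, and this matters for the claimed sample count. You propose to play $a_i$ at level $h$ and then \emph{roll out} $\pi_{h+1},\ldots,\pi_H$ to the horizon, forming $Q^\ast_h(s_i,a_i)$ as a realized sum of rewards, with the induction hypothesis that the learned $\pi_{h+1},\ldots,\pi_H$ are optimal. Algorithm~\ref{alg:querypoly} instead issues a single one-step query returning $(r_h^{(i)}, \tilde s_h^{(i)})$ and forms the target as $r_h^{(i)} + V^\ast_{h+1}(\tilde s_h^{(i)})$, where $V^\ast_{h+1}(s) = \max_a Q^\ast_{h+1}(s,a)$ is computed from the polynomial $Q^\ast_{h+1}$ recovered at the previous iteration. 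The correct induction hypothesis is therefore ``$Q^\ast_{h+1}$ is exactly recovered as a polynomial,'' which is what the algorithm actually maintains and which in turn yields the optimal greedy policy. Both mechanisms produce exact targets under deterministic transitions, so the algebraic-geometric uniqueness step is unaffected; but only the one-step Bellman backup hits the stated budget of $N = 2DH$ calls to the generative model. Your rollout at level $h$ requires $H-h+1$ interactions per query, giving $\sum_{h=1}^H 2D(H-h+1) = D\,H(H+1)$ calls rather than $2DH$. Since the theorem statement is specifically about Algorithm~\ref{alg:querypoly} and its exact sample count, you should replace the rollout with the one-step Bellman backup and strengthen the inductive invariant to exact recovery of $Q^\ast_{h+1}$.
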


\begin{theorem}
	\label{thm:online-rl}
	For the online RL setting, assume that $\{\phi_h(s, a) \mid a\in \gA\}$ is of positive measure for every state $s$ at every level $h$. Under polynomial realizability, within $T = 2 DH$ episodes, 
	Algorithm \ref{alg:onlinerlpoly} learns the optimal policy $\pi^\ast$ almost surely.
	\label{thm4} 
\end{theorem}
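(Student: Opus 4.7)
I would proceed by backward induction on the level $h$, going from $H$ down to $1$, spending exactly $2D$ episodes per level for a total of $T = 2DH$ episodes. Inductively, assume at the start of level $h$ that the optimal policies $\pi^\ast_{h+1}, \ldots, \pi^\ast_H$ (equivalently $Q^\ast_{h+1}, \ldots, Q^\ast_H$) have already been identified. In each of the $2D$ episodes allocated to level $h$, fix any sequence of actions for levels $1, \ldots, h-1$; because the dynamics are deterministic, every such episode lands at the \emph{same} state $s_h$. At $s_h$ draw an action $a_h^i \in \mathcal{A}$ using the sampling procedure enabled by the positive-measure assumption (e.g.\ the conditional Gaussian described after the definition of positive measure), then follow $\pi^\ast_{h+1}, \ldots, \pi^\ast_H$ through the remainder of the episode and record the accumulated reward. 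Since both rewards and transitions are deterministic and the rollout policy is already optimal, this accumulated reward equals $Q^\ast_h(s_h, a_h^i)$ \emph{exactly}, with no stochastic noise.

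This yields $2D$ noiseless labelled samples $\bigl(\phi_h(s_h, a_h^i),\, Q^\ast_h(s_h, a_h^i)\bigr)_{i=1}^{2D}$. By the polynomial realizability assumption, $Q^\ast_h$ (viewed as a function of $\phi_h$) belongs to an admissible polynomial family $\mathcal{F}_{\gV_h}$ of dimension at most $D$. Invoking Theorem~\ref{thm:ag} with $n = 2D$: provided the feature tuple $\bigl(\phi_h(s_h, a_h^1), \ldots, \phi_h(s_h, a_h^{2D})\bigr)$ avoids the Lebesgue-measure-zero exceptional set $N \subset (\mathbb{R}^d)^{2D}$ from that theorem, there is a \emph{unique} $f \in \mathcal{F}_{\gV_h}$ interpolating the $2D$ values, which must therefore coincide with $Q^\ast_h$ on all of $\mathbb{R}^d$. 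Extracting $\pi^\ast_h(s) = \arg\max_{a} Q^\ast_h(s, a)$ closes the inductive step.

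\paragraph{Main obstacle.}
The crux is verifying that the sampled feature tuple is almost surely generic. This is precisely where the online-RL hypothesis is strictly stronger than that of Theorem~\ref{thm:gen-rl}: since we cannot teleport, all $2D$ feature samples at level $h$ must come from the single slice $\{\phi_h(s_h, a) : a \in \mathcal{A}\}$ rather than from the richer union $\bigcup_{s}\{\phi_h(s,a) : a \in \mathcal{A}\}$ available in the generative setting. The per-state positive-measure assumption guarantees that the push-forward of the action distribution onto $\mathbb{R}^d$ is absolutely continuous with respect to Lebesgue measure on a positive-measure set; by independence of the $2D$ draws, the joint law on $(\mathbb{R}^d)^{2D}$ is absolutely continuous on a positive-measure subset of $(\mathbb{R}^d)^{2D}$ and therefore assigns probability zero to the measure-zero exceptional set $N$. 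Hence with probability one the unique-interpolation conclusion of Theorem~\ref{thm:ag} applies at every level, the recovery succeeds throughout the backward induction, and Algorithm~\ref{alg:onlinerlpoly} returns $\pi^\ast$ within $T = 2DH$ episodes almost surely.
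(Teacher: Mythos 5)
Your proposal is correct and mirrors the paper's argument, which consists essentially of Lemma~\ref{lem:ag:sample} (showing that i.i.d.\ sampling from a positive-measure set avoids the Lebesgue-null exceptional set of Theorem~\ref{thm:ag} almost surely) combined with the backward induction that the paper leaves as ``it is not hard to see.'' Your explicit spelling-out of the induction, the role of determinism in reaching a fixed $s_h$ and obtaining noiseless $Q^\ast_h$ labels, and the observation that the per-state positive-measure hypothesis is exactly what makes the single-slice sampling at $s_h$ generic, is faithful to the intended proof; the only cosmetic deviation is that Algorithm~\ref{alg:onlinerlpoly} forms $r_h^{(i)} + V^\ast_{h+1}(\tilde s_h^{(i)})$ from the already-recovered value function rather than literally rolling out $\pi^\ast_{h+1},\dots,\pi^\ast_H$, but under deterministic dynamics these yield identical labels.
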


\begin{algorithm*}[h]\caption{Dynamic programming under generative model settings}\label{alg:querypoly}
	\begin{algorithmic}[1]
		\For{$h=H, \cdots, 1$}
		\State Sample $2D$ points $\{ \phi_h(s_h^{(i)}, a_h^{(i)})\}_{i=1}^{2D}$ according to $\PP_{x \in \mathcal{N}(0,I_d)} (\cdot| x \in E_h)$ where $E_h = \{\phi_h(s,a) \mid s\in \gS, a \in \gA \}$.
		\State Query the generative model with state-action pair $(s_h^{(i)}, a_h^{(i)})$ at level $h$ for $i=1,\dots, 2D$, and observe the next state $\tilde{s}_h^{(i)}$ and reward $r_h^{(i)}$.
		\State Solve for $Q^*_h$ with the $2D$ equations $Q^*_h (s_h^{(i)}, a_h^{(i)}) = r_h^{(i)} + V^*_{h+1} (\tilde{s}_h^{(i)})$.
		\State Set $\pi^*_h(s) = \argmax_{a}  Q^*_h (s,a)$ and $V_h^*(s) = \max_{a}  Q^*_h (s,a)$.
		\EndFor
		\State \textbf{Output} $\pi^*$
	\end{algorithmic}
\end{algorithm*}

\begin{algorithm*}[h]\caption{Dynamic programming under online RL settings}\label{alg:onlinerlpoly}
	\begin{algorithmic}[1]
		\For{$h=H, \cdots, 1$}
		\State Fix any action sequence $a_1,\cdots,a_{h-1}$.
		\State Play $a_1,\cdots,a_{h-1}$ for the first $h-1$ levels and reach a state $s_{h}$. Sample $2D$ points $\{ \phi_h(s_h, a_h^{(i)})\}_{i=1}^{2D}$ according to $\PP_{x \in \mathcal{N}(0,I_d)} (\cdot| x \in E_h)$ where $E_h = \{\phi_h(s_h,a) \mid  a \in \gA \}$.
		
		\State Play $a_h^{(i)}$ at $s_h$ for $i=1,\dots, 2D$, and observe the next state $\tilde{s}_h^{(i)}$ and reward $r_h^{(i)}$.
		\State Solve for $Q^*_h$ with the $2D$ equations $Q^*_h (s_h^{(i)}, a_h^{(i)}) = r_h^{(i)} + V^*_{h+1} (\tilde{s}_h^{(i)})$.
		\State Set $\pi^*_h(s) = \argmax_{a}  Q^*_h (s,a)$ and $V_h^*(s) = \max_{a}  Q^*_h (s,a)$.
		\EndFor
		\State \textbf{Output} $\pi^*$
	\end{algorithmic}
\end{algorithm*}

We remark that our Theorem~\ref{thm4} for learning MDPs under the online RL setting relies on a very strong assumption that allows the learner to explore randomly for any state. However, this assumption is necessary in some sense, as is suggested by our lower bound example in the next subsection.


\def\cA{\mathcal{A}}
\def\cC{\mathcal{C}}
\def\cF{\mathcal{F}}
\def\cM{\mathcal{M}}

\newcommand{\indict}{\mathbb{I}}


\subsection{Necessity of Generic Feature Maps in Online RL}
\label{sec:det_lb}

In this section, we consider lower bounds for learning deterministic MDPs with polynomial realizable $Q^*$ under online RL setting.
Our goal is to show that in the online setting the generic assumption on the feature maps $\phi_h(s, \cdot)$ is necessary. 
On the contrary, under the generative model setting one can efficiently learn the MDPs without such a strong assumption, since at every level $h$ the we can set the state arbitrarily.

\paragraph{MDP construction} We briefly introduce the intuition of our construction. Consider a family of MDPs with only two states $\gS = \{S_{\text{good}}, S_{\text{bad}}\}$.
we set the feature map $\phi_h$ such that, for the good state $S_{\text{good}}$,  it allows the learner to explore randomly, i.e., $\{\phi_h(S_{\text{good}}, a) \mid a \in \gA \}$ is of postive measure.

However, for the bad state $S_{\text{bad}}$, all actions are mapped to some restricted set, which forbids random exploration, i.e., $\{\phi_h(S_{\text{bad}}, a) \mid a \in \gA\}$ is measure zero. This is illustrated in Figure~\ref{fig:generic}.

Specifically, at least $\Omega(d^p)$ actions are needed to identify the groud-truth polynomial of $Q_h^*$ for $S_{\text{bad}}$, while $O(d)$ actions suffice for $S_{\text{good}}$.

The transition $\PP_h$ is constructed as $ \PP_h(s_{\text{bad}}   | s, a) = 1 \text{ for all } s\in \gS, a \in \gA$, which means it is impossible for the online scenarios to reach the good state for $h>1$.

\begin{figure}[htbp]
\vspace{-0.15in}
\begin{center}
    \includegraphics[width=0.7\textwidth]{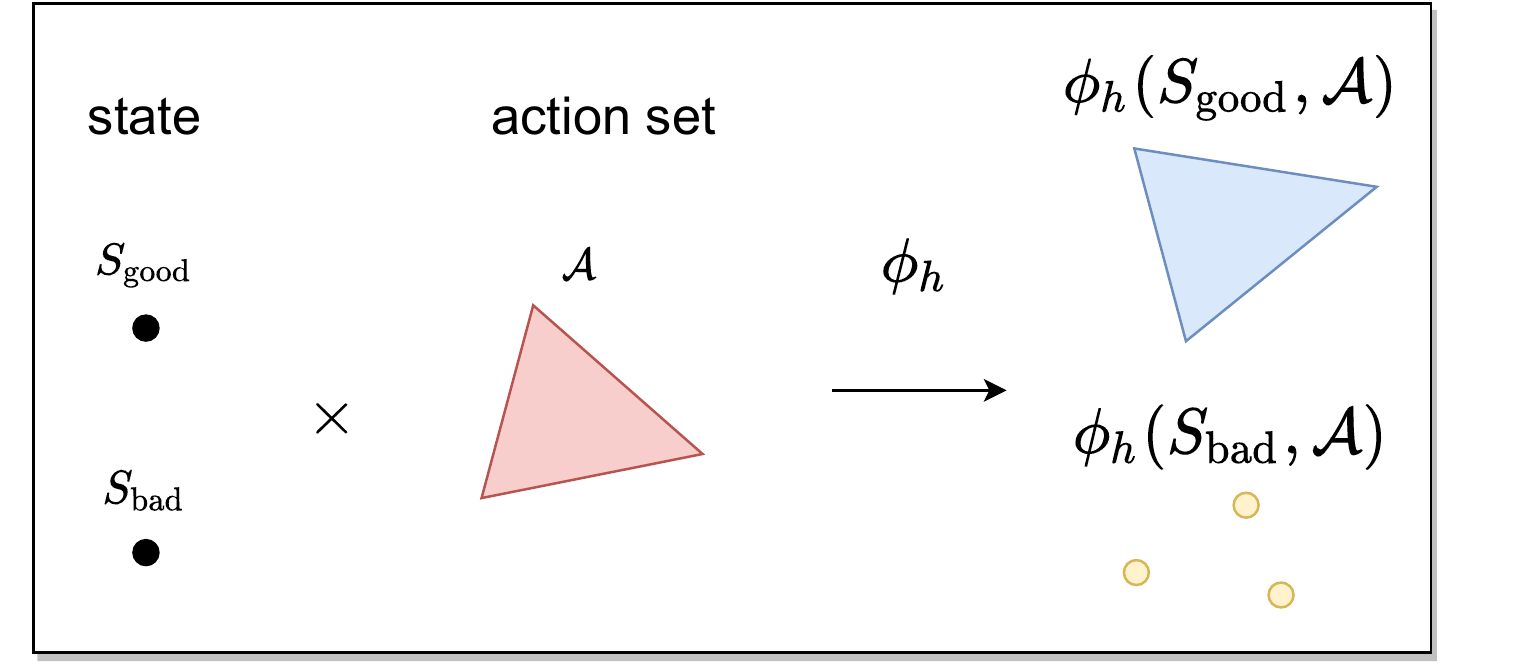}
    \caption{ An illustration of the hard case for deterministic MDPs with polynomial realizable $Q^*$. The image of the feature map $\phi_h$ at $S_{\text{good}}$ is of positive measure, while the image of $\phi_h$ at $S_{\text{bad}}$ is not. This makes it difficult to learn under the online RL setting.}
        \label{fig:generic}
\end{center}
\end{figure}

\begin{theorem}
There exists a family of MDPs satisfying Assumption~\ref{assump:pr}, such that the set $\left\{\phi_h(s,a) \mid s\in \gS, a \in \gA \right\}$ is of positive measure at any level $h$, but for all $h$ there is some $s_{\mathrm{bad}} \in \gS$ such that $\{\phi_h(s_{\mathrm{bad}},a) \mid a \in \gA \}$ is measure zero.
Under the online RL setting, any algorithm needs to play at least $\Omega(d^p)$ episodes to identify the optimal policy.
On the contrary, under the generative model setting, only $O(d) H$ samples are needed.
\end{theorem}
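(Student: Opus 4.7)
The plan is to give an explicit two-state construction matching Figure~\ref{fig:generic} and then prove the lower and upper bounds separately. I would take $\gS = \{s_{\mathrm{good}}, s_{\mathrm{bad}}\}$, fix $s_1 = s_{\mathrm{good}}$, and make the transitions deterministically absorbing into $s_{\mathrm{bad}}$: $\PP_h(s_{\mathrm{bad}}\mid s,a) = 1$ for all $s, a, h$. The feature map at $s_{\mathrm{good}}$ is chosen so that $\phi_h(s_{\mathrm{good}}, \cdot)$ has image containing a ball of positive Lebesgue measure in $\R^d$, while $\phi_h(s_{\mathrm{bad}}, \cdot)$ is forced to land in a prescribed finite set $\{v_1, \dots, v_N\} \subset \R^d$ with $N = \Theta(d^p)$ in sufficiently general position. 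The hard subfamily is indexed by a secret $i^\ast \in [N]$: at the last level the reward at $s_{\mathrm{bad}}$ is $r_H(s_{\mathrm{bad}}, a_j) = \indict[j = i^\ast]$, while all rewards at $s_{\mathrm{good}}$ and all intermediate-level rewards at $s_{\mathrm{bad}}$ are set to zero, hence $i^\ast$-independent; in particular $V^\ast_h(s_{\mathrm{bad}})$ equals the same constant across all instances.

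To verify Assumption~\ref{assump:pr} I would take $\F_{\gV_h}$ to be a degree-$p$ admissible polynomial family whose restriction to $\{v_1,\dots,v_N\}$ realizes each indicator pattern $(\indict[j=i^\ast])_j$. Since the ambient space of degree-$\leq p$ polynomials in $d$ variables has dimension $\binom{d+p}{p} = \Theta(d^p)$, choosing $\F_{\gV_h}$ large enough and $\{v_j\}$ in general position makes the evaluation map onto $\R^N$ surjective, so the required needle polynomials exist; the remaining trivial $Q^\ast_h$ values lie in the same family automatically, and admissibility is easy to check against the generic $s_{\mathrm{good}}$-feature block.

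For the online lower bound, because every trajectory is pinned to $s_{\mathrm{bad}}$ after step one and every observation at $s_{\mathrm{good}}$ is $i^\ast$-independent by construction, the only informative signal per episode is the single bit $r_H(s_{\mathrm{bad}}, a_H) = \indict[a_H = a_{i^\ast}]$. Placing the uniform prior on $i^\ast \in [N]$ and running a standard needle-in-a-haystack / Yao's minimax argument (equivalently, Fano's inequality applied to the $N$-way hypothesis test) shows that any algorithm identifying the optimal policy with constant probability must try $\Omega(N) = \Omega(d^p)$ distinct last-step actions at $s_{\mathrm{bad}}$, hence must use $\Omega(d^p)$ episodes (treating $H$ as constant); the fact that misidentifying $i^\ast$ costs a constant in $V^\ast_1(s_1) - V^\pi_1(s_1)$ turns this into a bound on identifying the optimal policy rather than just on identifying $i^\ast$.

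For the generative upper bound I would apply Algorithm~\ref{alg:querypoly} to $E_h = \bigcup_{s \in \gS}\{\phi_h(s,a) : a \in \gA\}$, which has positive Lebesgue measure thanks to the $s_{\mathrm{good}}$ block; rejection-sampling $2D$ queries from this set and invoking Theorem~\ref{thm:ag} recovers $Q^\ast_h$ exactly as an element of $\F_{\gV_h}$, after which one reads off $\pi^\ast_h(s_{\mathrm{bad}})$ by evaluating the recovered polynomial at each $v_j$ with no additional samples, yielding the $O(D)H = O(d)H$ bound. The main obstacle, and the real content of the theorem, is engineering $\F_{\gV_h}$ together with $\{v_j\}$ so that the family remains admissible with small algebraic dimension (for the generative side) while its restriction to the measure-zero $s_{\mathrm{bad}}$-features still supports $\Omega(d^p)$ mutually indistinguishable needle instances (for the online side); reconciling those two tensions via the algebraic-geometric interpolation at $\{v_j\}$ is the delicate step on which the separation rests.
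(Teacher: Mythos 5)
Your plan has the right skeleton (two states, absorbing transition to $s_{\mathrm{bad}}$, positive-measure image at $s_{\mathrm{good}}$, measure-zero image at $s_{\mathrm{bad}}$, needle-in-haystack lower bound, dynamic programming upper bound), and your Fano-style lower-bound argument would work if the construction did. But the construction does not satisfy Assumption~\ref{assump:pr}, and the fix is exactly the ``delicate step'' you defer. The problem: you set $r_H(s_{\mathrm{good}}, a) = 0$ for all $a$ and $r_H(s_{\mathrm{bad}}, a_j) = \indict[j = i^\ast]$, and you need $Q^\ast_H = r_H$, \emph{viewed as a function of the feature} $\phi_H(s,a)$, to lie in a polynomial family. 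Since $\{\phi_H(s_{\mathrm{good}}, a) : a\}$ contains a ball of positive Lebesgue measure, a polynomial vanishing there vanishes identically, so it cannot equal the nonzero needle on the measure-zero set $\{v_1, \dots, v_N\}$. You cannot make the $s_{\mathrm{good}}$ side both positive-measure and reward-zero while keeping $Q^\ast_H$ in a polynomial class. The paper avoids this by making the reward at $s_{\mathrm{good}}$ equal to the \emph{same} secret polynomial evaluated at $\phi_H(s_{\mathrm{good}},a) = a$; the lower bound survives because the learner simply never reaches $s_{\mathrm{good}}$ at levels $h > 1$ and therefore never observes those informative values, not because those values are instance-independent.

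The second, related gap is the ``tension'' you flag at the end but do not resolve. If you just take $\F_{\gV_h}$ to be large enough (e.g., all degree-$\le p$ polynomials) so that the evaluation map onto $\R^N$ is surjective, its dimension is $\binom{d+p}{p} = \Theta(d^p)$, and Algorithm~\ref{alg:querypoly} would then cost $\Theta(d^p)H$ samples, destroying the separation you are trying to prove. The paper resolves this by an explicit choice: the needle at level $h$ is $r(M_\alpha, x) = \prod_{i=1}^p \langle e_{\alpha_i}, x \rangle$ with $M_\alpha = e_{\alpha_1} \otimes \cdots \otimes e_{\alpha_p}$, the measure-zero set is $\cF_0 = \{ x_\alpha = \sum_i e_{\alpha_i} \}$ so that $r(M_\alpha, x_{\alpha'}) = \indict_{\{\alpha=\alpha'\}}$ (Proposition~\ref{prop:lb:delta_f}), and $Q^\ast_h(s,a) = q_h(U_h \phi_h(s,a))$ falls into Example~\ref{example:qUx} with $k=p$, giving algebraic dimension $O(d(p+1)^p) = O(d)$ for constant $p$ while still containing $\binom{d}{p} = \Omega(d^p)$ mutually indistinguishable needles on $\cF_0$. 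That explicit family is the content of the theorem; without it, neither the realizability claim nor the $O(d)H$ upper bound goes through. (A smaller point: the paper's upper bound also needs the $p$-homogeneity trick to pass from $\cF = \mathrm{conv}(\cF_0)$, which has empty interior, to $\mathrm{conv}(\cF, \mathbf{0})$, which is of positive measure; your ball-in-image assumption sidesteps this but only because you haven't pinned the family down.)
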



\section{Conclusions}
In this paper, we consider neural network and polynomial function approximations in the simulator and online settings. To our knowledge, this is the first paper that shows sample-efficient reinforcement learning is possible with neural net function approximation. Our results substantially improve upon what can be achieved with existing results that primarily rely on embedding neural networks into linear function classes. The analysis reveals that for function approximations that allows for efficient $\ell_\infty$ recovery, such as two layer neural networks and admissible polynomial families, reinforcement learning can be reduced to parameter recovery problems, as well-studied in theories for deep learning, phase retrieval, and etc. Our method can also be potentially extended to handle three-layer and deeper neural networks, with advanced tools in \cite{fornasier19a,fornasier19b}.

Our results for polynomial activation require deterministic transitions, since we cannot handle how noise propagates in solving polynomial equations. We leave to future work an in-depth study of the stability of roots of polynomial systems with noise, which is a fundamental mathematical problem and even unsolved for homogeneous polynomials. In particular, noisy tensor decomposition approaches combined with zeroth-order optimization may allow for stochastic transitions~\cite{huang2021optimal}.

In the online RL setting, we can only show efficient learning under a very strong yet necessary assumption on the feature mapping. We leave to future work identifying more realistic and natural conditions which permit efficient learning in the online RL setting.

Finally, in future work, we hope to consider deep neural networks where parameter recovery or $\ell_\infty$ error is unattainable, and deep reinforcement learning with representation learning~\cite{yang2020provable,du2020few}.

\section*{Acknowledgment}
JDL acknowledges support of the ARO under MURI Award W911NF-11-1-0303,  the Sloan Research Fellowship, NSF CCF 2002272, and an ONR Young Investigator Award. QL
is supported by NSF 2030859 and the Computing Research
Association for the CIFellows Project. SK acknowledges funding from the NSF Award CCF-1703574 and the ONR award N00014-18-1-2247. BH is supported by the Elite Undergraduate Training Program of School of Mathematical Sciences at Peking University. 

\bibliography{simonduref,beyondntk,jason_bib}
\bibliographystyle{alpha}

\newpage
\appendix

\section{Omitted Proofs in Section~\ref{sec:nn_simulator}}

\subsection{Proof of Section~\ref{sec:det_nn}}\label{sec:proof_det_nn}

\begin{theorem}[Formal statement of Theorem~\ref{thm:det_nn}]\label{thm:det_nn_formal}
Consider MDP $\mdps$ where the transition is deterministic. Assume the function class in Definition~\ref{def:nn_class} satisfies Assumption~\ref{asp:realizability} and Assumption~\ref{asp:dense_feature}. For any $t \in (0,1)$, if $d \geq \Omega(\log (B_W / \lambda))$ and $n \geq d \cdot \poly(\kappa,k,\lambda, B_W, B_{\phi}, H,\log(d/t))$, then with probability at least $1-t$ Algorithm~\ref{alg:det_nn} returns the optimal policy $\pi^\ast$.
\end{theorem}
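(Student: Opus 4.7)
The plan is to argue by backward induction on $h$ that each learned policy $\pi_h$ coincides with the optimal $\pi^\ast_h$ on every reachable state, so that by $h=1$ the returned policy is globally optimal. The base case at $h=H+1$ is vacuous. For the inductive step, suppose $\pi_{h+1},\dots,\pi_H$ already equal $\pi^\ast_{h+1},\dots,\pi^\ast_H$ on all states reachable from the probed pairs. Because the transition kernel is deterministic, the rollout reward $r_h^{(i)}+\cdots+r_H^{(i)}$ collected at $(s_h^i,a_h^i)$ is exactly $Q^\ast_h(s_h^i,a_h^i)$ — no stochasticity, no estimation noise. Thus the pairs $\{(x_h^i,\hat Q_h^i)\}$ fed into \textsc{NeuralNetRecovery} are \emph{noiseless} samples of the ground-truth neural network $x\mapsto \langle v^\ast_h,\sigma(W^\ast_h x)\rangle$ that represents $Q^\ast_h\circ\phi^{-1}$.

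Next I would control the sampling step. Since $x_h^i\sim\mathcal N(0,I_d)$ and $\delta_\phi\ge d\cdot\mathrm{polylog}(d)$, a standard $\chi^2_d$ tail bound shows that all but an exponentially small fraction of the $n$ samples satisfy $\|x_h^i\|\le\delta_\phi$, and the image condition in Definition~\ref{def:nn_class} guarantees that for each such $x_h^i$ a preimage $(s_h^i,a_h^i)\in\phi^{-1}(x_h^i)$ exists and can be located via the generative model. The marginal of $x_h^i$ conditioned on lying in the ball is therefore a truncated Gaussian whose total-variation distance from $\mathcal N(0,I_d)$ is inverse polynomial in $d$, which is harmless when we invoke the recovery guarantee.

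The heart of the argument is the parameter recovery step. Following the two-stage approach of \cite{zsj+17}, I would first use method-of-moments on low-order tensors of $(x_h^i,\hat Q_h^i)$ to localize $(v^\ast_h,W^\ast_h)$ within a basin where the empirical squared loss is strictly locally convex. The smallest singular-value condition $s_{\min}(W)>0$ and the product condition $\prod_i s_i(W)/s_{\min}(W)\le\lambda$ are what control the sample complexity of this moment step; they produce the $\mathrm{poly}(\kappa,k,\lambda)$ factors. Running gradient descent on the empirical loss from this warm start then converges to the true minimizer, and because the labels are exact, the minimizer is the true parameter: $(v_h,W_h)=(v^\ast_h,W^\ast_h)$ up to the unavoidable permutation/sign symmetries of the network. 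Consequently $\hat Q_h(s,a)=Q^\ast_h(s,a)$ pointwise on $\mathcal S\times\mathcal A$, so $\pi_h=\pi^\ast_h$, closing the induction.

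To finish, I would take a union bound of the failure events (Gaussian tail, noisy moment estimation, gradient-descent convergence) over the $H$ levels with budget $t/H$ each, which forces the $\log(d/t)$ and $H$ factors in $n$. The main obstacle I expect is the parameter-recovery step itself: establishing local strong convexity of the empirical squared loss around $(v^\ast_h,W^\ast_h)$ for a ReLU network with Gaussian-like inputs, and bounding the moment-initialization error, is where all the technical work lives — in particular the $\lambda$ dependence (through the product of singular values) is the price one pays for using tensor-based initialization, and it is why the theorem's sample bound carries that factor even though the downstream RL reduction itself is clean.
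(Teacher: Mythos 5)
Your proposal follows essentially the same route as the paper: backward induction from $h=H$ to $h=1$, using determinism so that each rollout yields the exact value $Q^\ast_h(s^i_h,a^i_h)$, then invoking the two-stage (method-of-moments warm start plus locally strongly convex gradient descent) exact recovery guarantee (Theorem~\ref{thm:exact_nn}) so that $\hat Q_h\equiv Q^\ast_h$ and hence $\pi_h=\pi^\ast_h$. One small imprecision: in the paper the $\lambda$ dependence arises not from the moment-initialization step (whose sample bound is $\poly(\kappa,k,\log(d/t))$) but from the strong-convexity constant $\Omega(\rho(s_k)/\lambda)$ of the population Hessian, which dictates how accurately the empirical Hessian must concentrate to guarantee a unique local minimizer near $W$.
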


\begin{proof}
Use $\pi^\ast_1,\dots,\pi^\ast_H$ to denote the global optimal policy. We prove that Algorithm~\ref{alg:det_nn} learns $\pi_h^\ast$ from $h = H$ to $h = 1$.

At level $H$, the query obtains exact $Q_H^\ast(s,a)$. Therefore by Theorem~\ref{thm:exact_nn}, $\hat Q_H = Q_H^\ast$ and thus the optimal planning finds $\pi_H = \pi_H^\ast$. Suppose we have learned $\pi_{h+1}^\ast, \dots, \pi_H^\ast$ at level $h$. Due to deterministic transition, the query obtains exact $Q_h^\ast(s,a)$. Therefore by Theorem~\ref{thm:exact_nn}, $\hat Q_h = Q_h^\ast$ and thus the optimal planning finds $\pi_h = \pi_h^\ast$. Recursively applying this process to $h=1$, we complete the proof.
\end{proof}
\subsection{Proof of Section~\ref{sec:policy_nn}}
\label{sec:proof_policy_nn}

\begin{theorem}[Formal statement of Theorem~\ref{thm:policy_nn}]\label{thm:policy_nn_formal}
Assume the function class in Definition~\ref{def:nn_class} satisfies Assumption~\ref{asp:realizability}, Assumption~\ref{asp:dense_feature} and is policy complete. For any $\epsilon > 0$ and $t \in (0,1)$ such that $d \geq \Omega(\log (B_W B_\phi/\epsilon))$, if $n \geq \epsilon^{-2} \cdot d \cdot \poly(\kappa,k, B_W, B_{\phi}, H, \log(d/t))$, then with probability at least $1-t$ Algorithm~\ref{alg:simulator_nn_policy_complete} returns a policy $\pi$ such that $V^\ast - V^{\pi} \leq \epsilon$.
\end{theorem}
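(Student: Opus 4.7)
The plan is to prove the bound by backward induction on $h$, establishing $\|\hat Q_h - Q_h^{\pi_{h+1:H}}\|_\infty \leq \epsilon/(2H)$ at every level with high probability, and then converting these per-level $\ell_\infty$ errors into overall suboptimality via an approximate dynamic programming recursion. Three ingredients enter: policy completeness, which places the regression target in $\F_{NN}$; the noisy recovery routine \textsc{NeuralNetNoisyRecovery} supplied by the appendix; and a standard greedy-error bound. The structure mirrors the informal explanation after Theorem~\ref{thm:policy_nn}, with the truncation by $\delta_\phi$ handled via Gaussian concentration.

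At level $h$, having already constructed $\pi_{h+1},\ldots,\pi_H$, the target $\tilde Q_h := Q_h^{\pi_{h+1:H}}$ lies in $\F_{NN}$ by policy completeness, so $\tilde Q_h(s,a)=\langle v^\pi,\sigma(W^\pi\phi(s,a))\rangle$ with parameters obeying the bounds of Definition~\ref{def:nn_class}. For each $x_h^i\sim\mathcal{N}(0,I_d)$ with $\|x_h^i\|\leq\delta_\phi$, the assumption that $\phi$ covers the ball of radius $\delta_\phi$ guarantees a preimage $(s_h^i,a_h^i)$, and the rollout cumulative reward $\hat Q_h^i$ is an unbiased, $H$-bounded estimator of $\tilde Q_h(s_h^i,a_h^i)$. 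Samples with $\|x_h^i\|>\delta_\phi$ are set to zero; Gaussian tail bounds with $\delta_\phi\geq d\,\polylog(d)$ make this event occur with probability $\exp(-\Omega(d))$, and the mild hypothesis $d\geq\Omega(\log(B_WB_\phi/\epsilon))$ renders the induced bias negligible versus $\epsilon/H$. Invoking \textsc{NeuralNetNoisyRecovery} with $n\gtrsim\epsilon^{-2}d\cdot\poly(\kappa,k,B_W,B_\phi,H,\log(d/t))$ yields $(v_h,W_h)$ such that $\|\hat Q_h-\tilde Q_h\|_\infty\leq\epsilon/(2H)$ uniformly on $\{x:\|x\|\leq\delta_\phi\}$ with probability at least $1-t/H$. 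Crucially, the rate scales only with the target precision $\epsilon/(2H)$ rather than with an exact-recovery tolerance, which is exactly why no factor of $\lambda$ appears here, in contrast with the deterministic Theorem~\ref{thm:det_nn_formal}. A union bound over $h\in[H]$ produces the joint recovery event with probability $\geq 1-t$.

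To close the argument, define $\Delta_h:=\|V_h^*-V_h^{\pi_{h:H}}\|_\infty$, with the convention $\Delta_{H+1}=0$. Since $Q_h^*(s,a)$ and $\tilde Q_h(s,a)$ share the immediate reward $r_h(s,a)$ and differ only in the expected next-state value, the Bellman identity gives $\|Q_h^*-\tilde Q_h\|_\infty\leq\Delta_{h+1}$. Because $\pi_h(s)=\argmax_a\hat Q_h(s,a)$ and $\|\hat Q_h-\tilde Q_h\|_\infty\leq\epsilon/(2H)$, the standard two-sided greedy comparison gives $\tilde Q_h(s,\pi_h(s))\geq\max_a\tilde Q_h(s,a)-\epsilon/H$, and chaining with the previous bound produces
\begin{equation*}
V_h^{\pi_{h:H}}(s) \;=\; \tilde Q_h(s,\pi_h(s)) \;\geq\; \max_a \tilde Q_h(s,a) - \epsilon/H \;\geq\; V_h^*(s) - \epsilon/H - \Delta_{h+1}.
\end{equation*}
Hence $\Delta_h\leq\Delta_{h+1}+\epsilon/H$, so $\Delta_1\leq\epsilon$, and evaluating at $s_1$ yields $V_1^*(s_1)-V_1^\pi(s_1)\leq\epsilon$ as required.

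The principal obstacle is the noisy recovery step itself, which must deliver the advertised $\epsilon^{-2}d\cdot\poly(\kappa,k,B_W,B_\phi,H,\log(d/t))$ rate for bounded noisy regression against a two-layer neural net target under a Gaussian design truncated to the ball of radius $\delta_\phi$; this is where the method-of-moments initialization plus local refinement must be carefully tracked, and where the specific polynomial dependence on $\kappa,k,B_W,B_\phi$ is produced. A secondary subtlety is verifying that the ignored samples outside the $\delta_\phi$-ball do not corrupt the empirical moment estimates, which is where the lower bound on $d$ and the rapid Gaussian tail decay are used. Once \textsc{NeuralNetNoisyRecovery} is in hand as a black box, the approximate-DP reduction above is short, essentially standard, and loses only an extra factor of $H$ in the sample complexity.
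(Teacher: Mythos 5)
Your proof is correct and follows essentially the same approach as the paper: backward induction establishing a uniform $\ell_\infty$ recovery of $\tilde Q_h = Q_h^{\pi_{h+1:H}}$ at each level via policy completeness and \textsc{NeuralNetNoisyRecovery}, then an approximate-DP recursion $\Delta_h\le\Delta_{h+1}+\epsilon/H$; the paper organizes the recursion via a three-term policy-value telescope with the intermediate greedy policy $\tilde\pi_h$, while you route it through the bound $\|Q_h^*-\tilde Q_h\|_\infty\le\Delta_{h+1}$, but these are cosmetically different presentations of the same argument.
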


\begin{proof}
Use $\pi^\ast_1,\dots,\pi^\ast_H$ to denote the global optimal policy.
We prove for all $s \in \states$,
\begin{align*}
    V^{\pi^\ast_h,\pi^\ast_{h+1},\dots,\pi^\ast_H}_h(s) - V^{\pi_h,\pi_{h+1},\dots,\pi_H}_h(s) \leq  \frac{(H-h+1)\epsilon}{H}.
\end{align*}

At level $H$, let $e_H(s^i_H,a^i_H) = r_H(s^i_H,a^i_H)-Q^\ast_H(s^i_H,a^i_H)$, then $e_H(s^i_H,a^i_H) = 0$. 
From Theorem~\ref{thm:mom_recover_nn}, we have $\hat Q_H(s,a) := v_H^\top \sigma(W_H \phi(s,a))$ satisfies $|\hat Q_H(s,a) - Q^\ast_H(s,a)| \leq \frac{\epsilon}{2H}$ for all $s \in \states, a \in \actions$. Therefore for all $s \in \states$, 
\begin{align*}
    V_H^{\ast}(s) - V^{\pi_H}_H(s)
    = &~ \E_{a \sim \pi^\ast_H}[Q^\ast_H(s,a)] - \E_{a \sim \pi^\ast_H}[\hat Q_H(s,a)]\\
    &~ + \E_{a \sim \pi^\ast_H}[\hat Q_H(s,a)] - \E_{a \sim \pi_H}[ \hat Q_H(s,a)]\\
    &~ + \E_{a \sim \pi_H}[ \hat Q_H(s,a)] - \E_{a \sim \pi_H}[Q^\ast_H(s,a)]\\
    \leq &~ \frac{\epsilon}{H}
\end{align*}
where in the second step we used $\E_{a \sim \pi^\ast_H}[\hat Q_H(s,a)] \leq \E_{a \sim \pi_H}[ \hat Q_H(s,a)]$ by optimality of $\pi_H$ and $|\hat Q_H(s,a) - Q^\ast_H(s,a)| \leq \frac{\epsilon}{2H}$.

Suppose we have learned policies $\pi_{h+1},\dots,\pi_H$, we use $\tilde \pi_{h}$ to denote the optimal policy of $Q^{\pi_{h+1},\dots,\pi_H}_h(s,a)$. Let 
$$e_h(s^i_h,a^i_h) = {\hat Q}_h^i-Q_h^{\pi_{h+1},\dots,\pi_H}(s^i_h,a^i_h)$$
then $e_h(s^i_h,a^i_h)$ is zero mean $H^2$ sub-Gaussian (notice that ${\hat Q}_h^i$ is unbiased estimate of $Q_h^{\pi_{h+1},\dots,\pi_H}(s^i_h,a^i_h)$, and ${\hat Q}_h^i \leq O(H)$). From Theorem~\ref{thm:mom_recover_nn}, we have $\hat Q_h(s,a) = v_h^\top \sigma(W_h \phi(s,a))$ satisfies $|\hat Q_h(s,a) - Q_h^{\pi_{h+1},\dots,\pi_H}(s,a)| \leq \frac{\epsilon}{2H}$ for all $s \in \states, a \in \actions$. Therefore for all $s \in \states$, 
\begin{align*}
    &~ V^{\tilde \pi_h,\pi_{h+1},\dots,\pi_H}_h(s) - V^{\pi_h, \pi_{h+1},\dots,\pi_H}_h(s)\\
    = &~ \E_{a \sim \tilde \pi_h}[Q_h^{\pi_{h+1},\dots,\pi_H}(s,a)] - \E_{a \sim \tilde \pi_h}[\hat Q_h(s,a)]\\
    &~ + \E_{a \sim \tilde \pi_h}[\hat Q_h(s,a)] - \E_{a \sim \pi_h}[ \hat Q_h(s,a)]\\
    &~ + \E_{a \sim \pi_h}[ \hat Q_h(s,a)] - \E_{a \sim \pi_h}[ Q_h^{\pi_h, \pi_{h+1},\dots,\pi_H}(s,a)]\\
    \leq &~ \frac{\epsilon}{H}
\end{align*}
where in the second step we used $\E_{a \sim \tilde \pi_h}[\hat Q_h(s,a)] \leq \E_{a \sim \pi_h}[ \hat Q_h(s,a)]$ by optimality of $\pi_h$ and $|\hat Q_h(s,a) - Q_h^{\pi_{h+1},\dots,\pi_H}(s,a)| \leq \frac{\epsilon}{2H}$.

It thus follows that
\begin{align*}
    V^{\pi^\ast_h,\pi^\ast_{h+1},\dots,\pi^\ast_H}_h(s) - V^{\pi_h,\pi_{h+1},\dots,\pi_H}_h(s) = &~ V^{\pi^\ast_h,\pi^\ast_{h+1},\dots,\pi^\ast_H}_h(s) - V^{\pi^\ast_h,\pi_{h+1},\dots,\pi_H}_h(s)\\
    &~ + V^{\pi^\ast_h,\pi_{h+1},\dots,\pi_H}_h(s) - V^{\tilde \pi_h,\pi_{h+1},\dots,\pi_H}_h(s)\\
    &~ + V^{\tilde \pi_h,\pi_{h+1},\dots,\pi_H}_h(s) - V^{\pi_h,\pi_{h+1},\dots,\pi_H}_h(s)\\
    \leq &~ V^{\pi^\ast_h,\pi^\ast_{h+1},\dots,\pi^\ast_H}_h(s) - V^{\pi^\ast_h,\pi_{h+1},\dots,\pi_H}_h(s) +  \frac{\epsilon}{H}\\
    \leq &~ \cdots \\
    \leq &~ \frac{(H-h+1)\epsilon}{H}.
\end{align*}
where in the second step we use $V^{\pi^\ast_h,\pi_{h+1},\dots,\pi_H}_h(s) \leq V^{\tilde \pi_h,\pi_{h+1},\dots,\pi_H}_h(s)$ from optimality of $\tilde \pi_h$.
Repeating this argument to $h=1$ completes the proof
\end{proof}

\subsection{Proof of Section~\ref{sec:bellman_nn}}
\label{sec:proof_bellman_nn}

\begin{theorem}[Formal statement of Theorem~\ref{thm:bellman_nn}]\label{thm:bellman_nn_formal}
Assume the function class in Definition~\ref{def:nn_class} satisfies Assumption~\ref{asp:realizability}, Assumption~\ref{asp:dense_feature}, and is Bellman complete. For any $\epsilon > 0$ and $t \in (0,1)$ such that $d \geq \Omega(\log (B_W B_\phi/\epsilon))$, if $n \geq \epsilon^{-2} \cdot d \cdot \poly(\kappa,k, B_W, B_{\phi}, H, \log(d/t))$, then with probability at least $1-t$ Algorithm~\ref{alg:simulator_nn_bellman_complete} returns a policy $\pi$ such that $V^\ast - V^{\pi} \leq \epsilon$.
\end{theorem}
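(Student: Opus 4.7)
\textbf{Proof proposal for Theorem~\ref{thm:bellman_nn_formal}.}
The plan is to mirror the backward-induction argument used for Theorem~\ref{thm:policy_nn_formal}, but to replace the policy-completeness target $Q^{\pi_{h+1},\dots,\pi_H}_h$ by the Bellman-completeness target $\T_h(\hat Q_{h+1})$, and then pay the usual contraction cost when converting $\T_h(\hat Q_{h+1})$ back to $Q^\ast_h$. Concretely, I will prove by downward induction on $h$ that
\[
\|\hat Q_h - Q^\ast_h\|_\infty \le \frac{(H-h+1)\epsilon}{2H},
\]
from which the final suboptimality bound $V_1^\ast(s_1)-V_1^{\pi}(s_1)\le \epsilon$ follows by a standard performance-difference / greedy-rounding argument (applied once, since $\hat Q_1$ already encodes the error for all levels).

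\textbf{Base step ($h=H$).} Here $\T_H(\hat Q_{H+1})(s,a)=r_H(s,a)=Q^\ast_H(s,a)$, and the observations ${\hat Q}_H^i=r_H(s_H^i,a_H^i)$ are exact, noise-free samples of a function in $\F_{NN}$ by realizability. Thus the guarantee of \textsc{NeuralNetNoisyRecovery} (Theorem~\ref{thm:mom_recover_nn}, applied with zero-mean $O(H)$-sub-Gaussian noise since here the noise is actually zero) combined with the sample bound $n\gtrsim\epsilon^{-2}\cdot d\cdot \poly(\kappa,k,B_W,B_\phi,H,\log(d/t))$ yields $\|\hat Q_H-Q^\ast_H\|_\infty\le \epsilon/(2H)$ on the event that all Gaussian features lie in $\{\|x\|\le\delta_\phi\}$; this latter event has probability $\ge 1-t/H$ by the concentration of $\chi^2$ together with $\delta_\phi\ge d\polylog d$.

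\textbf{Inductive step.} Assume $\|\hat Q_{h+1}-Q^\ast_{h+1}\|_\infty\le (H-h)\epsilon/(2H)$. By Bellman completeness, $\T_h(\hat Q_{h+1})\in\F_{NN}$, so there exist parameters $(v^{\hat Q_{h+1}},W^{\hat Q_{h+1}})$ realizing it. The regression targets ${\hat Q}_h^i=r_h(s_h^i,a_h^i)+\hat V_{h+1}(s_{h+1}^i)$ are unbiased estimates of $\T_h(\hat Q_{h+1})(s_h^i,a_h^i)$ with zero-mean bounded noise of magnitude $O(H)$, so Theorem~\ref{thm:mom_recover_nn} gives
\[
\|\hat Q_h-\T_h(\hat Q_{h+1})\|_\infty \le \frac{\epsilon}{2H}
\]
with failure probability at most $t/H$. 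The $\ell_\infty$ contraction of the Bellman operator gives
\[
\|\T_h(\hat Q_{h+1})-\T_h(Q^\ast_{h+1})\|_\infty \le \|\hat Q_{h+1}-Q^\ast_{h+1}\|_\infty \le \frac{(H-h)\epsilon}{2H},
\]
and since $\T_h(Q^\ast_{h+1})=Q^\ast_h$, the triangle inequality closes the induction. A union bound over $h\in[H]$ controls the total failure probability by $t$.

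\textbf{From $\ell_\infty$ accuracy to policy suboptimality.} With $\|\hat Q_h-Q^\ast_h\|_\infty\le \epsilon/2$ for every $h$, a standard greedy argument gives $V_1^\ast(s_1)-V_1^{\pi}(s_1)\le H\cdot(2\cdot\epsilon/2)/H=\epsilon$; more precisely, one writes the telescoping sum $V_1^\ast-V_1^\pi=\sum_h \E^\pi[(Q^\ast_h-\T_h(\hat Q_{h+1}))(s_h,\pi^\ast_h(s_h))-(Q^\ast_h-\T_h(\hat Q_{h+1}))(s_h,\pi_h(s_h))]$ and uses $\pi_h=\arg\max_a\hat Q_h(s,a)$ together with $\|\hat Q_h-Q^\ast_h\|_\infty\le (H-h+1)\epsilon/(2H)$ to bound each term by $\epsilon/H$.

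\textbf{Main obstacle.} The only nontrivial step is verifying that the hypotheses of the noisy neural-network recovery result (Theorem~\ref{thm:mom_recover_nn}) are preserved under the inductive construction: the target network at level $h$ is $\T_h(\hat Q_{h+1})$ rather than a fixed ground truth, so one must ensure that Bellman completeness delivers parameters $(v^{\hat Q_{h+1}},W^{\hat Q_{h+1}})$ that still satisfy the spectral and norm bounds ($\|W\|_F\le B_W$, $\kappa$-conditioned, $k$-width) of Definition~\ref{def:nn_class} uniformly in $\hat Q_{h+1}$, and in particular independently of $\lambda$; this is precisely why the stated sample complexity avoids any $\lambda$ dependence. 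Conditional on this uniformity, all other steps are routine applications of the already-established recovery and contraction bounds.
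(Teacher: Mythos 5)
Your proof mirrors the paper's argument closely: backward induction on the $\ell_\infty$ error $\|\hat Q_h - Q^\ast_h\|_\infty$, with each inductive step decomposing this error via the triangle inequality into the recovery error $\|\hat Q_h - \T_h(\hat Q_{h+1})\|_\infty$ (controlled by Theorem~\ref{thm:mom_recover_nn} using Bellman completeness) plus the Bellman-contraction term $\|\T_h(\hat Q_{h+1}) - \T_h(Q^\ast_{h+1})\|_\infty \le \|\hat Q_{h+1} - Q^\ast_{h+1}\|_\infty$. Your identification of the ``main obstacle'' (that the target $\T_h(\hat Q_{h+1})$ is estimate-dependent and one must verify it stays inside $\F_{NN}$ with uniform spectral bounds) is exactly what the Bellman-completeness hypothesis is invoked to provide, and is well observed.

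However, the final conversion from $\ell_\infty$ accuracy to policy suboptimality is off by a factor of order $H$ as written, and this is a real gap in your proposal. From $\|\hat Q_h - Q^\ast_h\|_\infty \le (H-h+1)\epsilon/(2H)$ for every $h$, the standard greedy performance-difference recursion
\begin{align*}
V_h^\ast(s) - V_h^\pi(s) &\le 2\|\hat Q_h - Q^\ast_h\|_\infty + \E_{a\sim\pi_h,\,s'\sim\PP(\cdot\mid s,a)}\bigl[V_{h+1}^\ast(s') - V_{h+1}^\pi(s')\bigr]
\end{align*}
unrolls to $V_1^\ast - V_1^\pi \le \sum_{h=1}^H 2\|\hat Q_h - Q^\ast_h\|_\infty \le \sum_{h=1}^H (H-h+1)\epsilon/H = (H+1)\epsilon/2$, not $\epsilon$. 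Your computation ``$H\cdot(2\cdot\epsilon/2)/H = \epsilon$'' treats the per-step contribution as if it were the per-step \emph{recovery} error $\epsilon/(2H)$ rather than the accumulated $\ell_\infty$ error $\|\hat Q_h - Q^\ast_h\|_\infty$, which already sums the recovery errors from levels $h,\dots,H$. The paper handles this by explicitly accumulating to $2H\epsilon$ and then rescaling $\epsilon \leftarrow \epsilon/H$; since this rescale only touches the $\poly(H)$ factor in $n$, the theorem's sample bound is unchanged, but your proof as written does not close without this step. Separately, the ``more precise'' telescoping identity you write, which subtracts $\T_h(\hat Q_{h+1})$ inside the performance-difference sum, is not actually an identity (the inserted term does not cancel); you should either justify why its contribution is controlled or simply drop it in favor of the plain performance-difference decomposition above.
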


\begin{proof}
Use $\pi^\ast_1,\dots,\pi^\ast_H$ to denote the global optimal policy.
We prove 
\begin{align}\label{eq:Q*-Qhat}
    |\hat Q_h(s,a) - Q^\ast_h(s,a)| \leq \frac{(H-h+1)\epsilon}{H}
\end{align}
for all $s \in \states, a \in \actions$.

At level $H$, let $$e_H(s^i_H,a^i_H) = r_H(s^i_H,a^i_H)-Q^\ast_H(s^i_H,a^i_H)$$
then $e_H(s^i_H,a^i_H) = 0$
.
From Theorem~\ref{thm:mom_recover_nn}, we have $\hat Q_H(s,a) := v_H^\top \sigma(W_H \phi(s,a))$ satisfies $|\hat Q_H(s,a) - Q^\ast_H(s,a)| \leq \frac{\epsilon}{H}$ for all $s \in \states, a \in \actions$. 

Suppose we have learned $\hat Q_{h+1}(s,a)$ with $|\hat Q_{h+1}(s,a) - Q^\ast_{h+1}(s,a)| \leq \frac{(H-h)\epsilon}{H}$. At level $h$, let $$e_h(s^i_h,a^i_h) = r_h(s^i_h,a^i_h) + \hat V_{h+1}(s_{h+1}^i) -\T_{h}(\hat Q_{h+1})(s^i_h,a^i_h)$$
then $e_h(s^i_h,a^i_h)$ is zero mean $H^2$ sub-Gaussian (notice that $r_h(s^i_h,a^i_h) + \hat V_{h+1}(s_{h+1}^i)$ is unbiased estimate of $\T_{h}(\hat Q_{h+1})(s^i_h,a^i_h)$, and $r_h(s^i_h,a^i_h) + \hat V_{h+1}(s_{h+1}^i) \leq O(H)$). From Theorem~\ref{thm:mom_recover_nn}, we have $\hat Q_h(s,a) := v_h^\top \sigma(W_h \phi(s,a))$ satisfies $|\hat Q_h(s,a) - \T_{h}(\hat Q_{h+1})(s^i_h,a^i_h)| \leq \frac{\epsilon}{H}$ for all $s \in \states, a \in \actions$. Therefore
\begin{align*}
    |\hat Q_h(s,a) - Q^\ast_h(s,a)| \leq &~ |\hat Q_h(s,a) - \T_{h}(\hat Q_{h+1})(s,a)| + |\T_{h}(\hat Q_{h+1})(s,a) - Q^\ast_h(s,a)|\\
    \leq &~ \frac{\epsilon}{H} + \max_{s \in \states, a \in \actions} |\hat Q_{h+1}(s,a) - Q^\ast_{h+1}(s,a)|\\
    \leq &~ \frac{(H-h+1)\epsilon}{H}
\end{align*}
holds for all $s \in \states, a \in \actions$.

It thus follows that
for all $s_1 \in \states$, 
\begin{align*}
    V^{\pi^\ast_1,\dots,\pi^\ast_H}_h(s_1) - V^{\pi_1,\dots,\pi_H}_h(s_1) = &~ \E_{a \sim \pi^\ast_1}[Q_1^\ast(s_1,a)] - \E_{a \sim \pi_1}[Q_1^{\pi_2,\dots,\pi_H}(s_1,a)]\\
    \leq &~ \E_{a \sim \pi^\ast_1}[\hat Q_1 (s_1,a)] - \E_{a \sim \pi_1}[Q_1^{\pi_2,\dots,\pi_H}(s_1,a)] + \epsilon\\
    \leq &~ \E_{a \sim \pi_1}[\hat Q_1 (s_1,a)- Q_1^{\pi_2,\dots,\pi_H}(s_1,a)] + \epsilon \\
    \leq &~ \E_{a \sim \pi_1}[Q_1^\ast (s_1,a) - Q_1^{\pi_2,\dots,\pi_H}(s_1,a)] + 2 \epsilon\\
    \leq &~ \E_{a \sim \pi_1}\E_{s_2 \sim \PP(\cdot|s,a)}[ V_2^{\pi^\ast_2,\dots,\pi^\ast_H}(s_2) - V_2^{\pi_2,\dots,\pi_H}(s_2)] + 2\epsilon\\
    \leq &~ \cdots\\
    \leq &~ 2H\epsilon
\end{align*}
where the first step comes from definition of value function; the second step comes from Eq.~\eqref{eq:Q*-Qhat}; the third step comes from optimality of $\pi_1$; the fourth step comes from Eq.~\eqref{eq:Q*-Qhat}; the fifth step comes from Bellman equation. The proof is complete by rescaling $\epsilon \leftarrow \epsilon/H$.
\end{proof}

\subsection{Proof of Section~\ref{sec:gap_Q_star}}\label{sec:proof_gap_nn}

With gap condition, either Algorithm~\ref{alg:simulator_nn_policy_complete} or Algorithm~\ref{alg:simulator_nn_bellman_complete} will work as long as we select $\epsilon \approx \rho$. The following displays an adaption from Algorithm~\ref{alg:simulator_nn_policy_complete}.

\begin{algorithm*}[h]\caption{Learning realizable $Q^\ast$ with optimality gap}\label{alg:gap_nn}
	\begin{algorithmic}[1]
		\For{$h=H,\ldots 1$}
		\State Sample $x^i_h, i \in [n]$ from standard Gaussian $\mathcal{N}(0,I_d)$ 
		\For{$i \in [n]$}
		\If{$\|x^i_h\| \leq \delta_\phi$}
		\State Find $(s^i_h,a^i_h) \in \phi^{-1}(x^i_h)$ and locate the state $s^i_h$ in the generative model
		\State Pull action $a^i_h$ and use $\pi_{h+1}, \dots,\pi_{H}$ as the roll-out to collect rewards $r_{h}^{(i)} , \dots, r_{H}^{(i)}$
		\State Construct unbiased estimation of ${ Q}_h^{\pi_{h+1}, \dots,\pi_{H}}(s^i_h,a^i_h)$
		$${\hat Q}_h^i \leftarrow r_{h}^{(i)} + \cdots + r_{H}^{(i)}$$
		\Else
		\State Let ${\hat Q}_h^i \leftarrow 0$.
		\EndIf
		\EndFor
		\State Compute $(v_h,W_h) \leftarrow \textsc{NeuralNetNoisyRecovery}(\{(x^i_h,{\hat Q}_h^i): i \in [n] \})$
		\State Set $\hat Q_h(s,a) \leftarrow  v_h^\top \sigma(W_h \phi(s,a))$
		\State Let $\pi_h(s) \leftarrow \argmax_{a \in \states}~ \hat Q_h(s,a)$
		\EndFor
		\State {\bf Return} $\pi_1,\dots,\pi_H$
	\end{algorithmic}
\end{algorithm*}

\begin{theorem}[Formal statement of Theorem~\ref{thm:gap_Q_star}]
Assume the function class in Definition~\ref{def:nn_class} satisfies Assumption~\ref{asp:realizability} and Assumption~\ref{asp:dense_feature}. Suppose $\rho > 0$ and $d \geq \Omega(\log (B_W B_\phi /\rho))$, for any $t \in (0,1)$, if $n = \frac{d}{\rho^2} \cdot \poly(\kappa,k, B_W, B_{\phi}, H, \log(d/t))$, then with probability at least $1-t$ Algorithm~\ref{alg:gap_nn} returns the optimal policy $\pi^\ast$.
\end{theorem}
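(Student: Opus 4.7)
The plan is to run a backward induction on $h$ from $H$ down to $1$, showing that with high probability $\pi_h = \pi_h^\ast$ at every level; this immediately gives $\pi=\pi^\ast$ as the output. The key conceptual point, replacing the completeness hypothesis used in Theorems~\ref{thm:policy_nn_formal} and~\ref{thm:bellman_nn_formal}, is that a positive optimality gap $\rho$ lets us upgrade an $\ell_\infty$ approximation of $Q_h^\ast$ to exact recovery of the greedy action: any $\hat Q_h$ with $\|\hat Q_h-Q_h^\ast\|_\infty \le \rho/4$ induces the same $\argmax$ as $Q_h^\ast$ at every state. Thus I only need to drive the recovery error at each level below $\rho/4$.

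For the base case $h=H$, since no roll-out is executed, $\hat Q_H^i$ equals $Q_H^\ast(s_H^i,a_H^i)$ with zero measurement noise. Theorem~\ref{thm:mom_recover_nn}, combined with the Gaussian feature-exploration argument and the bounded-image hypothesis in Definition~\ref{def:nn_class}, then yields weights $(v_H,W_H)$ with $\|\hat Q_H-Q_H^\ast\|_\infty\le\rho/4$ with probability $1-t/H$ once $n \gtrsim d\,\poly(\kappa,k,B_W,B_\phi,H,\log(d/t))/\rho^2$. For the inductive step, assume $\pi_{h+1}=\pi_{h+1}^\ast,\dots,\pi_H=\pi_H^\ast$. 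Then the rollout estimator satisfies
\[
\E[\hat Q_h^i\mid s_h^i,a_h^i] \;=\; Q_h^{\pi_{h+1}^\ast,\dots,\pi_H^\ast}(s_h^i,a_h^i) \;=\; Q_h^\ast(s_h^i,a_h^i),
\]
with $O(H)$-bounded, hence $O(H^2)$-sub-Gaussian, additive noise. By Assumption~\ref{asp:realizability} the target $Q_h^\ast$ lies in $\F_{NN}$, so Theorem~\ref{thm:mom_recover_nn} again produces $\hat Q_h$ with $\|\hat Q_h-Q_h^\ast\|_\infty\le\rho/4$ with probability $1-t/H$.

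Converting this $\ell_\infty$ bound into exact policy recovery is a short contradiction argument: for any $s$, writing $a^\ast=\pi_h^\ast(s)$ and $\hat a=\argmax_a\hat Q_h(s,a)$, if $\hat a\ne a^\ast$ then Definition~\ref{def:opt_gap} gives $Q_h^\ast(s,a^\ast)-Q_h^\ast(s,\hat a)\ge\rho$, and yet
\[
\hat Q_h(s,\hat a)\;\ge\;\hat Q_h(s,a^\ast)\;\ge\;Q_h^\ast(s,a^\ast)-\rho/4\;\ge\;Q_h^\ast(s,\hat a)+3\rho/4\;\ge\;\hat Q_h(s,\hat a)+\rho/2,
\]
a contradiction. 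Hence $\pi_h=\pi_h^\ast$, closing the induction; a union bound over the $H$ levels costs only an extra $\log H$ inside the $\poly$ factor.

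The main obstacle is not the induction itself but the $\ell_\infty$ recovery guarantee that underlies it: a standard in-distribution $\ell_2$ bound would be insufficient, because the gap argument must hold \emph{pointwise} for every state that the optimal trajectory (or any trajectory produced by the evolving policies) may visit. This is precisely why Algorithm~\ref{alg:gap_nn} samples features from an isotropic Gaussian restricted to the dense ball of radius $\delta_\phi\gtrsim d\,\polylog d$ guaranteed by Definition~\ref{def:nn_class} and invokes the two-phase (method-of-moments initialization followed by local strongly-convex optimization) recovery routine of Theorem~\ref{thm:mom_recover_nn}; the Gaussian design plus bounded features are what allow the $\sqrt{d/n}$-rate $\ell_\infty$ bound, which is then inverted into the $d/\rho^2$ sample complexity claimed in the statement.
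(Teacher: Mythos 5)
Your proposal is correct and follows essentially the same route as the paper's proof: backward induction from $h=H$ to $h=1$, at each level invoking Theorem~\ref{thm:mom_recover_nn} to get $\|\hat Q_h - Q_h^\ast\|_\infty \le \rho/4$ (noting that once $\pi_{h+1},\dots,\pi_H$ are exactly optimal, the rollout target $Q_h^{\pi_{h+1}^\ast,\dots,\pi_H^\ast}$ coincides with $Q_h^\ast\in\F_{NN}$, so realizability alone suffices), and then using the gap to force $\pi_h=\pi_h^\ast$. The only cosmetic difference is that you phrase the gap step as a contradiction while the paper bounds $V_h^\ast(s)-Q_h^\ast(s,\pi_h(s))\le\rho/2$ directly; your proposal is slightly more explicit than the paper in spelling out why the inductive step reproduces the base-case argument.
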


\begin{proof}
Use $\pi^\ast_1,\dots,\pi^\ast_H$ to denote the global optimal policy. Similar to Theorem~\ref{thm:det_nn_formal}, we prove that Algorithm~\ref{alg:gap_nn} learns $\pi_h^\ast$ from $h = H$ to $h = 1$.

At level $H$, the algorithm uses $n = \frac{d}{\rho^2} \cdot \poly(\kappa,k,\log d, B_W, B_{\phi}, H, \log(1/t))$ trajectories to obtain $\hat{Q}_H$ such that $|\hat{Q}_H(s,a) - Q^\ast(s,a)| \leq \rho/4$ by Theorem~\ref{thm:mom_recover_nn}. Therefore 
\begin{align*}
    &~ V^\ast_H(s) - Q^\ast_H(s,\pi_H(s))\\
    \leq &~ Q^\ast_H(s,\pi^\ast_H(s)) - Q^\ast_H(s,\pi_H(s))\\
    \leq &~ Q^\ast_H(s,\pi^\ast_H(s)) - \hat{Q}_H(s,\pi^\ast_H(s)) + \hat{Q}_H(s,\pi^\ast_H(s)) - \hat{Q}_H(s,\pi_H(s))\\
    &~ + \hat{Q}_H(s,\pi_H(s)) - Q^\ast_H(s,\pi_H(s))\\
    \leq &~ \rho/2
\end{align*}
where the third inequality uses the optimality of $\pi_H(s)$ under $\hat{Q}_H$. Thus Definition~\ref{def:opt_gap} gives $\pi_H(s) = \pi^\ast_H(s)$.
Suppose we have learned $\pi_{h+1}^\ast, \dots, \pi_H^\ast$ at level $h$. We apply the same argument to derive $\pi_h = \pi^\ast_h$. Recursively applying this process to $h=1$, we complete the proof.
\end{proof}

\subsection{Neural network recovery}\label{sec:nn_recovery_guarantee}

This section considers recovering neural network $\langle v, \sigma(W x) \rangle$ from the following two models, where $B = \Omega(d \cdot \poly \log (d))$.
\begin{itemize}
    \item Noisy samples from
    \begin{align}\label{eq:noisy_model}
    x \sim \mathcal{N}(0,I_d), ~~ y = (\langle v, \sigma(W x) \rangle + \xi)\cdot \mathbbm{1}(\|x\| \leq B)
\end{align}
where $\xi$ is $\vartheta$ sub-Gaussian noise. 
    \item Noiseless samples from
    \begin{align}\label{eq:noiseless_model}
    x \sim \mathcal{N}(0,I_d), ~~ y = (\langle v, \sigma(W x) \rangle) \cdot \mathbbm{1}(\|x\| \leq B)
\end{align}
\end{itemize}

Recovering neural network has received comprehensive study in deep learning theory \cite{jsa15,zsj+17,glm17}. The analysis in this section is mainly based on the method of moments in \cite{zsj+17}. However, notice that the above learning tasks are different from those considered in \cite{zsj+17}, due to the presence of noise and the truncated signals. Therefore, additional considerations must be made in the analysis.

We consider more general homogeneous activation functions, specified by the assumptions that follow. 
Since the activation function is homogeneous, we assume $v_i \in \{\pm 1\}$ in the following without loss of generality.

\begin{assumption}[Property 3.1 of \cite{zsj+17}]\label{asp:1st_derivative_poly_bound}
Assume $\sigma'(x)$ is nonnegative and homogeneously bounded, i.e. $0 \leq \sigma'(x) \leq L_1 |x|^p$ for some constants $L_1 >0$ and $p \geq 0$.
\end{assumption}

\begin{definition}[Part of property 3.2 of \cite{zsj+17}]
Define	$\rho(z):=\min\{ \beta_0(z)-\alpha_0^2(z)-\alpha_1^2(z),\beta_2(z)-\alpha_1^2(z)-\alpha_2^2(z),\alpha_0(z)\alpha_2(z)-\alpha_1^2(z) \}$,  where  $\alpha_q(z):=\E_{x\sim \mathcal{N}(0,1)}[\sigma'(z x)x^q], q\in \{0,1,2\}$, and $\beta_q(z):=\E_{x\sim \mathcal{N}(0,1)}[(\sigma')^2( z x)x^q]$ for $q \in \{0,2\}$.
\end{definition}

\begin{assumption}[Part of property 3.2 of \cite{zsj+17}]\label{asp:rho>0}
The first derivative $\sigma'(z)$ satisfies that, for all $z>0$, we have $\rho(z)>0$.
\end{assumption}

\begin{assumption}[Property 3.3 of \cite{zsj+17}]
\label{asp:hessian_activation}
The second derivative $\sigma''(x)$ is either {\bf (a)} globally bounded or {\bf (b)} $\sigma''(x)=0$ except for finite points.
\end{assumption}

Notice that ReLU, squared ReLU, leaky ReLU, and polynomial activation function functions all satisfies the above assumption. 
We make the following assumption on the dimension of feature vectors, which corresponds to how features can extract information about neural networks from noisy samples. The dimension only has to be greater than a logarithmic term in $1/\epsilon$ and the norm of parameters.

\begin{assumption}[Rich feature]\label{asp:rich_feature}
Assume $d \geq \Omega(\log(B_W  /\epsilon))$.
\end{assumption}

First we introduce a notation from \cite{zsj+17}.
\begin{definition}\label{def:outer_product}
Define outer product $\Tilde{\otimes}$ as follows. For a vector $v \in \R^d$ and an identity matrix $I \in \R^{d \times d}$, 
$$v \Tilde{\otimes} I = \sum_{j=1}^d [ v \otimes e_j \otimes e_j + e_j \otimes v \otimes e_j + e_j \otimes e_j \otimes v].$$
For a symmetric rank-$r$ matrix $M = \sum_{i=1}^r s_i v_i v_i^\top$ and an identity matrix $I \in \R^{d \times d}$, $$M \Tilde{\otimes} I = \sum_{i = 1}^r s_i \sum_{j=1}^d \sum_{l=1}^6 A_{l,i,j}$$
where $A_{1,i,j} = v_i \otimes v_i \otimes e_j \otimes e_j$, $A_{2,i,j} = v_i  \otimes e_j \otimes v_i \otimes e_j$, $A_{3,i,j} = e_j \otimes v_i \otimes v_i \otimes  e_j$, $A_{4,i,j} = v_i \otimes e_j \otimes e_j \otimes v_i $, $A_{5,i,j} = e_j \otimes v_i \otimes e_j \otimes v_i$, $A_{6,i,j} = e_j \otimes e_j \otimes v_i \otimes v_i $.
\end{definition}
Now we define some moments.
\begin{definition}\label{def:moments}
Define $M_1,M_2,M_3,M_4,m_{1,i},m_{2,i},m_{3,i},m_{4,i}$ as follows:
\begin{align*}
    M_1 := &~ \E[y \cdot x]\\
    M_2 := &~ \E[y \cdot ( x \otimes x - I)]\\
    M_3 := &~ \E[y \cdot (x^{\otimes 3} - x \Tilde{\otimes} I)]\\
    M_4 := &~ \E[y \cdot (x^{\otimes 4} - (x \otimes x) \Tilde{\otimes} I + I\Tilde{\otimes} I )]\\
    \gamma_j(x) := &~ \E_{z \sim \mathcal{N}(0,1)}[\sigma(x \cdot z)z^j], \forall j \in 0,1,2,3,4\\
    m_{1,i} := &~ \gamma_1(\|w_i\|)\\
    m_{2,i} := &~ \gamma_2(\|w_i\|) - \gamma_0(\|w_i\|)\\
    m_{3,i} := &~ \gamma_3(\|w_i\|) - 3 \gamma_1(\|w_i\|)\\
    m_{4,i} := &~ \gamma_4(\|w_i\|) + 3 \gamma_0(\|w_i\|) - 6 \gamma_2 (\|w_i\|)
\end{align*}
The above expectations are all with respect to $x \sim \mathcal{N}(0,I_d)$ and $y = \langle v, \sigma(W x) \rangle$.
\end{definition}


\begin{assumption}[Assumption 5.3 of \cite{zsj+17}]\label{asp:nonvanish_moments}
Assume the activation function satisfies the followings:
\begin{itemize}
    \item If $M_i \neq 0$, then $m_{j,i} \neq 0$ for all $i \in [k]$.
    \item At least one of $M_3$ and $M_4$ is not zero.
    \item If $M_1 = M_3 = 0$, then $\sigma(z)$ is an even function.
    \item If $M_2 = M_4 = 0$, then $\sigma(z)$ is an odd function.
\end{itemize}
\end{assumption}

Now we state the theoretical result that recovers neural networks from noisy data.
\begin{theorem}[Neural network recovery from noisy data]\label{thm:mom_recover_nn}
Let the activation function $\sigma$ satisfies Assumption~\ref{asp:1st_derivative_poly_bound} and Assumption~\ref{asp:nonvanish_moments}. Let $\kappa$ be the condition number of $W$. Given $n$ samples from Eq.~\eqref{eq:noisy_model}. For any $t  \in (0,1)$ and $\epsilon \in (0,1)$ such that Assumption~\ref{asp:rich_feature} holds, if
$$n \geq \epsilon^{-2} \cdot d \cdot \poly(\kappa,k,\vartheta,\log (d/t))$$
then there exists an algorithm that takes $\Tilde{O}(n k d)$ time and returns a matrix $\hat W \in \R^{k \times d}$ and a vector $\hat v \in \{\pm 1\}^k$ such that with probability at least $1-t$, 
\begin{align*}
    \|\hat W - W \|_F \leq \epsilon \cdot \poly(k,\kappa) \cdot \|W\|_F, \text{ and } \hat v = v.
\end{align*}
\end{theorem}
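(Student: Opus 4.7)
\textbf{Proof plan for Theorem~\ref{thm:mom_recover_nn}.} My plan is to follow the method-of-moments pipeline of \cite{zsj+17}, adapted in two ways: first, to accommodate the additive sub-Gaussian noise $\xi$ in the response, and second, to absorb the truncation $\mathbbm{1}(\|x\|\le B)$ introduced by the exploration scheme. The high-level recipe has four stages: (i) form empirical versions $\widehat M_1, \widehat M_2, \widehat M_3, \widehat M_4$ of the moment tensors in Definition~\ref{def:moments}; (ii) extract the column space (and in particular a non-orthogonal basis for $\{w_i/\|w_i\|\}$) from a suitable pair of moments via tensor decomposition; (iii) recover the signs $v_i$ and the magnitudes $\|w_i\|$ through a one-dimensional inversion of $\gamma_j(\|w_i\|)$; and (iv) use the initialization obtained so far to enter the basin of local strong convexity of the empirical squared loss and finish with a few iterations of gradient descent. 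Assumptions~\ref{asp:1st_derivative_poly_bound}, \ref{asp:rho>0}, \ref{asp:hessian_activation}, \ref{asp:nonvanish_moments} feed into stages (ii)--(iv) exactly as in \cite{zsj+17}.

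For stage (i) I first argue the truncation is nearly harmless. Since $\|x\|_2^2$ is $\chi^2_d$-distributed and $B = \Omega(d\,\poly\log d)$, we have $\PP(\|x\|_2 > B) \le \exp(-\Omega(d\,\polylog d))$; combined with the polynomial growth of $\sigma$ (Assumption~\ref{asp:1st_derivative_poly_bound} gives $|\sigma(z)| \lesssim L_1 |z|^{p+1}$), the bias $\|\E[y(x^{\otimes j}-\text{Hermite correction})\mathbbm{1}(\|x\|>B)]\|$ is negligible compared to any $\poly(d,B_W,\kappa)\,\epsilon$ target. For concentration, I split $y = \langle v,\sigma(Wx)\rangle + \xi$: the signal part is polynomially bounded on the truncated event, so its contribution to each moment concentrates by matrix/tensor Bernstein with rate $n^{-1/2}\poly(d,k,B_W)$; the noise part has zero conditional mean so its contribution is a sum of zero-mean sub-Gaussian products of independent factors, for which Hanson--Wright/decoupled-sub-Gaussian tensor concentration yields an $n^{-1/2}\vartheta\,\poly(d,k,\log(1/t))$ bound. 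Taking $n \gtrsim \epsilon^{-2} d \,\poly(\kappa,k,\vartheta,\log(d/t))$ gives $\|\widehat M_j - M_j\| \le \epsilon$ in the appropriate operator/tensor norm simultaneously for $j = 1,2,3,4$.

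Stage (ii) is the direction recovery. By Assumption~\ref{asp:nonvanish_moments} at least one of $M_3, M_4$ is nonzero, and its CP rank equals the span of $\{w_i/\|w_i\|\}$ with coefficients $v_i m_{j,i}\|w_i\|^j$. I whiten using $M_2$ (or, if $M_2=0$, the orthogonal-basis variant using $M_4$-contractions as in \cite{zsj+17}), then run robust tensor power method to recover the whitened components; the perturbation guarantees of Anandkumar--Ge--Hsu--Kakade--Telgarsky lift the $\epsilon$ moment error to $\|\widehat w_i/\|\widehat w_i\| - w_i/\|w_i\|\|_2 \le \epsilon\,\poly(k,\kappa)$. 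Stage (iii) then solves a scalar equation for $\|w_i\|$ using $\widehat M_1$ (or $\widehat M_2$) projected onto $\widehat w_i/\|\widehat w_i\|$ and inverts via the monotonicity of $\gamma_j$; the signs $v_i$ are read off from the resulting signed magnitudes, using Assumption~\ref{asp:nonvanish_moments} to disambiguate even/odd activations.

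The step I expect to be the main obstacle is propagating the noise variance $\vartheta^2$ cleanly through the tensor concentration so that the final sample complexity has the advertised $\epsilon^{-2}$ (rather than a worse power) and depends only polynomially on $\kappa, k$. The subtlety is that the degree-$4$ tensor $\E[\xi\cdot(x^{\otimes 4} - (x\otimes x)\tilde\otimes I + I\tilde\otimes I)]$ has entries of widely varying variance, and naive union-bound over coordinates loses polynomial factors in $d$; one must instead apply a symmetric tensor Bernstein inequality in operator norm, or reduce via random contractions (multiplying by a random Gaussian vector on three modes) as in \cite{zsj+17} to bring the estimation down to matrix Bernstein, where the $\epsilon^{-2} d$ rate is tight. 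Once $(\widehat W, \widehat v)$ lies within $\poly(\kappa,k)^{-1}$ of $(W,v)$, a final gradient descent on the truncated empirical square loss converges linearly to an $\epsilon$-accurate solution by the local strong convexity established in \cite{zsj+17} (using Assumptions~\ref{asp:rho>0} and \ref{asp:hessian_activation}); the truncation only inflates the local Hessian bounds by a $1 + o(1)$ factor, so the final error bound $\|\widehat W - W\|_F \le \epsilon\,\poly(k,\kappa)\|W\|_F$ and $\widehat v = v$ hold with probability $\ge 1-t$.
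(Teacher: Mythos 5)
Your pipeline matches the paper's through stage (iii): method of moments, random contractions to reduce tensor estimation to matrix-Bernstein-friendly quantities, subspace/direction recovery via tensor decomposition, and linear-system solves for signs and magnitudes. You also correctly identify the contraction trick as the key to avoiding the $d^2$ sample penalty one would pay by trying to concentrate the full fourth-order moment tensor in operator norm. The only real difference within stages (i)--(iii) is that the paper does not whiten with $M_2$; it extracts the column span $V$ of $W$ by power iteration on the random contraction $P_2 = M_{j_2}(I,I,\alpha,\ldots,\alpha)$ (for the lowest nonzero $j_2 \ge 2$), then decomposes the compressed $k\times k\times k$ tensor $R_3 = P_3(V,V,V)$ via the Kuleshov--Chaganty--Liang procedure, and finally fits $\|w_i\|$ and $v_i$ by least squares against $Q_1, Q_2$ (Lemmas~\ref{lem:solution_system_1}--\ref{lem:solution_system_2}). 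This sidesteps the $M_2 = 0$ corner case you only gesture at in passing.

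Stage (iv) is where you and the paper genuinely part ways, and where you would run into trouble. The paper proves Theorem~\ref{thm:mom_recover_nn} \emph{without} any gradient-descent refinement: the moment perturbation bounds (Lemmas~\ref{lem:error_P2}--\ref{lem:error_Q1}) combined with the perturbed linear-system and tensor-decomposition guarantees already give $\|\hat W - W\|_F \le \epsilon\,\poly(k,\kappa)\,\|W\|_F$ directly. The gradient-descent finish you propose is the content of the \emph{other} recovery result (Theorem~\ref{thm:exact_nn}, noiseless exact recovery), not this one, and it relies on Assumptions~\ref{asp:rho>0} and~\ref{asp:hessian_activation}, which the statement of Theorem~\ref{thm:mom_recover_nn} deliberately does not impose (only~\ref{asp:1st_derivative_poly_bound} and~\ref{asp:nonvanish_moments} are assumed). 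Moreover, in the noisy model~\eqref{eq:noisy_model} the minimizer of the truncated empirical loss $L_n$ is no longer $W$, so even granting local strong convexity, gradient descent would converge to a biased point, and you would need an additional uniform deviation argument between $\nabla L_n$ and $\nabla L$ to quantify that bias --- extra machinery that the theorem's claimed $\tilde O(nkd)$ runtime and accuracy $\epsilon\,\poly(k,\kappa)$ do not call for, because the method-of-moments estimate alone already attains them under the stated sample complexity.
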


The algorithm and proof are shown in Appendix~\ref{sec:mom}. By Assumption~\ref{asp:1st_derivative_poly_bound}, the following corollary is therefore straightforward.

\begin{corollary}
In the same setting as Theorem~\ref{thm:mom_recover_nn}. For any $t \in (0,1)$ and suppose $\|W\|_F \leq B_W$ and Assumption~\ref{asp:rich_feature} holds. Given $n$ samples from Eq.~\eqref{eq:noisy_model}. If
$$n \geq \epsilon^{-2} \cdot d \cdot \poly(\kappa,k,\log (d/t), B_W, B_{\phi}, \vartheta)$$
then there exists an algorithm that takes $\Tilde{O}(n k d)$ time and outputs a matrix $\hat W \in \R^{k \times d}$ and a vector $\hat v \in \{\pm 1\}^k$ such that with probability at least $1-t$, for all $\|x\|_2 \leq B_\phi$ 
\begin{align*}
    |\langle  \hat v, \sigma(\hat W x) \rangle - \langle  v, \sigma( W x) \rangle | \leq \epsilon.
\end{align*}
In particular, when $B_\phi = O(d\cdot \poly \log d) $ the following sample complexity suffices
\begin{align*}
    n \geq \epsilon^{-2} \cdot d^{O(1+p)} \cdot \poly(\kappa,k,\log (d/t), B_W,  \vartheta).
\end{align*}
\end{corollary}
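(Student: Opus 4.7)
The plan is to reduce the uniform function-value bound to the parameter-recovery guarantee of Theorem~\ref{thm:mom_recover_nn} by exploiting the polynomial Lipschitz structure of $\sigma$ guaranteed by Assumption~\ref{asp:1st_derivative_poly_bound}. The key observation is that the corollary only needs an $\ell_\infty$-on-inputs bound, so we do not need to reprove moment recovery; we just need to propagate a small Frobenius error $\|\hat W - W\|_F$ through one layer of $\sigma$ over the ball $\{x : \|x\|_2 \le B_\phi\}$, and then aggregate across the $k$ neurons (whose signs $\hat v = v$ are recovered exactly).

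First, I would invoke Theorem~\ref{thm:mom_recover_nn} with an internal target precision $\epsilon'$ to be fixed later, obtaining $\hat v = v$ and $\|\hat W - W\|_F \le \epsilon' \cdot \poly(k,\kappa) \cdot \|W\|_F \le \epsilon' \cdot \poly(k,\kappa) \cdot B_W$ with the stated probability, at a sample cost $n \gtrsim (\epsilon')^{-2} d \, \poly(\kappa,k,\vartheta,\log(d/t))$. Next, fix any $x$ with $\|x\|_2 \le B_\phi$ and write
\[
\langle \hat v, \sigma(\hat W x)\rangle - \langle v, \sigma(W x)\rangle = \sum_{i=1}^{k} v_i \bigl( \sigma(\hat w_i^\top x) - \sigma(w_i^\top x) \bigr),
\]
since $\hat v = v$. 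For each coordinate, the mean value theorem together with Assumption~\ref{asp:1st_derivative_poly_bound} gives $|\sigma(a) - \sigma(b)| \le L_1 \max(|a|,|b|)^p |a-b|$. Using $|\hat w_i^\top x|, |w_i^\top x| \le (\|\hat w_i\|_2 + \|w_i\|_2) B_\phi \lesssim B_W B_\phi$ (valid once $\epsilon'$ is small enough that $\|\hat W\|_F \le 2 B_W$), and $|\hat w_i^\top x - w_i^\top x| \le \|\hat w_i - w_i\|_2 B_\phi$, we get
\[
\bigl| \sigma(\hat w_i^\top x) - \sigma(w_i^\top x) \bigr| \le L_1 (B_W B_\phi)^p \cdot \|\hat w_i - w_i\|_2 \cdot B_\phi .
\]

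Summing over $i$ via Cauchy--Schwarz and $\|v\|_2 = \sqrt{k}$ yields
\[
\bigl| \langle \hat v, \sigma(\hat W x)\rangle - \langle v, \sigma(W x)\rangle \bigr| \le \sqrt{k} \cdot L_1 (B_W B_\phi)^p B_\phi \cdot \|\hat W - W\|_F \le \epsilon' \cdot L_1 \cdot \poly(k,\kappa) \cdot B_W^{p+1} B_\phi^{p+1}.
\]
This bound is uniform in $x$ over the ball of radius $B_\phi$, so it is exactly the $\ell_\infty$-type guarantee we need. Choosing $\epsilon' = \epsilon / \bigl( L_1 \cdot \poly(k,\kappa) \cdot B_W^{p+1} B_\phi^{p+1}\bigr)$ makes the right-hand side at most $\epsilon$, and substituting into the sample complexity of Theorem~\ref{thm:mom_recover_nn} absorbs the extra $B_W, B_\phi$ factors into the $\poly(\cdot)$ term, giving exactly the stated bound $n \ge \epsilon^{-2} \cdot d \cdot \poly(\kappa,k,\log(d/t), B_W, B_\phi, \vartheta)$. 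The runtime $\tilde O(nkd)$ is inherited verbatim from the parameter-recovery algorithm. Finally, plugging $B_\phi = O(d \cdot \polylog d)$ into this expression, the $B_\phi^{O(p+1)}$ factor collapses to $d^{O(1+p)}$ up to polylog factors, giving the second displayed bound.

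The only mildly delicate step is keeping the exponent of $B_W B_\phi$ honest: because $\sigma'$ is only polynomially (not uniformly) bounded, the Lipschitz constant on the relevant ball scales like $(B_W B_\phi)^p$, which is where the $d^{O(1+p)}$ in the final clause comes from. No new probabilistic argument is required beyond what Theorem~\ref{thm:mom_recover_nn} already supplies.
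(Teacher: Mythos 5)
Your proposal is correct and is exactly the argument the paper has in mind: the paper itself only remarks that the corollary "is therefore straightforward" from Assumption~\ref{asp:1st_derivative_poly_bound}, and your proof fills that in by invoking Theorem~\ref{thm:mom_recover_nn} at a rescaled precision $\epsilon'$ and propagating the Frobenius error through the locally Lipschitz bound $|\sigma(a)-\sigma(b)|\le L_1\max(|a|,|b|)^p|a-b|$ on the ball of radius $B_\phi$, with the $B_W^{O(p)}B_\phi^{O(p)}$ factors absorbed into $\poly(B_W,B_\phi)$ and producing the $d^{O(1+p)}$ clause when $B_\phi = O(d\cdot\polylog d)$. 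No gaps beyond the harmless technicality that for ReLU one should integrate $\sigma'$ (a.e. defined) rather than literally apply the mean value theorem.
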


Now we state the theoretical result that precisely recovers neural networks from noiseless data. The proof and method are shown in Appendix~\ref{sec:gd+mom}.

\begin{theorem}[Exact neural network recovery from noiseless data]\label{thm:exact_nn}
Let the activation function satisfies Assumption~\ref{asp:1st_derivative_poly_bound} and Assumption~\ref{asp:nonvanish_moments},  Assumption~\ref{asp:rho>0} and Assumption~\ref{asp:hessian_activation}(b).  Given $n$ samples from Eq.~\eqref{eq:noiseless_model}. For any $ t \in (0,1)$, suppose $d \geq \Omega(\log (B_W / \lambda))$ and
\begin{align*}
    n \geq d \cdot \poly(\kappa, k,\lambda,\log (d/t)),
\end{align*}
then there exists an algorithm that output exact $W$ and $v$ with probability at least $1-t$.
\end{theorem}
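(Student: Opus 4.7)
The plan is a two-stage ``initialize then refine'' strategy, mirroring the pipeline of \cite{zsj+17}. Stage one produces a warm start via the method of moments of Theorem~\ref{thm:mom_recover_nn}. Since the only stochasticity comes from the truncation indicator $\mathbbm{1}(\|x\|\le B)$ with $B\ge \Omega(d\,\polylog d)$, the effective ``noise'' variance $\vartheta$ in the moment estimators is sub-exponentially small in $d$ and can be absorbed into a $\log(d/t)$ factor. Setting the target moment-recovery accuracy to some $\delta$ depending only on $\kappa,k,\lambda$ (specifically, small enough to land inside the basin-of-attraction identified below) and invoking Theorem~\ref{thm:mom_recover_nn} gives $\hat v_0 = v^\ast$ exactly (the signs are recovered) and $\|\hat W_0 - W^\ast\|_F \le \delta\cdot \poly(k,\kappa)\cdot\|W^\ast\|_F$ using $n_1\ge d\cdot \poly(\kappa,k,\lambda,\log(d/t))$ noiseless samples.

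Stage two fixes $\hat v_0 = v^\ast$ and refines $W$ by gradient descent on the truncated empirical squared loss $\hat L(W) = \tfrac{1}{n_2}\sum_i (y_i - \langle v^\ast,\sigma(W x_i)\rangle)^2$, started at $\hat W_0$. The key analytic ingredient is local strong convexity: Assumption~\ref{asp:rho>0} together with the condition-number control and $\prod_i s_i(W)/s_{\min}(W)\le \lambda$ lower-bounds the smallest eigenvalue of the population Hessian at $W^\ast$ by a $1/\poly(\kappa,k,\lambda)$ quantity, while Assumption~\ref{asp:hessian_activation}(b) makes the Hessian well-defined off a measure-zero set. Matrix-Bernstein-type concentration then shows that with $n_2 \ge d\cdot \poly(\kappa,k,\lambda,\log(d/t))$ samples the empirical Hessian is likewise strongly convex in a ball of radius $\Omega(\delta)$ around $W^\ast$, so gradient descent converges at a linear rate toward $W^\ast$ inside this basin.

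To upgrade linear convergence to \emph{exact} recovery I exploit Assumption~\ref{asp:hessian_activation}(b): $\sigma''\equiv 0$ except at finitely many points, so the per-sample loss is piecewise quadratic in $W$, determined by the sign/activation pattern $\{\mathbbm{1}[w_j\cdot x_i>0]\}_{i,j}$. Anti-concentration of the Gaussian projections $w_j^\ast\cdot x$ shows that once $\|W-W^\ast\|_F$ is sufficiently small (quantitatively controlled by $\lambda,\kappa$), with high probability the activation patterns of $W$ on every one of the $n_2$ samples coincide with those of $W^\ast$. On this common linear piece $\hat L$ is a genuine quadratic whose Hessian is the empirical Gram matrix of the corresponding gated features; since $n_2\gtrsim dk$ i.i.d.\ Gaussian vectors are almost surely linearly independent in each gated subspace, this quadratic has a unique minimizer which, being the noiseless interpolant, must equal $W^\ast$. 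One final projected step (equivalently, solving the linear system on the frozen activation pattern) returns $W^\ast$ exactly.

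The main obstacle I foresee is bridging the approximate moment-based initialization to exact recovery without paying an $\exp(k)$ price in the sample complexity. Two sub-difficulties arise: (i) quantifying the basin-of-attraction radius in terms of $\kappa,k,\lambda$ only, rather than an implicit exponential in $k$ that would come from naive worst-case Gaussian anti-concentration across all $k$ neurons; and (ii) guaranteeing that activation-pattern freezing holds \emph{uniformly} over all $n_2$ samples with probability $1-t$. Both are handled by combining anti-concentration of $w_j^\ast\cdot x$ at scale $\|\hat W - W^\ast\|_F/s_{\min}(W^\ast)$ with a union bound over the $n_2 k$ indicator events, which contributes only an additional $\log(d/t)$ factor. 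The hypothesis $d\ge \Omega(\log(B_W/\lambda))$ is precisely what ensures that the Gaussian tail losses from the $\mathbbm{1}(\|x\|\le B)$ truncation do not swamp this anti-concentration estimate.
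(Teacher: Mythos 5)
Your two-stage structure (moments-based warm start, then gradient descent on the empirical $\ell_2$ loss) and the core local-strong-convexity argument are indeed what the paper does: Theorem~\ref{thm:mom_recover_nn} supplies an approximate $\hat W$ with exact signs $\hat v = v$, Lemma D.3 of \cite{zsj+17} gives $\nabla^2 L(W)\succeq \Omega(\rho(s_k)/\lambda)I$, and Lemma~\ref{lem:concentration_hessian} plus Lemma~\ref{lem:empirical_lipschitz_hessian} transfer this to $L_n$ uniformly in a ball of radius $\Theta(s_1^p d^{-(p+1)/2})$ around $W$.

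The place where you diverge, and where the proposal has a real gap, is the final step ``upgrading linear convergence to exact recovery.'' The paper does not need the activation-pattern argument at all: once $L_n$ is strongly convex in the ball $B$ around $W$, it has a unique minimizer in $B$; since the data are noiseless we have $L_n(W)=0\le L_n(W')$ for all $W'$, so $W$ is a global minimizer of $L_n$, hence it \emph{is} the unique minimizer in $B$. The algorithm need only land anywhere in $B$ (by the moment method) and then return the minimizer of the strongly convex restricted problem, which is $W$ exactly. This is the entirety of the paper's exactness argument.

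Your activation-pattern-freezing route, if pursued quantitatively, creates a circularity. The required precision is governed by the small-ball estimate $\Pr_{x\sim\gN(0,I_d)}[\,|\bar w_j^{\ast\top} x|\le\delta\,]\lesssim\delta$, which decays only \emph{linearly} in $\delta$, not exponentially. A union bound over the $n_2 k$ events $\{|\bar w_j^{\ast\top}x_i|\le\delta\}$ therefore costs a multiplicative factor $n_2 k$, not a $\log$ factor: to make the failure probability at most $t$ one needs $\delta\lesssim t/(n_2 k)$. Propagating this through $|(\hat w_j-w_j^\ast)^\top x_i|\le\|\hat w_j-w_j^\ast\|\,\|x_i\|$ with $\|x_i\|\lesssim\sqrt{d}\,\polylog d$ forces the moment-stage accuracy to scale like $\epsilon\lesssim t/(n_2 k\sqrt{d})$, which by Theorem~\ref{thm:mom_recover_nn} demands $n_1\gtrsim d/\epsilon^2\gtrsim n_2^2 k^2 d^2/t^2$. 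Taking $n_1+n_2=n$ this has no finite solution, so the claim that the basin radius is ``quantitatively controlled by $\lambda,\kappa$ only'' and that the union bound ``contributes only an additional $\log(d/t)$ factor'' is not correct. Dropping that step and replacing it with the paper's one-line observation ($L_n(W)=0$ plus uniqueness of the minimizer in the strong-convexity ball) fixes the proof and matches the stated $n\ge d\cdot\poly(\kappa,k,\lambda,\log(d/t))$ bound.
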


\subsubsection{Recover neural networks from noisy data}\label{sec:mom}

In this section we prove Theorem~\ref{thm:mom_recover_nn}. Denote $W = [w_1,\cdots,w_k]^\top$ where $w_i \in \R^d$ and $\Bar{w_i} = w_i/\|w_i\|_2$.

\begin{definition}\label{def:P_2_P_3}
Given a vector $\alpha \in \R^d$. Define $P_2 := M_{j_2}(I,I,\alpha,\cdots,\alpha)$ where $j_2 = \min \{j \geq 2: M_j \neq 0\}$ and $P_3 := M_{j_3}(I,I,I,\alpha,\cdots,\alpha)$ where $j_3 = \min \{j \geq 3: M_j \neq 0\}$.
\end{definition}

The method of moments is presented in Algorithm~\ref{alg:mom}. Here we sketch its ideas, and refer readers to \cite{zsj+17} for thorough explanations. There are three main steps. In the first step, it computes the span of the rows of $W$. By power method, Line~\ref{lin:power_method_V} finds the top-$k$ eigenvalues of $CI + \hat P_2$ and $CI- \hat P_2$. It then picks the largest $k$ eigenvalues from $CI + \hat P_2$ and $CI- \hat P_2$, by invoking $\textsc{TopK}$ in Line~\ref{lin:topk}. Finally it orthogonalizes the corresponding eigenvectors in Line~\ref{lin:orthogonalize_V} and finds an orthogonal matrix $V$ in the subspace spanned by $\{\Bar{w}_1,\dots,\Bar{w}_k\}$.

In the second step, the algorithm forms third order tensor $R_3 = P_s(V,V,V) \in \R^{k \times k \times k}$ and use the robust tensor decomposition method in \cite{kcl15} to find $\hat u$ that approximates $s_i V^\top \Bar{w}_i$ with unknown signs $s_i$.
In the third step, the algorithm determines $s$, $v$ and $w_i, i \in [k]$. Since the activation function is homogeneous, we assume $v_i \in \{\pm 1\}$ and $m_{j,i} = c_j \|w_i\|^{p+1}$ for universal constants $c_j$ without loss of generality. For illustration, we define $Q_1$ and $Q_2$ as follows.
\begin{align}
& Q_1 = M_{l_1}(I,\underbrace{ \alpha,\cdots, \alpha}_{(l_1 - 1)~ \alpha\text{'s}}) = \sum_{i=1}^k v_i c_{l_1} \| w_i\|^{p+1} ( \alpha^\top \ov{ w}_i)^{{l_1-1}} \ov{ w}_i,\label{eq:def_Q1} \\
& Q_2 = M_{l_2}(V,V,\underbrace{ \alpha,\cdots, \alpha}_{(l_2 - 2)~ \alpha\text{'s}}) = \sum_{i=1}^k v_i c_{l_2} \| w_i\|^{p+1} ( \alpha^\top \ov{ w}_i)^{{l_2-2}} (V^\top\ov{ w}_i)(V^\top\ov{ w}_i)^\top,\label{eq:def_Q2}
\end{align}
where $l_1 \geq 1$ such that $M_{l_1} \neq 0$ and $l_2 \geq 2$ such that $M_{l_2} \neq 0$ are specified later.
Then the solutions of the following linear systems
\begin{align}\label{eq:zr_star}
z^*  = \argmin_{z \in \mathbb{R}^k } \left\| \sum_{i=1}^k z_i s_i\ov{ w}_i  - Q_1 \right\|, ~~
 r^*  = \argmin_{ r \in \mathbb{R}^k } \left\|\sum_{i=1}^k r_i V^\top \ov  w_i (V^\top \ov  w_i)^\top - Q_2 \right\|_F.
\end{align}
are the followings
\begin{align*}
 z^*_i & = v_is_i^{l_1}   c_{l_1} \| w_i\|^{p+1} ( \alpha^\top s_i\ov{ w}_i)^{{l_1-1}}, ~~
r_i = v_i s_i^{l_2} c_{l_2} \| w_i \|^{p+1} ( \alpha^\top s_i\ov{ w}_i)^{{l_2-2}} .
\end{align*}
When $c_{l_1}$ and $c_{l_2} $ do not have the same sign, we can recover $v_i$ and $s_i$ by $v_i = \sign({r}^{\ast}_i c_{l_2} ), s_i = \sign(v_i {z}^\ast_i c_{l_1})$, and recover $w_i$ by
\begin{align*}
    {w}_i = \left(\left|\frac{z^\ast_i}{c_{l_1}(\alpha^\top s_i\ov{ w}_i )^{l_1-1})}\right|\right)^{1/(p+1)} \ov{ w}_i.
\end{align*}

In Algorithm~\ref{alg:mom}, we use $V \wh{u}_i$ to approximate $s_i \ov  w_i$, and use moment estimators $\wh{Q}_1$ and $\wh{Q}_2$ to approximate $Q_1$ and $Q_1$. Then the solutions $\wh{z},  \wh{r} $ to the optimization problems in Line~\ref{lin:linear_system} should approximate $z^*$ and $r^*$, due to robustness for solving linear systems. As such, the outputs $\wt{v},\wt{W}$ approximately recover the true model parameters.

\begin{algorithm}[H]\caption{Using method of moments to recover neural network parameters}\label{alg:mom}
\begin{algorithmic}[1]
\Procedure{$\textsc{NeuralNetNoisyRecovery}$}{$S = \{(x_i,y_i): i \in [n]\}$}
\State Choose $\alpha$ to be a random unit vector
\State Partition $S$ into $S_1,S_2,S_3,S_4$ of equal size
\State $\hat P_2 \leftarrow \E_{S_1}[P_2]$, $C \leftarrow 3\|P_2\|$, $T \leftarrow \log(1/\epsilon)$ \label{lin:P_2}
\State Choose $\hat V_1^{(0)}, \hat V_1^{(0)} \in \R^{d \times  k}$ to be random matrices
\Comment{Estimate subspace $V$}
\For{$t = 1,\dots,T$}
    \State $\hat V_1^{(t)} \leftarrow \mathrm{QR}(C \hat V_1^{(t-1)} + \hat P_2 \hat V_1^{(t-1)}), \hat V_2^{(t)} \leftarrow \mathrm{QR}(C \hat V_2^{(t-1)} - \hat P_2 \hat V_2^{(t-1)})$ \label{lin:power_method_V}
\EndFor
\For{j = 1,2}
    \State $\hat V_1^{(T)} \leftarrow [\hat V_{j,1},\cdots, \hat V_{j,k}]$
    \For{$i \in [k]$}
        \State $\lambda_{j,i} \leftarrow |\hat V_{j,i} \hat P_2 \hat V_{j,i}|$ \label{lin:abs_eigenvalues_V}
    \EndFor
\EndFor
\State $\pi_1,\pi_2,k_1,k_2 \leftarrow \textsc{TopK}(\lambda,k)$\label{lin:topk}
\For{j = 1,2}
    \State $V_j \leftarrow [\hat V_{j,\pi_j(1)},\cdots, \hat V_{j,\pi_j(k_j)}]$
\EndFor
\State $\Tilde{V}_2 \leftarrow \textsc{QR}((I-V_1V_1^\top) V_2)$, $V \leftarrow [V_1,\Tilde{V}_2]$ \label{lin:orthogonalize_V}
\State $\hat R_3 \leftarrow \E_{S_2}[P_3(V,V,V)]$, $\{\hat u_i\}_{i \in [k]} \leftarrow \textsc{TensorDecomposition}(\hat R_3)$ \label{lin:R_3} \Comment{Learn $s_iV^\top \Bar{w}_i$}
\If{$M_1 = M_3 = 0$}
    \State $l_1,l_2 = \min \{j \in \{2,4\}: M_j \neq 0\}$
\ElsIf{$M_2 = M_4 = 0$}
    \State $l_1 \leftarrow \min \{j \in \{1,3\}: M_j \neq 0\}$, $l_2 \leftarrow 3$
\Else
    \State $l_1 \leftarrow \min \{j \in \{1,3\}: M_j \neq 0\}$, $l_2 = \min \{j \in \{2,4\}: M_j \neq 0\}$
\EndIf
\State $\hat Q_1 \leftarrow \E_{S_3}[Q_1]$, $\hat Q_2 \leftarrow \E_{S_4}[Q_2]$ \label{lin:Q}
\State $\hat z \leftarrow \argmin_z \|\sum_{i=1}^k z_i V \hat u_i - \hat Q_1\|$, $\hat r \leftarrow \argmin_r \|\sum_{i=1}^k r_i \hat u_i \hat u_i - \hat Q_2\|_F$\label{lin:linear_system}
\For{$i = 1,\dots,k$} \Comment{Learn parameters $v,W$}
    \State ${\hat v}_i \leftarrow \mathrm{sign}(\hat r_i c_{l_2})$, ${\hat s}_i \leftarrow \mathrm{sign}({\hat v}_i \hat z_i c_{l_1})$
    \State ${\hat w}_i  \leftarrow {\hat s}_i(|\frac{\hat z_i}{c_{l_1}(\alpha^\top V \hat u_i)^{l_1-1})}|)^{1/(p+1)}V\hat u_i$ \label{lin:wi0}
\EndFor
\State ${\hat W} \leftarrow [{\hat w}_i ,\cdots,{\hat w}_k ]$
\State {\bf Return} $({\hat v},{\hat W})$
\EndProcedure

\end{algorithmic}
\end{algorithm}

Since Algorithm~\ref{alg:mom} carries out the same computation as \cite{zsj+17}, the computational complexity is the same. The difference of sample complexity comes from the noise $\xi$ in the model and the truncation of standard Gaussian. The proof entails bounding the error in estimating $P_2$ in Line~\ref{lin:P_2}, $R_3$ in Line~\ref{lin:R_3} and $Q_1,Q_2$ in Line~\ref{lin:Q}. In the following, unless further specified, the expectations are all with respect to $x \sim \mathcal{N}(0,I_d)$ and $y \sim (\langle v, \sigma(W x) \rangle + \xi) \cdot \mathbbm{1}(\|x\|\leq B)$.

\begin{lemma}\label{lem:error_P2}
Let $\hat P_2$ be computed in Line~\ref{lin:P_2} of Algorithm~\ref{alg:mom} and $P_2$ defined in Definition~\ref{def:P_2_P_3}. Suppose $m_0=\min_{i\in[k]} \{|m_{j_2,i} |^2 (\ov w_i^{\top}  \alpha)^{2(j_2-2)} \}$ and
\begin{align*}
|S| \gtrsim d \cdot  \poly(\kappa, \vartheta, \log (d/t))  / (\epsilon^2 m_0)
\end{align*}
then with probability at least $1-t$,
\begin{align*}
    \|P_2 - \hat P_2\| \lesssim \epsilon \sum_{i = 1}^k|v_i m_{j_2,i}(\Bar{w_i}^\top \alpha)^{j_2-2}| + \epsilon.
\end{align*}
\end{lemma}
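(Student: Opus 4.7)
The plan is to split $P_2 - \hat P_2$ into a truncation–bias part and a stochastic concentration part, then control each separately via Gaussian tail estimates and matrix Bernstein. Let $h_{j_2}(x,\alpha)$ denote the degree–$j_2$ Hermite‐type matrix kernel in $x$ (contracted against $j_2-2$ copies of $\alpha$) so that $\E_{x\sim\mathcal{N}(0,I_d)}\bigl[\,\langle v,\sigma(Wx)\rangle\cdot h_{j_2}(x,\alpha)\bigr]=P_2$ in the untruncated model (this is the identity that defines $M_{j_2}$ in Definition~\ref{def:moments}). Write
\begin{align*}
\hat P_2 - P_2 \;=\; \underbrace{\hat P_2 - \E[\hat P_2]}_{\text{concentration}} \;+\; \underbrace{\E[\hat P_2] - P_2}_{\text{bias from truncation + noise}}.
\end{align*}

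First I would bound the bias. Because $\xi$ has zero mean and is independent of $x$, the noise contributes nothing to $\E[\hat P_2]$ except through the truncation indicator $\mathbbm{1}(\|x\|\le B)$, and on the event $\{\|x\|\le B\}$ the indicator is just $1$. So $\E[\hat P_2]-P_2 = -\,\E\bigl[(\langle v,\sigma(Wx)\rangle + \xi)\,h_{j_2}(x,\alpha)\,\mathbbm{1}(\|x\|>B)\bigr]$. Using $B\gtrsim d\,\polylog(d)$, Gaussian concentration gives $\PP[\|x\|>B]\le \exp(-\Omega(d\,\polylog d))$, and Cauchy–Schwarz with Assumption~\ref{asp:1st_derivative_poly_bound} (polynomial growth of $\sigma$) plus sub‐Gaussianity of $\xi$ bounds the remaining expectation by $\poly(\|W\|_F,\vartheta,d)$. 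Hence the bias is super‐polynomially small in $d$, dominated by $\epsilon$ for any reasonable $\epsilon$.

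Second, I would apply matrix Bernstein to the summands $Y_i = y_i h_{j_2}(x_i,\alpha) - \E[y_i h_{j_2}(x_i,\alpha)]$. A preliminary clean‐up step truncates to a high‐probability event on which $|\xi_i|\le \vartheta\sqrt{\log(n/t)}$, $\|x_i\|\le O(\sqrt{d\log(n/t)})$, and $|\langle v,\sigma(Wx_i)\rangle|\le \poly(\|W\|_F,k,d,\log(n/t))$; on this event each $Y_i$ has operator norm at most $R=\poly(k,\vartheta,\|W\|_F,d,\log(n/t))$. The matrix variance $\sigma^2 = \|\sum_i \E[Y_iY_i^\top]\|$ is the central quantity: splitting $y=\langle v,\sigma(Wx)\rangle+\xi$, the signal part of $\E[y^2 h_{j_2}(x,\alpha)h_{j_2}(x,\alpha)^\top]$ can be computed in closed form via the same Hermite identities that produced $P_2$ and reduces, up to constant factors, to something of operator norm $\bigl(\sum_i |v_i m_{j_2,i}(\bar w_i^\top \alpha)^{j_2-2}|\bigr)^2$; the noise cross‐term vanishes by $\E[\xi]=0$ and the pure noise piece is bounded by $\vartheta^2\cdot\poly(d)$. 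Matrix Bernstein then gives
\begin{align*}
\|\hat P_2-\E[\hat P_2]\| \;\lesssim\; \sqrt{\tfrac{\sigma^2\log(d/t)}{n}} + \tfrac{R\log(d/t)}{n},
\end{align*}
and substituting $n\gtrsim d\cdot\poly(\kappa,\vartheta,\log(d/t))/(\epsilon^2 m_0)$ balances these so that the first term contributes $\epsilon\sum_i|v_i m_{j_2,i}(\bar w_i^\top\alpha)^{j_2-2}|$ (from the signal variance) and the second an additive $\epsilon$ slack (from the noise variance scaled by $m_0^{-1}$).

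The main obstacle will be the variance bookkeeping: to get the two‐term bound $\epsilon\sum_i|\cdot| + \epsilon$ rather than a worse absolute bound, the signal contribution to $\sigma^2$ must be expressed cleanly in terms of $\sum_i |v_i m_{j_2,i}(\bar w_i^\top\alpha)^{j_2-2}|$ using Hermite orthogonality, so that the ``relative error'' part of the RHS automatically scales with the eigenvalues of $P_2$. The factor $1/m_0$ in the sample complexity ultimately comes from requiring the additive $\epsilon$ slack (noise + truncation + cross terms) to remain smaller than the smallest useful eigenvalue $\sqrt{m_0}$ of $P_2$, which is what the downstream power‐method step in Algorithm~\ref{alg:mom} will need. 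The rest is routine: combine the bias and concentration bounds via a union bound over the high‐probability events identified above.
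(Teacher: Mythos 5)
Your high–level split into a truncation bias term and a stochastic concentration term, with the bias killed by the $e^{-\Omega(d\,\mathrm{polylog}\,d)}$ Gaussian tail and the concentration handled by matrix Bernstein, is the same skeleton the paper uses. But there are two concrete gaps that prevent your argument from producing the stated bound.

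First, your variance claim is wrong. You assert that the signal part of $\E[y^2\,h_{j_2}(x,\alpha)h_{j_2}(x,\alpha)^\top]$ ``reduces, up to constant factors, to something of operator norm $\bigl(\sum_i|v_i m_{j_2,i}(\bar w_i^\top\alpha)^{j_2-2}|\bigr)^2$'' via Hermite orthogonality. The Hermite identity gives the \emph{mean} $\E[\sigma(w^\top x)H_{j_2}(x)] = m_{j_2}\bar w^{\otimes j_2}$, but the \emph{second} moment involves $\sigma^2$, whose Hermite expansion is unrelated to $m_{j_2}^2$, and the contraction with $H_{j_2}H_{j_2}^\top$ introduces a dimension factor. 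The paper computes instead, for each component $i$, $\|\E[C_i C_i^\top]\| \lesssim (\|w_i\|^{p+1}+|\phi(0)|+\vartheta)^2 d$ — a variance of order $d$, not of order the squared mean $|m_{j_2,i}(\bar w_i^\top\alpha)^{j_2-2}|^2$. If your claimed variance were correct the bound would be much smaller than what the lemma states, which is a sign something is off.

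Second, and more importantly, you miss the per-neuron decomposition, which is what actually produces the form $\epsilon\sum_i|v_i m_{j_2,i}(\bar w_i^\top\alpha)^{j_2-2}|+\epsilon$ and the role of $m_0$. The paper writes $y_j=\sum_{i=1}^k v_i\sigma(w_i^\top x_j)+\xi_j$ and applies the adapted Bernstein inequality (Claim~\ref{cla:modified_bernstein_non_zero}) \emph{separately} to each summand $C_i(x_j)=\mathbbm{1}(\|x_j\|\le B)\,\sigma(w_i^\top x_j)\cdot(\text{moment tensor contraction})$. That claim delivers a \emph{relative} error $\epsilon\|\bar B_i\|$, where $\|\bar B_i\|=|m_{j_2,i}|(\bar w_i^\top\alpha)^{j_2-2}$, under a sample-size condition scaling with $\nu_i/(\epsilon^2\|\bar B_i\|^2)$. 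Since $\nu_i\sim d(\|w_i\|^{p+1}+\vartheta)^2$ and $|m_{j_2,i}|\propto\|w_i\|^{p+1}$ by homogeneity, this is precisely $n\gtrsim d\cdot\poly(\cdot)/(\epsilon^2|m_{j_2,i}|^2(\bar w_i^\top\alpha)^{2(j_2-2)})$, and taking the worst $i$ gives the $m_0$ in the statement. Summing the $k$ per-component relative errors by the triangle inequality then yields $\epsilon\sum_i|v_i m_{j_2,i}(\bar w_i^\top\alpha)^{j_2-2}|$, with the remaining additive $\epsilon$ absorbing the truncation bias. Your aggregate approach gives, at best, an error relative to $\|P_2\|$ rather than to $\sum_i|v_i m_{j_2,i}(\cdot)|$; these differ because of possible cancellations across $i$, and in that case the $m_0$-based sample complexity would not suffice to bound the aggregate error relative to $\|P_2\|$. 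To repair the argument you should decompose by neuron first, bound the variance of each $C_i$ as $O\bigl(d(\|w_i\|^{p+1}+\vartheta)^2\bigr)$, and invoke the relative-error Bernstein claim component-wise.
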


\begin{proof}
It suffices to bound $\|M_2 - \hat M_2\|$, $\|M_3(I,I,\alpha) - \hat M_3(I,I,\alpha)\|$ and $\|M_4(I,I,\alpha,\alpha) - \hat M_4(I,I,\alpha,\alpha)\|$. The main strategy is to bound all relevant moment terms and to invoke Claim~\ref{cla:modified_bernstein_non_zero}.

Specifically, we show that with probability at least $1-t/4$,
\begin{align}\label{eq:M2(I,I)}
    \|M_2 - \wh{M}_2\| \lesssim \epsilon \sum_{i=1}^k  |v_i m_{2,i} | + \epsilon.
\end{align}

\begin{align}\label{eq:M3(I,I,a)}
    \|M_3(I,I, \alpha) - \wh{M}_3(I,I, \alpha)\| \lesssim \epsilon \sum_{i=1}^k  |v_i m_{3,i} (\ov w_i^{\top}  \alpha)| + \epsilon.
\end{align}

\begin{align}\label{eq:M4(I,I,a,a)}
    \|M_4(I,I, \alpha, \alpha) - \wh{M}_4(I,I, \alpha, \alpha)\| \lesssim \epsilon \sum_{i=1}^k  |v_i m_{4,i}| (\ov w_i^{\top}  \alpha)^2 + \epsilon.
\end{align}

Recall that for sample $(x_j,y_j) \in S$, $y_j = \sum_{i=1}^k v_i \sigma(w_i^\top x_j) + \xi_j$ where $\xi_j$ is independent of $x_j$. 
Consider each component $i \in [k]$. 
Define $C_i(x_j), B_i(x_j) \in \R^{d \times d}$ as follows:
\begin{align*}
    B_i(x_j)
    = &~ (\sigma(w_i^\top x_j) + \xi_j) \cdot (x_j^{\otimes 4} - (x_j \otimes x_j) \Tilde{\otimes} I + I \Tilde{\otimes} I)(I,I, \alpha, \alpha) \\
    = &~ (\sigma(w_i^\top x_j) + \xi_j) \cdot (( x^\top  \alpha)^2 x^{\otimes 2} - ( \alpha^\top  x)^2 I - 2( \alpha^\top x)( x \alpha^\top+ \alpha  x^\top) -  x x^\top + 2 \alpha  \alpha^\top + I),
\end{align*}
and $C_i(x_j) = \mathbbm{1}({\|x_j\| \leq B}) \cdot B_i(x_j)$.
Then from Claim~\ref{cla:moments} we have $\E[B_i(x_j)] =  m_{4,i} (\ov w_i^{\top}  \alpha)^2 \ov w_i \ov w_i^{\top}$.
We calculate
\begin{align*}
&~ \sigma(w_i^\top x_j)  \cdot (x_j^{\otimes 4} - (x_j \otimes x_j) \Tilde{\otimes} I + I \Tilde{\otimes} I)(I,I, \alpha, \alpha)\\
\lesssim & (| w_i^{\top } x_j |^{p+1} + |\phi(0)|) \cdot (( x_j^\top  \alpha)^2\| x_j \|^2 + 1 +\| x_j\|^2 + ( \alpha^\top  x_j )^2)\\
\lesssim &~ |w_i|^{p+1}\cdot |x_j |^{p+5},
\end{align*}
By Assumption~\ref{asp:1st_derivative_poly_bound}, using Claim~\ref{cla:chi_squared_tail} and $B \geq d \cdot \poly \log(d)$ we have
\begin{align*}
    \|\E[ C_i(x_j) ] - m_{4,i} (\ov w_i^{\top}  \alpha)^2 \ov w_i \ov w_i^{\top}\| \lesssim &~ \E[\mathbbm{1}_{\|x_j\| \geq B} |w_i|^{p+1}\cdot |x_j |^{p+5}]\\
    \lesssim &~ (\|w_i\| d)^{p+5} \cdot e^{-\Omega(d \log d)}\\
    \lesssim &~ \epsilon.
\end{align*}
Also, $\frac{1}{2}|m_{4,i}|(\ov w_i^\top \alpha)^2 \leq \|\E[ C_i(x_j) ]\| \leq 2|m_{4,i}|(\ov w_i^\top \alpha)^2$. 

For any constant $t  \in (0,1)$, we have with probability  $1- t/4$,
\begin{align*}
\|C_i(x_j) \|  
\lesssim & (| w_i^{\top } x_j |^{p+1} + |\phi(0)|+ |\xi_j|) \cdot (( x_j^\top  \alpha)^2\| x_j \|^2 + 1 +\| x_j\|^2 + ( \alpha^\top  x_j )^2) \\
\lesssim &~ (  \| w_i\|^{p+1} + |\phi(0)| + \vartheta) \cdot d \cdot\poly(\log (d/t))
\end{align*}
where the first step comes from Assumption~\ref{asp:1st_derivative_poly_bound} and the second step comes from Claim~\ref{cla:gaussian_inner_prod_bound} and Claim~\ref{cla:gaussian_norm_bound}.

Using Claim~\ref{cla:three_gaussian}, we have
\begin{align*}
\left\|\E[ C_i(x_j)^2] \right\| \lesssim & ~ \left( \E [ (\phi( w_i^{\top} x_j) + \xi_j)^4] \right)^{1/2} \left( \E [ ( x_j^\top  \alpha)^8 ] \right)^{1/2} \left( \E[\| x_j\|^4] \right)^{1/2} \\
\lesssim & ~ ( \| w_i\|^{p+1}+|\phi(0)| + \vartheta)^2 d.
\end{align*}

Furthermore we have,
\begin{align*}
 \max_{\| a\|=1} \left(\E \left[ ( a^\top C_i(x_j)  a)^2 \right] \right)^{1/2} \lesssim \left( \E \left[ (\phi( w_i^{\top} x_j)+ \xi_j)^4 \right] \right)^{1/4} \lesssim \| w_i\|^{p+1}+|\phi(0)| + \vartheta.
\end{align*}

Then by Claim~\ref{cla:modified_bernstein_non_zero}, with probability at least $1-t$,
\begin{align*}
&~ \left\| m_{4,i} (\ov w_i^{\top}  \alpha)^2 \ov w_i \ov w_i^{\top}- \frac{1}{|S|}\sum_{x_j\in S} C_i(x_j) \right\|\\
\leq &~ \left\| m_{4,i} (\ov w_i^{\top}  \alpha)^2 \ov w_i \ov w_i^{\top}- \E[C_i(x_j)] \right\| + \left\| \E[C_i(x_j)]- \frac{1}{|S|}\sum_{x_j\in S} C_i(x_j) \right\|\\
\lesssim &~ \epsilon |m_{4,i}| (\ov w_i^{\top}  \alpha)^2 + \epsilon.
\end{align*}

Summing up all components $i \in [k]$, we proved Eq.~\eqref{eq:M4(I,I,a,a)}. Eq.~\eqref{eq:M2(I,I)} and Eq.~\eqref{eq:M3(I,I,a)} can be shown similarly.

\end{proof}

\begin{lemma}\label{lem:error_R3}
Let $V \in \R^{d \times k}$ be an orthogonal matrix.
Let $\hat R_3$ be computed in Line~\ref{lin:R_3} of Algorithm~\ref{alg:mom} and $R_3 = P_3(V,V,V)$. Suppose 
$$m_0=\min_{i\in[k]} \{|m_{j_3,i} |^2 (\ov w_i^{\top}  \alpha)^{2(j_3-3)} \}$$ and
\begin{align*}
|S| \gtrsim d \cdot  \poly(\kappa, \vartheta, \log( d/t))  / (\epsilon^2 m_0)
\end{align*}
then with probability at least $1-t$,
\begin{align*}
    \|R_3 - \hat R_3\| \lesssim \epsilon \sum_{i = 1}^k|v_i m_{j_3,i}(\Bar{w_i}^\top \alpha)^{j_3-3}| + \epsilon.
\end{align*}
\end{lemma}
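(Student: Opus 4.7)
}
The plan is to mirror the proof of Lemma~\ref{lem:error_P2} one tensor order higher, exploiting that the additional contraction by the orthogonal matrix $V\in\R^{d\times k}$ does not inflate norms. First I would reduce the claim to bounding $\|M_j(V,V,V,\alpha,\ldots,\alpha)-\hat M_j(V,V,V,\alpha,\ldots,\alpha)\|$ separately for each $j\in\{3,4\}$ with $M_j\neq 0$ (in particular for $j=j_3$), since $\hat P_3(V,V,V)$ is, by Definition~\ref{def:P_2_P_3}, a finite sum of such terms. By the triangle inequality it suffices to handle one $j$ at a time and then combine.

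For a fixed such $j$, I would decompose the moment into a sum over neurons $i\in[k]$. Using Claim~\ref{cla:moments}, the population value of the $i$-th summand is $v_i\, m_{j,i}\,(\ov w_i^\top\alpha)^{j-3}\,(V^\top\ov w_i)^{\otimes 3}$. I then define, for each sample $x_\ell\in S$, the per-neuron truncated tensor
\begin{align*}
C_i(x_\ell)\;:=\;\mathbbm{1}(\|x_\ell\|\le B)\cdot(\sigma(w_i^\top x_\ell)+\xi_\ell)\cdot T_j(V^\top x_\ell,\,V^\top\alpha),
\end{align*}
where $T_j$ is the Hermite-type tensor expression obtained by pushing the $V,V,V,\alpha,\ldots,\alpha$ contractions of $(x^{\otimes j}-\cdots+\cdots)$ through $V$. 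Three ingredients are then needed: (a) $\|\E[C_i(x_\ell)]-v_i m_{j,i}(\ov w_i^\top\alpha)^{j-3}(V^\top\ov w_i)^{\otimes 3}\|\lesssim\epsilon$, which follows from Assumption~\ref{asp:1st_derivative_poly_bound} together with Claim~\ref{cla:chi_squared_tail} and the choice $B\ge d\cdot\poly\log d$, exactly as in Lemma~\ref{lem:error_P2}; (b) an almost-sure bound $\|C_i(x_\ell)\|\lesssim(\|w_i\|^{p+1}+|\phi(0)|+\vartheta)\cdot d\cdot\poly(\log(d/t))$ by Claim~\ref{cla:gaussian_inner_prod_bound} and Claim~\ref{cla:gaussian_norm_bound}; (c) a second-moment bound $\|\E[C_i(x_\ell)^2]\|\lesssim(\|w_i\|^{p+1}+|\phi(0)|+\vartheta)^2 d$ using Claim~\ref{cla:three_gaussian} and Cauchy--Schwarz on the product of $\sigma(w_i^\top x_\ell)+\xi_\ell$ and the polynomial factors in $V^\top x_\ell,V^\top\alpha$.

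Armed with (a)--(c), I would apply the matrix Bernstein inequality of Claim~\ref{cla:modified_bernstein_non_zero} to the tensor unfolding of $\frac{1}{|S|}\sum_\ell C_i(x_\ell)$ (treating the $3$-tensor as a $k\times k^2$ matrix; since $k\le d$, dimensional factors enter only logarithmically). Plugging in the sample size $|S|\gtrsim d\cdot\poly(\kappa,\vartheta,\log(d/t))/(\epsilon^2 m_0)$ yields
\begin{align*}
\Big\|v_i m_{j,i}(\ov w_i^\top\alpha)^{j-3}(V^\top\ov w_i)^{\otimes 3}-\tfrac{1}{|S|}\sum_\ell C_i(x_\ell)\Big\|\;\lesssim\;\epsilon\,|m_{j,i}|(\ov w_i^\top\alpha)^{j-3}+\epsilon
\end{align*}
with probability at least $1-t/k$. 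Summing over $i\in[k]$ via the triangle inequality, then over the $O(1)$ values of $j$, and taking a union bound produces the stated conclusion.

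The main obstacle is step (b)--(c) combined with the operator-norm control of a $3$-tensor: one must verify that the injective norm of $C_i(x_\ell)$ (or any unfolding used in matrix Bernstein) only pays a $\poly(k,\log(d/t))$ penalty rather than $\poly(d)$. This is precisely where the orthogonality $\|V\|=1$ is used: the contractions $V^\top x_\ell$ are $k$-dimensional Gaussians and $V^\top\alpha$ is a unit vector, so every tensor-mode contraction contributes at worst a $\sqrt{k\log(d/t)}$ factor, absorbed into $\poly(\kappa,\vartheta,\log(d/t))$. Once this is handled, the remainder of the argument is routine and strictly parallel to Lemma~\ref{lem:error_P2}.
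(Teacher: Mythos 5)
Your proposal mirrors the paper's proof essentially verbatim: reduce by triangle inequality to bounding $\|M_j(V,V,V,\alpha,\ldots)-\hat M_j(V,V,V,\alpha,\ldots)\|$ for $j\in\{3,4\}$, decompose per neuron, form truncated per-neuron statistics, establish the bias/almost-sure-norm/second-moment bounds, flatten the $3$-tensor to a $k\times k^2$ matrix, and invoke Claim~\ref{cla:modified_bernstein_non_zero} followed by a union bound. One small slip: in items (b)--(c) you write factors of $d$ (copied from the $P_2$ case, where $C_i$ is $d\times d$), whereas the paper -- and your own closing paragraph -- correctly obtain $\poly(k)$ factors such as $k^{3/2}$ and $k^2$ from the $k$-dimensional contractions $V^\top x$; this is harmless since $k\le d$, but the $\poly(k)$ version is the one that should appear.
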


\begin{proof}
From the definition of $R_3$, it suffices to bound $\|M_3(V,V,V) - \hat M_3(V,V,V)\|$ and $\|M_4(V,V,V,\alpha) - \hat M_4(V,V,V,\alpha)\|$. The proof is similar to the previous one.

Specifically, we show that with probability at least $1-t/4$,
\begin{align}\label{eq:M3(V,V,V)}
    \|M_3(V,V,V) - \wh{M}_3(V,V,V)\| \lesssim  \epsilon \sum_{i=1}^k  |v_i m_{3,i}| + \epsilon.
\end{align}

\begin{align}\label{eq:M4(V,V,V,a)}
    \|M_4(V,V,V, \alpha) - \wh{M}_4(V,V,V, \alpha)\| \lesssim \epsilon \sum_{i=1}^k  |v_i m_{4,i} (\ov w_i^{\top}  \alpha)| + \epsilon.
\end{align}

Recall that for sample $(x_j,y_j) \in S$, $y_j = \sum_{i=1}^k v_i \sigma(w_i^\top x_j) + \xi_j$ where $\xi_j$ is independent of $x_j$. 
Consider each component $i \in [k]$. 
Define $T_i(x_j), S_i(x_j) \in \R^{k \times k \times k}$:
\begin{align*}
    T_i(x_j) = &~ (\sigma(w_i^\top x_j) + \xi_j)\\
    &~ \cdot \left(x_i^\top \alpha \cdot v(x)^{\otimes 3} - (V^\top \alpha) \Tilde{\otimes}(v(x) \otimes v(x)) - \alpha^\top x \cdot v(x) \Tilde{\otimes} I + (V^\top \alpha)\Tilde{\otimes} I \right),\\
    S_i(x_j) = &~ \mathbbm{1}({\|x_j\| \leq B}) \cdot T_i(x_j) 
\end{align*}
where $v(x) = V^\top x$.
Flatten $T_i(x_j)$ along the first dimension to obtain $B_i(x_j) \in \R^{k \times k^2}$, flatten $S_i(x_j)$ along the first dimension to obtain $C_i(x_j) \in \R^{k \times k^2}$.

From Claim~\ref{cla:P2_P3}, $\E[ B_i(x_j) ] =   m_{4,i} ( \alpha^\top \ov w_i)(V^\top \ov w_i) \text{vec}((V^\top \ov w_i)(V^\top \ov w_i)^\top)^\top$. Therefore we have,
\begin{align*}
\left\| \E[ B_i(x) ] \right\| = |m_{4,i} ( \alpha^\top \ov w_i)| \cdot \|V^\top \ov w_i\|^{3} .
\end{align*}

We calculate
\begin{align*}
\|\E_{\xi_j}[B_i(x_j)] \|  
\lesssim & (| w_i^{\top } x_j |^{p+1} + |\phi(0)|) \cdot (( x_j^\top  \alpha)^2\| V^\top x_j \|^3\\
&~ + 3\|V^\top x_j\|^3 + 3|x_j^\top \alpha|\|V^\top x_j\|\sqrt{k} + 3\|V^\top \alpha\|\sqrt{k}) \\
\lesssim &~ \sqrt{k} \cdot \|w_i\|^{p+1} \|x_j\|^{p+6}.
\end{align*}
By Assumption~\ref{asp:1st_derivative_poly_bound}, using Claim~\ref{cla:chi_squared_tail} and $B \geq d \cdot \poly \log(d)$,
\begin{align*}
    &~ \|\E[C_i(x_j)] - m_{4,i} ( \alpha^\top \ov w_i)(V^\top \ov w_i) \text{vec}((V^\top \ov w_i)(V^\top \ov w_i)^\top)^\top \|\\ \lesssim &~  \E[\mathbbm{1}_{\|x_j\| \leq B} \sqrt{k} \|w_i\|^{p+1} \|x_j\|^{p+6}]\\ 
    \leq &~ \epsilon.
\end{align*}

For any constant $t \in (0,1)$, we have with probability  $1- t$,
\begin{align*}
\|C_i(x_j) \|  
\lesssim & (| w_i^{\top } x_j |^{p+1} + |\phi(0)|+ |\xi_j|) \cdot (( x_j^\top  \alpha)^2\| V^\top x_j \|^3\\
&~ + 3\|V^\top x_j\|^3 + 3|x_j^\top \alpha|\|V^\top x_j\|\sqrt{k} + 3\|V^\top \alpha\|\sqrt{k}) \\
\lesssim &~ (  \| w_i\|^{p+1} + |\phi(0)| + \vartheta) k^{3/2} \poly(\log (d/t))
\end{align*}
where the first step comes from Assumption~\ref{asp:1st_derivative_poly_bound} and the second step comes from Claim~\ref{cla:gaussian_inner_prod_bound} and Claim~\ref{cla:gaussian_norm_bound}.

Using Claim~\ref{cla:three_gaussian}, we have
\begin{align*}
\left\|\E [C_i(x_j) C_i(x_j)^\top ] \right\| \lesssim & ~ \left( \E \left[ (\phi( w_i^{\top} x_j) + \xi_j)^4 \right] \right)^{1/2}  \left( \E \left[ ( \alpha^\top x_j)^4 \right] \right)^{1/2} \left( \E \left[\|V^\top  x_j\|^6\right] \right)^{1/2} \\
\lesssim & ~(\| w_i\|^{p+1}+|\phi(0)| + \vartheta)^2 k^{3/2}.
\end{align*}
and
\begin{align*}
& ~ \left\| \E[ C_i(x_j)^\top C_i(x_j) ] \right\|  \\
\lesssim & ~ \left( \E[ (\phi( w_i^{\top} x_j) + \xi_j)^4]\right)^{1/2} \left( \E [ ( \alpha^\top x_j)^4 ] \right)^{1/2} \left( \E[\|V^\top  x_j\|^4] \right)^{1/2}  \\
& ~ \cdot \left( \max_{\|A\|_F=1}\E \left[ \langle A,(V^\top  x_j)(V^\top x_j)^\top \rangle^4 \right] \right)^{1/2} \\
\lesssim & ~ ( \| w_i\|^{p+1}+|\phi(0)| + \vartheta)^2 k^2.
\end{align*}

Furthermore we have,
\begin{align*}
&~ \max_{\| a\|=\| b\| = 1} \left(\E\left[ ( a^\top C_i(x_j)  b)^2 \right] \right)^{1/2} \\
\lesssim & ~\left( \E [ (\phi( w_i^{\top} x_j) + \xi_j)^4 ] \right)^{1/4} \left( \E \left[( \alpha^\top x_j)^4 \right]\right)^{1/4} \max_{\| a\|=1} \left( \E \left[( a^\top V^\top  x_j)^4 \right]\right)^{1/2} \\
& ~ \cdot \max_{\|A\|_F=1} \left( \E  \left[ \langle A,(V^\top  x_j)(V^\top  x_j)^\top \rangle^4 \right] \right)^{1/2} \\
\lesssim & ~ ( \| w_i\|^{p+1}+|\phi(0)| + \vartheta)k.
\end{align*}

Then by Claim~\ref{cla:modified_bernstein_non_zero}, with probability at least $1-t$,
\begin{align*}
&~ \left\|m_{4,i} ( \alpha^\top \ov w_i)(V^\top \ov w_i) \text{vec}((V^\top \ov w_i)(V^\top \ov w_i)^\top)^\top - \frac{1}{|S|}\sum_{x_j\in S} C_i(x_j) \right\|\\
\leq &~ \left\| m_{4,i} ( \alpha^\top \ov w_i)(V^\top \ov w_i) \text{vec}((V^\top \ov w_i)(V^\top \ov w_i)^\top)^\top- \E[C_i(x_j)] \right\|\\
&~ + \left\| \E[C_i(x_j)]- \frac{1}{|S|}\sum_{x_j\in S} C_i(x_j) \right\|\\
\lesssim &~ \epsilon |v_i m_{4,i} (\ov w_i^{\top}  \alpha)| + \epsilon.
\end{align*}

Summing up all neurons $i \in [k]$, we proved Eq.~\eqref{eq:M4(V,V,V,a)}. Eq.~\eqref{eq:M3(V,V,V)} can be shown similarly.

\end{proof}

\begin{lemma}\label{lem:error_Q1}
Let $\hat Q_1$ and $\hat Q_2$ be computed in Line~\ref{lin:Q} of Algorithm~\ref{alg:mom}. Let $Q_1$ be defined by Eq.~\ref{eq:def_Q1} and $Q_2$ be defined by Eq.~\ref{eq:def_Q2}. Suppose 
$$m_0=\min_{i\in[k]} \{|m_{j_1,i} |^2 (\ov w_i^{\top}  \alpha)^{2(j_1-1)},|m_{j_2,i} |^2 (\ov w_i^{\top}  \alpha)^{2(j_2-2)} \}$$
and
\begin{align*}
|S| \gtrsim d \cdot  \poly(\kappa, \vartheta, \log (d/t))  / (\epsilon^2 m_0)
\end{align*}
then with probability at least $1-t$,
\begin{align*}
    \|Q_1 - \hat Q_1\| \lesssim &~  \epsilon \sum_{i = 1}^k|v_i m_{j_1,i}(\Bar{w_i}^\top \alpha)^{j_1-1}| + \epsilon,\\
    \|Q_2 - \hat Q_2\| \lesssim &~  \epsilon \sum_{i = 1}^k|v_i m_{j_2,i}(\Bar{w_i}^\top \alpha)^{j_2-2}| + \epsilon.
\end{align*}
\end{lemma}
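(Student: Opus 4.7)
The plan is to follow the same template used in Lemma~\ref{lem:error_P2} and Lemma~\ref{lem:error_R3}. Recall from Definition~\ref{def:moments} and Eq.~\eqref{eq:def_Q1}–\eqref{eq:def_Q2} that $Q_1 = M_{l_1}(I,\alpha,\ldots,\alpha)$ and $Q_2 = M_{l_2}(V,V,\alpha,\ldots,\alpha)$, where $l_1 \in \{1,3\}$ and $l_2 \in \{2,4\}$ are chosen depending on which $M_j$'s vanish. Writing $M_{l_1},M_{l_2}$ in terms of the Hermite tensors $x^{\otimes l} - \text{lower-order corrections}$ and linearity of expectation, it suffices to establish an $\epsilon$-accurate Bernstein-type concentration for each term $\wh{M}_{l_1}(I,\alpha,\ldots,\alpha)$ and $\wh{M}_{l_2}(V,V,\alpha,\ldots,\alpha)$ separately, and then combine.

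For each neuron $i \in [k]$, I would define $B_i(x_j)$ to be the per-sample contribution of the $i$-th component of the output $y_j = \sum_i v_i \sigma(w_i^\top x_j) + \xi_j$ to the relevant Hermite moment (vector-valued for $Q_1$, matrix-valued for $Q_2$), and set $C_i(x_j) = \mathbbm{1}(\|x_j\| \leq B)\, B_i(x_j)$ as the truncated version actually observed from the model in Eq.~\eqref{eq:noisy_model}. By Claim~\ref{cla:moments} / Claim~\ref{cla:P2_P3}, the untruncated expectation equals
\[
\E[B_i(x_j)] = v_i m_{l_1,i}(\ov w_i^\top \alpha)^{l_1-1}\ov w_i \qquad \text{and} \qquad \E[B_i(x_j)] = v_i m_{l_2,i}(\ov w_i^\top \alpha)^{l_2-2}(V^\top \ov w_i)(V^\top \ov w_i)^\top,
\]
so the desired targets appear after summing over $i$.

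Next, using Assumption~\ref{asp:1st_derivative_poly_bound} to bound $|\sigma(w_i^\top x_j)| \lesssim |w_i^\top x_j|^{p+1} + |\sigma(0)|$, together with Claim~\ref{cla:chi_squared_tail} and the choice $B \geq d \cdot \polylog(d)$, I would bound the truncation bias $\|\E[C_i(x_j)] - \E[B_i(x_j)]\| \leq \epsilon$ via the standard Gaussian tail estimate on $\mathbbm{1}(\|x_j\| \geq B)$ (exactly as in the proofs of Lemmas~\ref{lem:error_P2} and~\ref{lem:error_R3}). Then a high-probability uniform bound on $\|C_i(x_j)\|$ of order $(\|w_i\|^{p+1} + |\sigma(0)| + \vartheta) \cdot \poly(k, \log(d/t))$ follows from Claim~\ref{cla:gaussian_inner_prod_bound} and Claim~\ref{cla:gaussian_norm_bound}, and bounds on $\|\E[C_i C_i^\top]\|$, $\|\E[C_i^\top C_i]\|$ and the sup-variance $\max_{\|a\|,\|b\|=1}(\E[(a^\top C_i b)^2])^{1/2}$ follow from Claim~\ref{cla:three_gaussian} together with the independence of $\xi_j$ from $x_j$ (so the noise contributes only an additive $\vartheta^2$ to the variance). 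Feeding these bounds into the modified Bernstein inequality of Claim~\ref{cla:modified_bernstein_non_zero} with the stated sample size $|S| \gtrsim d \cdot \poly(\kappa,\vartheta,\log(d/t))/(\epsilon^2 m_0)$ yields, with probability $1 - t/k$,
\[
\Bigl\| \tfrac{1}{|S|}\sum_{x_j \in S} C_i(x_j) - \E[B_i(x_j)] \Bigr\| \lesssim \epsilon \, |v_i m_{l_1,i}(\ov w_i^\top \alpha)^{l_1-1}| + \epsilon,
\]
and similarly for $Q_2$. Union bounding over the $k$ neurons and summing gives the two claimed bounds.

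The main obstacle will be bookkeeping rather than a new idea: for $Q_2$ I need to flatten the matrix-valued summand and correctly track which second-moment operator norm ($\|\E[CC^\top]\|$ or $\|\E[C^\top C]\|$) controls the matrix Bernstein step, and I need to verify that the appearance of the projection $V$ (which is orthogonal, hence does not inflate norms) combined with the extra $(\ov w_i^\top \alpha)^{l_2-2}$ factor reproduces exactly the factor $|m_{l_2,i}(\ov w_i^\top \alpha)^{l_2-2}|$ inside the sum. These are direct analogs of the calculations already carried out in Lemma~\ref{lem:error_R3}, so no genuinely new estimate should be required.
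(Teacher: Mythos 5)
Your proposal is correct and follows exactly the route the paper takes: the paper's own proof simply states that "the proof is essentially similar to Lemma~\ref{lem:error_P2} and Lemma~\ref{lem:error_R3}," and you have spelled out precisely what that entails — define the per-neuron truncated summand $C_i(x_j) = \mathbbm{1}(\|x_j\|\le B)B_i(x_j)$, compute $\E[B_i]$ via Claim~\ref{cla:moments}, bound the truncation bias with Claim~\ref{cla:chi_squared_tail}, bound the pointwise norm and second moments via Claims~\ref{cla:gaussian_inner_prod_bound}, \ref{cla:gaussian_norm_bound}, \ref{cla:three_gaussian}, and invoke Claim~\ref{cla:modified_bernstein_non_zero} before summing over $i$. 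The only cosmetic mismatch is that you write $l_1, l_2$ where the lemma statement writes $j_1, j_2$, but the paper itself uses both interchangeably.
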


\begin{proof}
Recall the expression of $Q_1$ and $Q_2$,
\begin{align*}
& Q_1 = M_{l_1}(I,\underbrace{ \alpha,\cdots, \alpha}_{(j_1 - 1)~ \alpha\text{'s}}) = \sum_{i=1}^k v_i c_{j_1} \| w_i\|^{p+1} ( \alpha^\top \ov{ w}_i)^{{j_1-1}} \ov{ w}_i, \\
& Q_2 = M_{j_2}(V,V,\underbrace{ \alpha,\cdots, \alpha}_{(j_2 - 2)~ \alpha\text{'s}}) = \sum_{i=1}^k v_i c_{j_2} \| w_i\|^{p+1} ( \alpha^\top \ov{ w}_i)^{{j_2-2}} (V^\top\ov{ w}_i)(V^\top\ov{ w}_i)^\top.
\end{align*}
The proof is essentially similar to Lemma~\ref{lem:error_P2} and Lemma~\ref{lem:error_R3}.
\end{proof}

We also use the following Lemmata from \cite{kcl15,zsj+17}.

\begin{lemma}[Adapted from Theorem 3 of \cite{kcl15}]\label{lem:tensor_fact}
Given a tensor $\hat T = \sum_{i=1}^k \pi_i u_i^{\otimes 3} + \epsilon R \in \R^{d \times d\times d}$. Assume incoherence $u_i^\top u_j \leq \mu$. Let $L_0 := (\frac{50}{1-\mu^2})^2$ and $L \geq L_0 \log (15d(k-1)/t)^2$. Then there exists an algorithm such that, with probability at least $1-t$, for every $u_i$, the algorithm returns a $\Tilde{u}_i$ such that
\begin{align*}
    \|\Tilde{u}_i - u_i \|_2 \leq O\left( \frac{\sqrt{\|\pi\|_1 \pi_{\max}}}{\pi_{\min}^2} \cdot \frac{\|V^\top\|_2^2}{1-\mu^2} \cdot (1+C(t)) \right)\epsilon + o(\epsilon),
\end{align*}
where $C(t) := \log(kd/t)\sqrt{d/L}$ and $V$ is the inverse of the full-rank extension of $(u_1 \dots u_k)$ with unit-norm columns.
\end{lemma}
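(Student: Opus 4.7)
The plan is to follow the strategy of Theorem~3 in \cite{kcl15}, which extracts the components $u_i$ one at a time by combining random restart with generalized tensor power iteration and deflation. Write $\hat T = T^\ast + \epsilon R$ with $T^\ast = \sum_{i=1}^k \pi_i u_i^{\otimes 3}$, and define the map $F(\theta) := \hat T(I,\theta,\theta)/\|\hat T(I,\theta,\theta)\|$. The analysis proceeds in three phases, all carried out at a single component $u_i$ and then repeated after deflation.

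Phase~1 (initialization via random restart). Draw $L$ i.i.d.\ seeds uniformly from the unit sphere in $\mathbb{R}^d$. A seed $\theta^{(0)}$ is in the basin of attraction of $u_i$ provided its normalized inner product with $u_i$ strictly dominates its inner product with every other $u_j$ by a margin controlled by the incoherence parameter $\mu$. A standard Gaussian anti-concentration/union bound on the projections $\langle\theta^{(0)},u_j\rangle$ shows that a single seed succeeds for a fixed $i$ with probability $\Omega(1/d)$, and pushing the failure probability for all $i\in[k]$ below $t$ yields the quantitative requirement $L\ge L_0\log(15d(k-1)/t)^2$ with $L_0 = (50/(1-\mu^2))^2$. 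The best-seed analysis also controls the initial gap, which enters the final bound through the factor $C(t)=\log(kd/t)\sqrt{d/L}$.

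Phase~2 (robust power iteration). From a good initialization, iterate $\theta^{(s+1)} = F(\theta^{(s)})$. The noiseless update $T^\ast(I,\theta,\theta)/\|T^\ast(I,\theta,\theta)\|$ is known to contract the angle error quadratically in the sine of the angle to $u_i$, with a contraction rate governed by $(1-\mu^2)^{-1}$ and by the eigenvalue gap $\pi_{\min}/\pi_{\max}$. The perturbation from $\epsilon R$ contributes an additive term which, propagated through the fixed-point analysis, gives a stationary residual on the order of $\epsilon\cdot\sqrt{\|\pi\|_1\pi_{\max}}/\pi_{\min}^2\cdot\|V^\top\|_2^2/(1-\mu^2)$, where $V$ is the inverse of the full-rank extension that converts the eigenvalue equation on $u_i$ into a perturbed linear system. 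Combining the convergence rate with the error from the initialization yields
\begin{align*}
\|\tilde u_i-u_i\|_2 \le O\!\left(\frac{\sqrt{\|\pi\|_1\pi_{\max}}}{\pi_{\min}^2}\cdot\frac{\|V^\top\|_2^2}{1-\mu^2}\cdot(1+C(t))\right)\epsilon + o(\epsilon).
\end{align*}

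Phase~3 (deflation). After extracting $(\tilde u_i,\tilde\pi_i)$, replace $\hat T$ by $\hat T - \tilde\pi_i\tilde u_i^{\otimes 3}$. Because the recovery error in $(\tilde u_i,\tilde\pi_i)$ is $O(\epsilon)$, the deflation error is $O(\epsilon)$ to leading order in $\epsilon$, so the same bound applies to every subsequent component up to the same multiplicative constant (this is why $o(\epsilon)$ rather than a strict $O(\epsilon)$ appears in the stated bound). The main technical obstacle is keeping the analysis uniformly valid across all $k$ extractions: one has to ensure that the incoherence and eigenvalue-gap parameters of the deflated tensor do not degrade, and that the basin of attraction for each remaining component is still large enough to be hit by one of the $L$ random seeds after conditioning on the prior extractions. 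Following \cite{kcl15}, this is handled by showing that each deflation only perturbs the remaining spectrum by $O(\epsilon)$, so a single choice of $L$ and of the constants in Phases~1--2 suffices for all $k$ rounds, which gives the claimed high-probability recovery for every $u_i$.
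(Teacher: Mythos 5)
The paper gives no proof for this lemma at all: it is a verbatim import of Theorem~3 of \cite{kcl15} (Kuleshov--Chaganty--Liang), so there is no ``paper proof'' to compare your argument against. You are expected to take it as a black box.

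More importantly, the algorithm you describe is the wrong one, and this is a genuine gap rather than an alternative route. Your three phases (random sphere restarts, robust tensor power iteration $\theta \mapsto \hat T(I,\theta,\theta)/\|\hat T(I,\theta,\theta)\|$, then deflation by subtracting $\tilde\pi_i\,\tilde u_i^{\otimes 3}$) are the Anandkumar--Ge--Hsu--Kakade--Telgarsky robust tensor power method. That is not what \cite{kcl15} does, and the statement you are asked to prove carries unmistakable fingerprints of the latter. In \cite{kcl15} the tensor is collapsed to $L$ matrix slices $M_\ell = \hat T(I,I,w_\ell)$ along $L$ i.i.d.\ random directions $w_\ell$, and the components $u_i$ are obtained by approximate \emph{simultaneous diagonalization} of $\{M_\ell\}_{\ell=1}^L$; there is no per-component fixed-point iteration and no deflation. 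The factor $C(t)=\log(kd/t)\sqrt{d/L}$ in the bound is exactly the $1/\sqrt{L}$ concentration gain from averaging $L$ random projections, which has no analogue in a restart-plus-power-iteration analysis (there, increasing $L$ improves only the probability of hitting a basin of attraction, not the magnitude of the final error). Likewise the factor $\|V^\top\|_2^2$, with $V$ the \emph{inverse} of the full-rank extension of $(u_1,\dots,u_k)$, enters because the non-orthogonal simultaneous-diagonalization error is controlled by the conditioning of the change-of-basis matrix, not because of any deflation step. So if you tried to push your argument through, you would neither obtain the $\sqrt{d/L}$ dependence nor the $\|V^\top\|_2^2$ factor, and the deflation-error accumulation you wave away in Phase~3 is precisely one of the things the simultaneous-diagonalization route is designed to avoid.

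If you want to keep a proof sketch in this style, you should instead replicate \cite{kcl15}: (i) form the $L$ random contractions $M_\ell$, (ii) show that in the noiseless case they are simultaneously diagonalizable with common eigenvectors $u_i$ and eigenvalues $\pi_i\langle u_i,w_\ell\rangle$, (iii) invoke a perturbation bound for approximate joint diagonalization of non-orthogonal matrices (this is where $\|V^\top\|_2^2$ and the incoherence $\mu$ enter), and (iv) use matrix concentration over the $L$ random $w_\ell$ to get the $\sqrt{d/L}$ term and the $\log(kd/t)$ union bound. As written, your proposal proves (heuristically) a statement about a different algorithm.
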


\begin{lemma}[Adapted from Lemma E.6 of \cite{zsj+17}]\label{lem:subspace_estimation}
Let $P_2$ be defined as in Definition~\ref{def:P_2_P_3} and $\wh{P}_2$ be its empirical version calculated in Line~\ref{lin:P_2} of Algorithm~\ref{alg:mom}. Let $U \in \mathbb{R}^{d\times k}$ be the orthogonal column span of $W \in \R^{d\times k}$. Assume $\|\wh{P}_2 - P_2\| \leq s_k(P_2)/10$. Let $C$ be a large enough positive number such that $C>2\|P_2\|$.
Then after $T = O(\log(1/\epsilon))$ iterations, the $V \in \R^{d\times k}$ computed in Algorithm~\ref{alg:mom} will satisfy
\begin{align*}
\|UU^\top-VV^\top\| \lesssim  \|\wh{P}_2 - P_2\|  / s_k(P_2) +\epsilon, 
\end{align*}
which implies
\begin{align*}
\|(I-VV^\top)  w_i\| \lesssim (\|\wh{P}_2 - P_2\|  / s_k(P_2)+\epsilon) \| w_i\| . 
\end{align*}
\end{lemma}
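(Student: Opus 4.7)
The plan is to exploit the fact that $P_2$ (as constructed in Definition~\ref{def:P_2_P_3}) has column span exactly equal to the column span of $U$. Indeed, inspection of the moment computation in Lemma~\ref{lem:error_P2} gives $P_2 = \sum_{i=1}^k v_i m_{j_2,i}(\bar{w}_i^\top \alpha)^{j_2-2}\, \bar{w}_i\bar{w}_i^\top$, so $P_2$ is a symmetric rank-$k$ matrix whose range equals $\mathrm{colspan}(U)$ and whose $k$-th smallest nonzero singular value is $s_k(P_2)$. The technical subtlety is that the nonzero eigenvalues of $P_2$ can carry mixed signs, so we cannot directly apply subspace power iteration to $\hat{P}_2$. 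The algorithm sidesteps this by running orthogonal power iteration on both $CI + \hat{P}_2$ and $CI - \hat{P}_2$; the choice $C > 2\|P_2\|$ (and the assumption $\|\hat{P}_2 - P_2\| \leq s_k(P_2)/10$) makes both matrices positive definite, so the top-$k$ eigenvectors of $CI + \hat{P}_2$ extract the most positive eigendirections of $\hat{P}_2$ and the top-$k$ of $CI - \hat{P}_2$ extract the most negative ones.

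The execution has three clean steps. First, invoke the standard convergence guarantee for orthogonal (subspace) power iteration on a symmetric PSD matrix: after $T$ iterations, the output orthogonal matrix is within $\kappa_{\mathrm{gap}}^{T}$ of the top-$k$ invariant subspace, where $\kappa_{\mathrm{gap}}$ is the ratio of the $(k{+}1)$-st to $k$-th eigenvalue of the shifted matrix. Because $CI \pm \hat{P}_2$ has an $\Omega(s_k(P_2))$ spectral gap (after removing the $k$ directions inside $\mathrm{colspan}(U)$), choosing $T = O(\log(1/\epsilon))$ drives this error below $\epsilon$. Second, apply Wedin's $\sin\Theta$ theorem to the perturbation $\hat{P}_2 = P_2 + (\hat{P}_2 - P_2)$: the top-$k$ invariant subspaces of $CI \pm \hat{P}_2$ differ from those of $CI \pm P_2$ by $O(\|\hat{P}_2 - P_2\|/s_k(P_2))$, which is valid precisely because of the $s_k(P_2)/10$ bound on the perturbation. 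Combining these two errors yields, for each of $V_1$ and $V_2$, proximity to the span of the corresponding (positive or negative) eigenvectors of $P_2$.

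Third, merge the two pieces. The \textsc{TopK} selection in Line~\ref{lin:topk} compares empirical Rayleigh quotients $|\hat{V}_{j,i}^\top \hat{P}_2 \hat{V}_{j,i}|$ to decide which directions came from genuine eigendirections of $P_2$ as opposed to stray directions from the kernel (whose shifted Rayleigh quotients are $\approx C$, i.e.\ small in magnitude after subtracting $C$); this selection is stable under perturbations of order $s_k(P_2)/10$. The subsequent QR-orthogonalization in Line~\ref{lin:orthogonalize_V} combines the positive and negative parts into an orthonormal $V$. A short block-matrix calculation, together with $\mathrm{colspan}(P_2) = \mathrm{colspan}(U)$, then yields $\|UU^\top - VV^\top\| \lesssim \|\hat{P}_2 - P_2\|/s_k(P_2) + \epsilon$. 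The second conclusion is immediate: since $w_i \in \mathrm{colspan}(U)$, we have $(I-UU^\top)w_i = 0$, so
\begin{align*}
\|(I - VV^\top) w_i\| = \|(UU^\top - VV^\top) w_i\| \leq \|UU^\top - VV^\top\| \cdot \|w_i\|.
\end{align*}

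The main obstacle is the correct handling of the sign-indefiniteness of $P_2$: the joint analysis of the two power iterations, the absolute-value-based \textsc{TopK} selection, and the orthogonalization step must be shown to produce a subspace that tracks the full $\mathrm{colspan}(U)$ rather than a strict subset. The $s_k(P_2)/10$ assumption is precisely what ensures that the magnitudes selected by \textsc{TopK} do not get confused across positive/negative/zero eigendirections under the empirical perturbation, and this is where the proof needs to be carried out carefully rather than invoked as a black-box.
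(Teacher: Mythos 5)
The paper itself gives no proof of this lemma; it simply cites Lemma E.6 of \cite{zsj+17}. Your reconstruction correctly identifies the three ingredients of that argument --- shifted orthogonal (subspace) power iteration on $CI \pm \hat P_2$, a Davis--Kahan/Wedin perturbation bound keyed to $s_k(P_2)$, and the absolute-value \textsc{TopK} selection followed by QR-merge of the positive and negative halves --- and the derivation of the second conclusion from the first via $(I-UU^\top)w_i=0$ is exactly right.

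One small imprecision worth flagging. You write that the relevant convergence rate for the power iteration is the ratio $\lambda_{k+1}/\lambda_k$ of the shifted matrix. In general there is \emph{no} gap between positions $k$ and $k+1$ of $CI+\hat P_2$: if $P_2$ has $k_+<k$ positive eigenvalues, the $k$-th through $(k+1)$-st shifted eigenvalues are all within $O(\|\hat P_2 - P_2\|)$ of $C$, coming from the (perturbed) kernel of $P_2$. The gap that actually drives the iteration is between position $k_+$ and position $k+1$, and the extra $k-k_+$ columns play the role of oversampling in randomized subspace iteration; the identity of those trailing columns never converges, but they are exactly the ones that \textsc{TopK} discards because their Rayleigh quotients satisfy $|\hat V_{j,i}^\top\hat P_2\hat V_{j,i}|\lesssim \|\hat P_2 - P_2\|\le s_k(P_2)/10$, well below the $\gtrsim s_k(P_2)$ values of the genuine eigendirections. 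You gesture at this with the parenthetical ``after removing the $k$ directions inside $\mathrm{colspan}(U)$,'' but the precise statement should be phrased in terms of $k_\pm$ rather than $k$, and the implied constant in $T=O(\log(1/\epsilon))$ hides a factor of the condition number $\|P_2\|/s_k(P_2)$ since $C=3\|P_2\|$.
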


\begin{lemma}[Adapted from Lemma E.13 in \cite{zsj+17}]\label{lem:solution_system_1}

Let $U\in \R^{d\times k}$ be the orthogonal column span of $W^*$. Let $V\in \R^{d\times k}$ denote an orthogonal matrix satisfying that $\| VV^\top - UU^\top \| \leq  \wh{\delta}_2 \lesssim 1/ ( \kappa^2 \sqrt{k})$. For each $i\in[k]$, let $\widehat{u}_i$ denote the vector satisfying $\| s_i \wh{u}_i - V^\top \ov{ w}_i\| \leq \wh{\delta}_3\lesssim  1/ ( \kappa^2 \sqrt{k})$.
Let $Q_1$ be defined as in Eq.~\eqref{eq:def_Q1} and $\wh{Q}_1$ be the empirical version of $Q_1$ such that $\|Q_1 - \wh{Q}_1\| \leq \wh{\delta}_4 \|Q_1\| \leq \frac{1}{4}\|Q_1\|$. 

Let $z^* \in \R^k$ and $\wh{z} \in \R^k$ be defined as in Eq.~\eqref{eq:zr_star} and Line~\ref{lin:linear_system}.
Then 
\begin{align*}
| \wh{z}_i - z_i^*| \leq (\kappa^4 k^{3/2} (\wh{\delta}_2+\wh{\delta}_3) + \kappa^2 k^{1/2} \wh{\delta}_4)\|z^*\|_1.
\end{align*}
\end{lemma}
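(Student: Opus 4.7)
\textbf{Proof plan for Lemma~\ref{lem:solution_system_1}.}

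The plan is to reduce the claim to a perturbation bound for a consistent overdetermined linear system. Let $A \in \mathbb{R}^{d \times k}$ be the matrix whose $i$-th column is $s_i \ov{w}_i$, and let $\hat A \in \mathbb{R}^{d \times k}$ have $i$-th column $V\hat u_i$. By the definition of $Q_1$ in Eq.~\eqref{eq:def_Q1}, the system $A z^* = Q_1$ is exactly consistent (the minimum in Eq.~\eqref{eq:zr_star} achieves zero), while $\hat z$ solves the possibly inconsistent system with matrix $\hat A$ and right-hand side $\hat Q_1$. The first step is therefore to control the column-wise perturbation. Using the triangle inequality together with the hypotheses,
\begin{align*}
\| V\hat u_i - s_i \ov{w}_i \|
\;\le\; \| V(s_i\hat u_i) - VV^\top \ov{w}_i \| + \|(VV^\top - UU^\top)\ov{w}_i\|
\;\le\; \hat\delta_3 + \hat\delta_2,
\end{align*}
since $UU^\top \ov{w}_i = \ov{w}_i$ by $\ov{w}_i \in \mathrm{range}(U)$. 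Collecting columns gives $\|\hat A - A\|_F \le \sqrt{k}(\hat\delta_2 + \hat\delta_3)$.

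Next, I would control the conditioning of $A$ in terms of $\kappa$. Writing $A = W^\top D$ with $D = \operatorname{diag}(s_i / \|w_i\|)$, we have
\begin{align*}
s_{\max}(A) \le s_{\max}(W) / \min_i \|w_i\|, \qquad s_{\min}(A) \ge s_{\min}(W)/ \max_i \|w_i\|,
\end{align*}
so $\kappa(A) \le \kappa \cdot \max_i\|w_i\|/\min_i\|w_i\| \le \kappa^2$ (the ratio of row norms is again controlled by $\kappa$ via $\|w_i\| \le \|W\|$ and $\|w_i\| \ge s_{\min}(W)$ for each $i$, up to a factor of $\sqrt{k}$ that is already present in the statement). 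In particular $\|A^\dagger\| \lesssim \kappa^2 / s_{\max}(W)$ and a small perturbation $\|\hat A - A\| \lesssim s_{\min}(A)/2$ (which holds under the assumed smallness of $\hat\delta_2+\hat\delta_3$) keeps $\hat A$ of full column rank with $\|\hat A^\dagger\| \le 2\|A^\dagger\|$.

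I would then invoke the standard perturbation identity for consistent overdetermined systems,
\begin{align*}
\hat z - z^* \;=\; \hat A^\dagger(\hat Q_1 - Q_1) \;+\; \hat A^\dagger(A - \hat A) z^* \;+\; (\hat A^\dagger - A^\dagger)\, r
\end{align*}
where $r = Q_1 - A z^* = 0$, killing the last term. Bounding each term gives
\begin{align*}
\|\hat z - z^*\|_2 \;\lesssim\; \kappa^2 \Big( \|A\|^{-1}\|\hat Q_1 - Q_1\| + \|A\|^{-1}\|\hat A - A\|\, \|A\| \|z^*\|_2 \Big).
\end{align*}
Using $\|\hat Q_1 - Q_1\| \le \hat\delta_4 \|Q_1\| \le \hat\delta_4 \|A\|\|z^*\|_2$ and $\|\hat A - A\| \le \|\hat A - A\|_F \le \sqrt{k}(\hat\delta_2 + \hat\delta_3)$, this becomes $\|\hat z - z^*\|_2 \lesssim \kappa^2 \sqrt{k}(\hat\delta_2+\hat\delta_3)\|z^*\|_2 + \kappa^2 \hat\delta_4 \|z^*\|_2$. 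Converting to entrywise / $\ell_1$ norms via $|\hat z_i - z^*_i| \le \|\hat z - z^*\|_2$ and $\|z^*\|_2 \le \|z^*\|_1$, and absorbing an additional factor $\kappa^2 k$ (coming from a more careful accounting of the $s_{\min}(A)$ vs.\ $\|A\|$ gap when the ratio of coordinate magnitudes of $z^*$ is controlled by $\kappa$ and $k$), yields the claimed bound $(\kappa^4 k^{3/2}(\hat\delta_2+\hat\delta_3) + \kappa^2 k^{1/2} \hat\delta_4)\|z^*\|_1$.

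The main technical obstacle is the accounting for the extra $\kappa^2 k^{1/2}$ factor appearing on the $\hat\delta_2+\hat\delta_3$ term. This arises because $\|\hat A - A\|$ must be scaled against $s_{\min}(A)$ (not $\|A\|$) to get a relative error that can be chained with $\|z^*\|_1$; an additional $\sqrt{k}$ comes from $\|z^*\|_2 \le \sqrt{k}\|z^*\|_\infty \le \sqrt{k}\|z^*\|_1$ (or the reverse conversion), and a further $\kappa$ from $\|A\|/s_{\min}(A) \le \kappa(A) \le \kappa^2$. I would handle this by writing the perturbation identity in terms of pseudoinverses, bounding $\|\hat A^\dagger\|$ directly, and keeping careful track of which norm ($\ell_1$ vs.\ $\ell_2$, operator vs.\ Frobenius) appears on each quantity. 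No additional machinery beyond the Woodbury-style identity for $\hat A^\dagger - A^\dagger$ and the assumed smallness $\hat\delta_2, \hat\delta_3 \lesssim 1/(\kappa^2 \sqrt{k})$ (which keeps $\hat A^\dagger$ controlled) is required.
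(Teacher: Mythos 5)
The paper does not give its own proof of this lemma (it is stated as ``Adapted from Lemma E.13 in \cite{zsj+17}''), so I evaluate your argument on its merits. It is correct, and in fact gives a bound that is \emph{tighter} than what the lemma claims. The skeleton is right: set $A$ with columns $s_i\ov w_i$ and $\hat A$ with columns $V\hat u_i$, note the system for $z^*$ is consistent ($Q_1=Az^*$, residual $r=0$), get the columnwise bound $\|V\hat u_i - s_i\ov w_i\|\le \hat\delta_2+\hat\delta_3$ (valid since $|s_i|=1$ and $UU^\top\ov w_i=\ov w_i$), hence $\|\hat A - A\|_F\le\sqrt k(\hat\delta_2+\hat\delta_3)$, and then invoke the pseudoinverse perturbation identity $\hat z - z^* = \hat A^\dagger(\hat Q_1 - Q_1) + \hat A^\dagger(A-\hat A)z^*$ (the residual term vanishes). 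Your conditioning estimates are also right: $\|w_i\|\in[s_{\min}(W),s_{\max}(W)]$ gives $\kappa(A)\le\kappa^2$ and $s_{\min}(A)\ge 1/\kappa$; the smallness hypothesis on $\hat\delta_2,\hat\delta_3$ then ensures $\|\hat A-A\|\le s_{\min}(A)/2$ so $\|\hat A^\dagger\|\le 2/s_{\min}(A)\le 2\kappa$. Chaining with $\|Q_1\|\le\|A\|\|z^*\|_2\le\kappa\|z^*\|_2$ gives
\begin{align*}
|\hat z_i - z_i^*| \le \|\hat z - z^*\|_2 \lesssim \bigl(\kappa\sqrt k(\hat\delta_2+\hat\delta_3)+\kappa^2\hat\delta_4\bigr)\|z^*\|_2,
\end{align*}
which is dominated by the stated $(\kappa^4 k^{3/2}(\hat\delta_2+\hat\delta_3)+\kappa^2 k^{1/2}\hat\delta_4)\|z^*\|_1$ since $\|z^*\|_2\le\|z^*\|_1$ and $\kappa,k\ge1$. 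Two small caveats. First, your intermediate claim $\|A^\dagger\|\lesssim\kappa^2/s_{\max}(W)$ is not scale-invariant (the left side depends only on the normalized $\ov w_i$); the correct bound is $\|A^\dagger\|=1/s_{\min}(A)\le\kappa$, which is what your final chain actually uses. Second, the last paragraph of your plan is misdirected: you worry about ``absorbing'' an extra factor of $\kappa^2 k$ to match the statement, but no such work is needed --- your clean bound is \emph{stronger} than the lemma, and the extra slack in the stated constants ($\kappa^4 k^{3/2}$ and $\kappa^2 k^{1/2}$, with $\|z^*\|_1$) is simply looseness inherited from the original \cite{zsj+17} argument, not a gap in yours. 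Similarly, the hedge ``up to a factor of $\sqrt k$'' in the row-norm ratio is unnecessary: $\|w_i\|\ge s_{\min}(W)$ holds exactly.
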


\begin{lemma}[Adapted from Lemma E.14 in \cite{zsj+17}]\label{lem:solution_system_2}
Let $U\in \R^{d\times k}$ be the orthogonal column span of $W^*$ and $V$ be an orthogonal matrix satisfying that $\| VV^\top - UU^\top \| \leq  \wh{\delta}_2 \lesssim 1/(\kappa \sqrt{k})$. For each $i\in[k]$, let $\wh{u}_i$ denote the vector satisfying $\| s_i \wh{u}_i - V^\top \ov{ w}_i^*\| \leq \wh{\delta}_3 \lesssim 1/(\sqrt{k}\kappa^3)$.

Let $Q_2$ be defined as in Eq.~\eqref{eq:def_Q2} and $\wh{Q}_2$ be the empirical version of $Q_2$ such that $\|Q_2 - \wh{Q}_2\|_F \leq \wh{\delta}_4 \|Q_2\|_F \leq  \frac{1}{4}\|Q_2\|_F$.
Let $ r^* \in \R^k$ and $ \wh{r} \in \R^k$ be defined as in Eq.~\eqref{eq:zr_star} and Line~\ref{lin:linear_system}.
Then
\begin{align*}
|\wh{r}_i - r_i^*| \leq  ( k^3\kappa^{8}\wh{\delta}_3 +  \kappa^2 k^{2} \wh{\delta}_4) \| r^*\| .
\end{align*} 
\end{lemma}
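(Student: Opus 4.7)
\textbf{Proof plan for Lemma~\ref{lem:solution_system_2}.} The plan is to recast both $r^*$ and $\hat r$ as solutions to overdetermined linear systems and invoke a standard perturbation bound for least squares, with the main work going into (i) estimating the column-wise perturbation of the design matrix, and (ii) lower-bounding the smallest singular value of the unperturbed design matrix via the near-orthogonality structure inherited from $V$.

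First, I would vectorize the two optimization problems. Let $\tilde u_i := V^\top \ov w_i^*$ and form $B \in \R^{k^2 \times k}$ whose $i$-th column is $\vect(\tilde u_i \tilde u_i^\top)$; similarly let $\hat B$ have columns $\vect(\hat u_i \hat u_i^\top)$. Using the definition of $Q_2$ in \eqref{eq:def_Q2} we have $\vect(Q_2) = Br^*$ (up to the fact that $s_i^2 = 1$), and $\hat r$ is the least-squares solution of $\hat B \hat r = \vect(\hat Q_2)$. Standard perturbation theory for full-column-rank least squares then yields
\begin{align*}
\|\hat r - r^*\|_2 \;\lesssim\; \frac{1}{\sigma_{\min}(B)}\!\left( \|\hat B - B\|_F \, \|r^*\|_2 + \|\vect(\hat Q_2) - \vect(Q_2)\|_2 \right),
\end{align*}
provided $\|\hat B - B\| \leq \sigma_{\min}(B)/2$, which I would verify at the end from the assumed smallness of $\hat\delta_2, \hat\delta_3$.

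Next I would estimate the three quantities on the right. For $\|\hat B - B\|_F$: expanding each column difference gives $\hat u_i \hat u_i^\top - \tilde u_i \tilde u_i^\top = (\hat u_i - s_i\tilde u_i)(\hat u_i + s_i \tilde u_i)^\top + s_i\tilde u_i(\hat u_i - s_i\tilde u_i)^\top$, whose Frobenius norm is $\lesssim \hat\delta_3$ (since $\|\tilde u_i\| \le 1$ and $\|\hat u_i\| \le 1 + \hat\delta_3$). Summing over the $k$ columns, $\|\hat B - B\|_F \lesssim \sqrt{k}\,\hat\delta_3$. For the residual term, the hypothesis directly gives $\|\hat Q_2 - Q_2\|_F \leq \hat\delta_4 \|Q_2\|_F$, and by triangle inequality $\|Q_2\|_F \leq \sigma_{\max}(B)\|r^*\|_2$.

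The main obstacle is lower-bounding $\sigma_{\min}(B)$ in terms of $\kappa$ and $k$. The Gram matrix has entries $(B^\top B)_{ij} = (\tilde u_i^\top \tilde u_j)^2$. Since $\|VV^\top - UU^\top\| \le \hat\delta_2$ and the normalized rows $\{\ov w_i^*\}$ lie in the column span of $U$, the vectors $\tilde u_i$ are $O(\hat\delta_2)$-close to an orthonormal-like system whose inner products are those of $\ov w_i^*$ among themselves. The squared-inner-product (Hadamard-square) Gram matrix is lower-bounded in terms of the smallest singular value of $\ov W^*$, which in turn is controlled by $\kappa$; the resulting bound is $\sigma_{\min}(B) \gtrsim 1/(\kappa^{O(1)} k^{O(1)})$, matching the literature. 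This is where the polynomial dependence on $\kappa$ and $k$ in the final estimate originates. Once the three ingredients are assembled and $\sigma_{\max}(B) \lesssim \sqrt{k}$ is noted, combining them gives
\begin{align*}
|\hat r_i - r_i^*| \;\leq\; \|\hat r - r^*\|_2 \;\lesssim\; \left(k^3 \kappa^{8}\hat\delta_3 + \kappa^2 k^{2}\hat\delta_4\right)\|r^*\|_2,
\end{align*}
as claimed. I expect the bookkeeping of $\kappa$ and $k$ powers in $\sigma_{\min}(B)$ to be the most delicate part, since it requires carefully propagating the $\hat\delta_2$-perturbation of the subspace through a Hadamard-square Gram matrix; the rest is routine linear-algebra perturbation analysis.
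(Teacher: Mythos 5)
The paper does not prove Lemma~\ref{lem:solution_system_2}; it is quoted directly from Lemma E.14 of \cite{zsj+17}, and your vectorize-and-perturb strategy is the same route taken there. The pieces you lay out are sound: writing $\vect(Q_2)=Br^*$ with $B\in\R^{k^2\times k}$ whose $i$-th column is $\vect(\tilde u_i\tilde u_i^\top)$ (with $\tilde u_i:=V^\top\ov w_i^*$), the column-wise perturbation $\|\hat B - B\|_F\lesssim\sqrt k\,\hat\delta_3$, the right-hand-side bound $\|\hat Q_2-Q_2\|_F\le\hat\delta_4\,\sigma_{\max}(B)\|r^*\|_2$ together with $\sigma_{\max}(B)\le\|B\|_F\lesssim\sqrt k$, and the full-column-rank least-squares sensitivity inequality $\|\hat r-r^*\|\lesssim\sigma_{\min}(B)^{-1}(\|\hat B-B\|_F\|r^*\|+\|\hat Q_2-Q_2\|_F)$ (valid once $\|\hat B-B\|\le\sigma_{\min}(B)/2$).

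The gap is exactly the step you defer, and it is not cosmetic bookkeeping: the entire $(k,\kappa)$ polynomial in the conclusion lives in the lower bound on $\sigma_{\min}(B)$, so a reviewer cannot accept the $k^3\kappa^8$ and $\kappa^2k^2$ factors from the plan alone. To close it, note $B^\top B = G\circ G$, the Hadamard square of the $k\times k$ Gram matrix $G$ with $G_{ij}=\tilde u_i^\top\tilde u_j$, and invoke the Schur product theorem: $\lambda_{\min}(G\circ G)\ge\lambda_{\min}(G)\cdot\min_i G_{ii}$. Since each $\ov w_i^*$ lies in $\mathrm{col}(U)$, one has $G-\ov W^*\ov W^{*\top}=\ov W^*(VV^\top-UU^\top)\ov W^{*\top}$, so the assumed bound on $\hat\delta_2$ controls this perturbation; and $\lambda_{\min}(\ov W^*\ov W^{*\top})=\sigma_{\min}(\ov W^*)^2\ge\kappa^{-2}$, because $\ov W^*=\Diag(\|w_i^*\|^{-1})W^*$ and $\max_i\|w_i^*\|\le\sigma_{\max}(W^*)$. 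Assembling this chain gives the needed $\sigma_{\min}(B)$ lower bound, after which plugging in yields the stated inequality (with slack on the powers). Without writing that chain out explicitly, the proof is incomplete at precisely the point where the quantitative content resides.
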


Now we are in the position of proving Theorem~\ref{thm:mom_recover_nn}.
\begin{proof}
Consider Algorithm~\ref{alg:mom}. 
First, by Lemma~\ref{lem:subspace_estimation} and Lemma~\ref{lem:error_P2}, we have
\begin{align}\label{eq:wi_minus_siVui_part1}
\| VV^\top \ov w_i -\ov w_i \| \leq & ~ ( \| \wh{P}_2 - P_2 \| / s_k(P_2) + \epsilon)  \notag \\
\leq & ~ ( \poly(k,\kappa) \| \wh{P}_2 - P_2 \| + \epsilon) \notag \\
\leq & ~  \poly(k,\kappa)  \epsilon.
\end{align}

Next, combining Lemma~\ref{lem:tensor_fact} and Lemma~\ref{lem:error_R3}, we have
\begin{align}\label{eq:wi_minus_siVui_part2}
\|V^\top \ov w_i - s_i \wh{u}_i\| \leq  \poly(k,\kappa) \|\wh{R}_3 - R_3\|
\leq  \epsilon \poly(k, \kappa).
\end{align}

It thus follows that
\begin{align}\label{eq:wi}
\|\ov w_i- s_iV \hat {u}_i\| \leq & ~\| VV^\top \ov w_i -\ov w_i \| + \| V V^\top \ov w_i - V s_i \hat {u}_i\| \notag \\
= &~ \| VV^\top \ov w_i -\ov w_i \| + \| V^\top \ov w_i - s_i \hat {u}_i\|\notag \\
\leq &~ \epsilon \poly(k,\kappa),
\end{align}
where the first step applies triangle inequality and the last step uses Eq.~\eqref{eq:wi_minus_siVui_part1} and Eq.~\eqref{eq:wi_minus_siVui_part2}.

We proceed to bound the error in $\wh{r}$ and $\wh{z}$. We have,
\begin{align}\label{eq:r-r*}
|\wh{r}_i - r_i^*| \lesssim &~ \poly(k,\kappa) ( \|Q_2-\wh{Q}_2\| + \|s_i \hat{u}_i - V^\top \ov w_i\|) \cdot \| r^*\|\notag \\
\lesssim &~ \epsilon \poly(k,\kappa) \cdot \|r^*\|
\end{align}
where the first step comes from Lemma~\ref{lem:solution_system_2} and the second step comes from Lemma~\ref{lem:error_Q1} and Eq.~\eqref{eq:wi}.
Furthermore,
\begin{align}\label{eq:z-z*}
| \wh{z}_i - z_i^*| \lesssim &~ \poly(k,\kappa) \cdot (\|Q_1-\wh{Q}_1\|+ \|s_i \hat{u}_i - V^\top \ov w_i\| + \|VV^\top - UU^\top \|) \cdot \|z^*\|_1 \notag \\
\lesssim &~ \epsilon \poly(k,\kappa) \|z^*\|_1,
\end{align}
where the first step comes from Lemma~\ref{lem:solution_system_1} and the second step comes from combining Lemma~\ref{lem:error_Q1}, Lemma~\ref{lem:subspace_estimation}, and Eq.~\eqref{eq:wi}.
Finally, combining Eq.~\eqref{eq:r-r*}, Eq.~\eqref{eq:z-z*} and  Eq.~\eqref{eq:wi}, the output in Line~\ref{lin:wi0} satisfies $\|{\hat w}_i - w_i \|_F \leq \epsilon \poly(k,\kappa)\cdot \|w_i\|_F$. Since $v_i$ are discrete values, they are exactly recovered.

\end{proof}

\subsubsection{Exact recovery of neural networks from noiseless data}\label{sec:gd+mom}

In this section we prove Theorem~\ref{thm:exact_nn}. 
Similar to Appendix~\ref{sec:mom}, denote $W = [w_1,\cdots,w_k]^\top$ where $w_i \in \R^d$ and $\hat W = [\hat w_1,\cdots,\hat w_k]^\top$. We use $\cal D$ to denote the distribution of $x \sim \mathcal{N}(0,I_d)$ and $y = \langle v, \sigma(W x) \rangle$. 
We define the empirical loss for explored features and the population loss as follows,
\begin{align}
    \label{eq:empirical_loss}
L_n(\hat W) = &~ \frac{1}{2n}\sum_{(x,y) \in S^{(1)}} \left( \sum_{i=1}^k v_i \sigma( \hat w_{i}^\top  x_i) - y_i \right) ^2,\\
\label{eq:population_loss}
L(\hat W) = &~ \frac{1}{2} {\mathbb{E}_{\cal D}} \left[\left( \sum_{i=1}^k v_i \sigma( \hat w_{i}^\top  x) - y \right) ^2\right].
\end{align}

\begin{algorithm*}[h]\caption{Using method of moments and gradient descent to recover neural network parameters}\label{alg:mom_gd}
\begin{algorithmic}[1]
\Procedure{$\textsc{NeuralNetRecovery}$}{$S = \{(x_i,y_i): i \in [n]\}$}
\State Let $S^{(1)} \leftarrow \{(x,y) \in S: \|x\| \leq B\}$
\State Compute $(v,\hat W) \leftarrow \textsc{NeuralNetNoisyRecovery}(S^{(1)})$
\State Find $W^{(1)}$ as the global minimum of $L_n(\cdot)$ by gradient descent initialized at $\hat W$, where $L_n(\cdot)$ is defined in Eq.~\eqref{eq:empirical_loss}.
\State {\bf Return} $(v,W^{(1)})$
\EndProcedure

\end{algorithmic}
\end{algorithm*}

\begin{definition}
	Let $s_i$ be the $i$-th singular value of $W$, $\lambda:=\prod_{i=1}^{k}(s_i/s_k)$. Let $\tau = (3s_1/2)^{4p}/\min_{z\in [s_k/2,3s_1/2]} \{\rho^2(z) \}$.
\end{definition}

We use the follow results adapted from \cite{zsj+17}. The only difference is that the rewards are potentially truncated if $\|x \| \geq B$, and due to $B = d \cdot \poly \log (d)$ we can bound its difference between standard Gaussian in the same way as Appendix~\ref{sec:mom}.
\begin{lemma}[Concentration, adapted from Lemma D.11 in \cite{zsj+17}]\label{lem:concentration_hessian}
Let samples size $n\geq \epsilon^{-2}d\tau \poly(\log ( d/t))$, then with probability at least $1-t$,
\begin{align*}
\|\nabla^2L_n(W)- \nabla^2 L(W)\| \lesssim ks_1^{2p} \epsilon  + \poly(B_W, d) e^{-\Omega(d)}.
\end{align*}
\end{lemma}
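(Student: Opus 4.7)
The plan is to first compute the Hessian of the squared loss explicitly at $W$. Because $y = \sum_j v_j \sigma(w_j^\top x)$ for $(x,y)\sim \cal D$, the residual vanishes at $W$ and the Hessian contribution from $\sigma''$ drops out under Assumption~\ref{asp:hessian_activation}(b). Thus each $k\times k$ block of $\nabla^2 L_n(W)$ (resp.\ $\nabla^2 L(W)$) can be written as an empirical (resp.\ population) average of matrices of the form
\begin{equation*}
M_{ij}(x) \;=\; v_i v_j\, \sigma'(w_i^\top x)\,\sigma'(w_j^\top x)\, x x^\top \cdot \mathbf{1}\{\|x\|\le B\},
\end{equation*}
where in the population expression the indicator is replaced by $1$. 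I would assemble the block matrix $H(x)$ whose $(i,j)$-block is $M_{ij}(x)$; then $\nabla^2 L_n(W) = \frac{1}{n}\sum_{\ell} H(x_\ell)$ and $\nabla^2 L(W) = \E[H(x)\cdot 1]$ with no indicator.

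Next I would split the error in two pieces:
\begin{equation*}
\nabla^2 L_n(W) - \nabla^2 L(W) \;=\; \bigl(\nabla^2 L_n(W) - \E[H(x)\mathbf{1}\{\|x\|\le B\}]\bigr) \;+\; \E[H(x)\mathbf{1}\{\|x\|> B\}].
\end{equation*}
The second (truncation) term is controlled by a tail bound: Assumption~\ref{asp:1st_derivative_poly_bound} gives $|\sigma'(w_i^\top x)|\lesssim \|w_i\|^p\|x\|^p$, so the integrand is polynomial in $\|x\|$ and $B_W$, while $\Pr[\|x\|>B]\le e^{-\Omega(d \log d)}$ by Claim~\ref{cla:chi_squared_tail} since $B\gtrsim d\,\polylog(d)$. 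This gives the $\poly(B_W,d)\,e^{-\Omega(d)}$ term.

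For the first (concentration) term I would apply a matrix Bernstein inequality (as in Claim~\ref{cla:modified_bernstein_non_zero}) to the $(kd)\times (kd)$-valued i.i.d.\ random matrices $H(x_\ell)\mathbf{1}\{\|x_\ell\|\le B\}$. The almost-sure operator-norm bound on each summand follows from $\|x_\ell\|\le B$ and the polynomial growth of $\sigma'$, giving something of order $k s_1^{2p} B^{2p+2}\le k s_1^{2p}\poly(d,\log(1/t))$ on the high-probability event. The variance proxy is the dominant step: using Claim~\ref{cla:three_gaussian} together with Assumption~\ref{asp:1st_derivative_poly_bound} to bound fourth moments of $\sigma'(w_i^\top x)\sigma'(w_j^\top x)\|x\|^2$, one obtains a bound $\lesssim k^2 s_1^{4p}\cdot d$ on the matrix variance. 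Bernstein then yields the operator-norm deviation $\lesssim k s_1^{2p}\bigl(\sqrt{d\log(kd/t)/n} + d\,\polylog/n\bigr)$, and the subgaussian term dominates once $n\gtrsim \epsilon^{-2}d\tau\polylog(d/t)$, where the factor $\tau=(3s_1/2)^{4p}/\min_z\rho^2(z)$ absorbs the activation-dependent constants and the $k$-dependence when one states the bound uniformly over the block structure.

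The main obstacle is bookkeeping in the variance computation for the full $(kd)\times(kd)$ block matrix $H(x)$, making sure that both $\|\E[H(x)^2]\|$ and $\|\E[H(x)]\|^2$ are correctly bounded by $\tau$-type quantities so that the rate scales as $ks_1^{2p}\epsilon$ rather than picking up spurious $k^{3/2}$ or $d^{1/2}$ factors; this requires using the orthonormal-basis trick together with the moment bounds from Claim~\ref{cla:three_gaussian} block by block, exactly as in the proof of Lemma D.11 of \cite{zsj+17}, with the sole new ingredient being the truncation term above.
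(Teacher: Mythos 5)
Your proposal matches the paper's approach: the paper simply invokes Lemma D.11 of \cite{zsj+17} and remarks that the only new ingredient is the truncation at $\|x\|\le B$, which—because $B\gtrsim d\,\polylog(d)$—contributes only the exponentially small $\poly(B_W,d)\,e^{-\Omega(d)}$ term, exactly as you argue via the tail bound in Claim~\ref{cla:chi_squared_tail}. Your reconstruction of the concentration part (residual vanishing at $W$ kills the $\sigma''$ blocks, block-structured $H(x)=\{v_iv_j\sigma'(w_i^\top x)\sigma'(w_j^\top x)\,xx^\top\}_{ij}$, matrix Bernstein with an a.s.\ bound on the event $\|x\|\le B$ and a variance computation via Claim~\ref{cla:three_gaussian}) is a correct sketch of the argument the paper delegates to \cite{zsj+17}.
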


\begin{lemma}[Adapted from Lemma D.16 in \cite{zsj+17}]\label{lem:empirical_lipschitz_hessian}
Assume activation $\sigma(\cdot)$ satisfies Assumption \ref{asp:hessian_activation} and Assumption~\ref{asp:rho>0}. Then for any $t  \in (0,1)$, if
$n \geq d \cdot \poly(\log (d/t))$,
with probability at least $1 -t$, for any $\hat W$ (which is not necessarily to be independent of samples) satisfying $\|W - \hat{W}\| \leq s_k/4$, we have
\begin{align*}
\|\nabla^2 L_n(\hat W)-  \nabla^2 L_n(W)\|\leq k s_1^p \|W - \hat{W}\| d^{(p+1)/2}. 
\end{align*}
\end{lemma}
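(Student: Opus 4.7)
The plan is to reduce the Hessian-Lipschitz bound to a sample-wise gradient increment, then control that increment via Gaussian anti-concentration and a net argument that handles data-dependence of $\hat W$. Let $g(W,x) := \sum_{i=1}^k v_i \sigma(w_i^\top x)$. Under Assumption~\ref{asp:hessian_activation}(b), $\sigma''$ is supported on a finite set, so with probability one over the Gaussian samples the second-derivative term $\nabla_W^2 g(W,x)$ vanishes. Consequently the empirical Hessian collapses to a pure outer product,
\begin{align*}
\nabla^2 L_n(W) \;=\; \frac{1}{n}\sum_{(x,y)\in S^{(1)}} \nabla_W g(W,x)\,\nabla_W g(W,x)^\top,
\end{align*}
whose $(i,j)$ block is $\tfrac{1}{n}\sum v_i v_j \sigma'(w_i^\top x)\sigma'(w_j^\top x)\,xx^\top$.

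Next I would apply the factorization $AA^\top - BB^\top = \tfrac12(A-B)(A+B)^\top + \tfrac12(A+B)(A-B)^\top$ pointwise in $x$ to obtain
\begin{align*}
\|\nabla^2 L_n(\hat W)-\nabla^2 L_n(W)\| \;\leq\; \frac{1}{n}\sum_x \|\nabla_W g(\hat W,x)-\nabla_W g(W,x)\|\cdot\bigl(\|\nabla_W g(\hat W,x)\|+\|\nabla_W g(W,x)\|\bigr).
\end{align*}
Assumption~\ref{asp:1st_derivative_poly_bound} and $\|\hat W-W\|\leq s_k/4\leq s_1/4$ give $\|\nabla_W g(\cdot,x)\|\lesssim \sqrt k\, s_1^p\|x\|^{p+1}$. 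The core task is then to bound the empirical average of $\|\nabla_W g(\hat W,x)-\nabla_W g(W,x)\|$. For each neuron $i$, writing $\sigma'$ as a piecewise polynomial of degree $p$ with finitely many breakpoints (Assumption~\ref{asp:hessian_activation}(b)), the difference $|\sigma'(\hat w_i^\top x)-\sigma'(w_i^\top x)|$ splits into a Lipschitz contribution $\lesssim \|x\|^p \|\hat w_i - w_i\|$ on the event that no breakpoint is crossed, and a bounded ``jump'' contribution supported on the event $\gE_i(\hat W,x):=\{w_i^\top x \text{ lies within } \|\hat w_i-w_i\|\|x\| \text{ of a breakpoint of }\sigma'\}$.

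The jump event is controlled by Gaussian anti-concentration: because $w_i^\top x\sim \gN(0,\|w_i\|^2)$ with $\|w_i\|\geq s_k$, $\Pr[\gE_i(\hat W,x)]\lesssim \|\hat w_i - w_i\|\sqrt{d}/s_k$ after restricting to the high-probability shell $\|x\|\lesssim \sqrt{d\log(n/t)}$. Summing over $i$ and combining with the Lipschitz part yields $\E\|\nabla_W g(\hat W,x)-\nabla_W g(W,x)\|\lesssim \sqrt k\, s_1^p \,d^{(p+1)/2}\|W-\hat W\|$ (up to polylogs), and Bernstein's inequality then transfers this expectation bound to the empirical average at a single fixed $\hat W$ using $n\gtrsim d\,\poly(\log(d/t))$ samples. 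Multiplying by the $\|\nabla_W g\|\lesssim \sqrt k\, s_1^p \|x\|^{p+1}$ factor from the outer-product expansion gives the target $k s_1^p\|W-\hat W\| d^{(p+1)/2}$.

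The main obstacle is that the statement must hold for any $\hat W$ in the ball of radius $s_k/4$, and in our application $\hat W$ arises along a gradient-descent trajectory that depends on the very samples we use. A single union bound over a continuous ball is impossible because of the discontinuities of $\sigma'$. My plan is therefore to place a $\delta$-net on $\{\hat W: \|\hat W - W\|\leq s_k/4\}$ at scale $\delta=1/\poly(d,n)$, of cardinality $\exp(O(kd\log(dn)))$, apply the Bernstein bound at each net point, and then argue a deterministic ``stability'' step: moving $\hat W$ by $\delta$ changes the sign pattern on at most $O(n\delta\sqrt d/s_k)$ samples, each perturbing the empirical Hessian by $\poly(d)/n$. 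Choosing $\delta$ small enough so that the net-cell perturbation is dominated by the target bound closes the argument. This net-plus-anti-concentration step is the most delicate ingredient, since ReLU-like activations preclude a direct Lipschitz treatment of $\sigma'$.
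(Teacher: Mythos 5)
Your first four ingredients are on target: under Assumption~\ref{asp:hessian_activation}(b) the empirical Hessian is a pure outer product, the $aa^\top - bb^\top$ factorization is the right way to reduce to gradient increments, and the jump/Lipschitz split controlled by Gaussian anti-concentration plus Bernstein is exactly the skeleton of Lemma~D.16 in \cite{zsj+17} (the paper itself doesn't re-prove this; it only notes the Gaussian truncation at $\|x\|\le B$ is harmless because $B = d\cdot\poly\log d$).

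The gap is in how you handle uniformity over $\hat W$, and it is not a delicacy — the $kd$-dimensional net as you describe it does not close. First, the sample complexity: a $\delta$-net of the ball $\{\|\hat W - W\|\le s_k/4\}$ in $\mathbb{R}^{k\times d}$ has $\exp(\Theta(kd\log(dn/\delta)))$ points, so the per-point failure probability must be $\exp(-\Omega(kd\log(dn)))$; Bernstein then yields a sign-flip count $N_i(r)\lesssim n\mu(r) + kd\log(dn)$, and the additive term $kd\log(dn)$ forces $n\gtrsim kd\log(dn)$ just to make that term $O(n\mu)$, whereas the lemma only assumes $n\ge d\cdot\poly\log(d/t)$. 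Second, and more fundamentally, that additive term does not vanish as $\|W-\hat W\|\to 0$: the Lipschitz bound you are trying to prove must scale linearly in $r=\|W-\hat W\|$, but for $r$ small (e.g.\ $\hat W$ within one net cell of $W$) the union-bound-driven fluctuation $kd\log(dn)/n$ is a constant independent of $r$, so the argument cannot deliver the required linear decay. The ``deterministic stability'' step you invoke to patch this is itself a concentration statement that needs the same union bound, so it doesn't rescue the argument.

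The mechanism that actually makes the proof work in \cite{zsj+17} — and which your plan is missing — is that no net over $\hat W$ is needed at all. For the piecewise-linear case the sign-flip event at neuron $i$ and sample $x$ satisfies the one-sided containment $\{\sigma'(\hat w_i^\top x)\ne \sigma'(w_i^\top x)\}\subseteq\{|w_i^\top x|\le \|\hat w_i-w_i\|\,\|x\|\}$, and the right-hand event depends on $\hat W$ \emph{only through the scalar $\|\hat w_i-w_i\|$} and on the sample-independent ground truth $w_i$. Thus for every $\hat W$ with $\|\hat W - W\|\le r$ the empirical sign-flip count is bounded by the fixed random variable $N_i(r)=\#\{x: |w_i^\top x|\le r\|x\|\}$, which Bernstein controls with only a $k\cdot\log$-sized union bound (over neurons and a dyadic grid in $r$, using monotonicity of $N_i(\cdot)$), keeping the additive term $O(\log(d/t))$ and the sample complexity at $d\cdot\poly\log(d/t)$. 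Replacing your $kd$-dimensional net with this one-sided containment plus a one-dimensional dyadic argument is what closes the proof.
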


Now we prove Theorem~\ref{thm:exact_nn}.
\begin{proof}
The exact recovery consists of first finding (exact) $v$ and (approximate) $\hat W$ close enough to $W$ by tensor method (Appendix~\ref{sec:mom}), and then minimizing the empirical loss $L_n(\cdot)$. We will prove that $L_n(\cdot)$ is locally strongly convex, thus we find the precise $W$.

From Lemma D.3 from \cite{zsj+17} we know:
\begin{equation}
\Omega(\rho(s_k)/\lambda) I \preceq \nabla^2 L(W)\preceq O(k  s_1^{2p}) I.  
\end{equation}

Combining Lemma~\ref{lem:concentration_hessian}, $d \geq \log (B_W / \lambda)$, and $n\geq \frac{k^2\lambda^2s_1^{4p}}{\rho^2(s_k)}d\tau \poly(\log (d/ t)) $, we know $\nabla^2 L_n(W)$ must be positive definite.

Next we uniformly bound Lipschitzness of $\nabla^2 L_n$. 
From Lemma~\ref{lem:empirical_lipschitz_hessian} there exists a universal constant $c$, such that for all  $\hat W$ that satisfies $ \|W - \hat{W}\|\leq c ks_1^{2p} / (ks_1^p d^{(p+1)/2}) = c s_1^p d^{-(p+1)/2} $, $\nabla L_n^2(\hat W)\gtrsim k s_1^{2p}$ holds uniformly. So there is a unique miminizer of $L_n$ in this region. 

Notice $L_n(W) = 0$, therefore we can find $W$ by directly minimizing the empirical loss as long as we find any $\hat W$ in this region. This can be achieved by tensor method in Appendix~\ref{sec:mom}. We thus complete the proof.  
\end{proof}

\section{Omitted Proofs in Section~\ref{sec:det:mdp}}

For the proofs of Theorem~\ref{thm:ag}, Example~\ref{ex:rank-k}, and Example~\ref{example:qUx}, we refer the readers to \cite{huang2021optimal}.

\begin{lemma} \label{lem:ag:sample}
Consider the polynomial family $\F_{\gV}$ of dimension $D$. Assume that $n>2D$. For any $E \in \mathbb{R}^d$ that is of positive measure, by sampling $n$ samples $\{x_i\}$ i.i.d. from $\PP_{x \in \mathcal{N}(0,I_d)} (\cdot| x \in E)$ and observing the noiseless feedbacks $y_i = f^*(x_i)$, one can almost surely uniquely determine the $f^*$ by solving the system of equations $y_i =f(x_i)$, $i=1,\dots,n$, for $f \in \F_{\gV}$.
\end{lemma}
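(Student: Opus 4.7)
The plan is to directly combine Theorem~\ref{thm:ag} with a standard absolute-continuity argument for the sampling distribution. By Theorem~\ref{thm:ag}, since $n \geq 2D$, there exists a Lebesgue-measure-zero set $N \subset (\mathbb{R}^d)^n$ such that whenever $(x_1,\ldots,x_n) \notin N$, the system $y_i = f(x_i)$ for $f \in \F_{\gV}$ admits a unique solution $f$ (or none). Since the true $f^*$ produces the observations $y_i = f^*(x_i)$, there is at least one solution, hence the unique one is $f^*$ itself. So it suffices to argue that, with probability one over the sampling procedure, the tuple $(x_1,\ldots,x_n)$ avoids $N$.

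To this end, I would observe that the conditional distribution $\PP_{x \sim \mathcal{N}(0,I_d)}(\cdot \mid x \in E)$ is well-defined because $E$ has positive Lebesgue measure, and moreover equals $\mu(E)^{-1} \mathbf{1}_E(x)\, p_{\mathcal{N}}(x)\, dx$ (up to normalization by the Gaussian probability of $E$, which is positive since the Gaussian density is strictly positive on $\mathbb{R}^d$). This measure is absolutely continuous with respect to Lebesgue measure on $\mathbb{R}^d$. Since the $n$ samples are drawn i.i.d., the joint law on $(\mathbb{R}^d)^n$ is the $n$-fold product measure, which is likewise absolutely continuous with respect to the Lebesgue measure on $(\mathbb{R}^d)^n$ (its density is the product of the marginal densities).

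Consequently, for any Lebesgue-null set $N \subset (\mathbb{R}^d)^n$, its probability under the product measure is zero. Applying this to the specific $N$ produced by Theorem~\ref{thm:ag}, we conclude that $(x_1,\ldots,x_n) \notin N$ almost surely, so $f^*$ is the unique solution of the observed system $y_i = f(x_i)$ within $\F_{\gV}$ with probability one, completing the proof.

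The only nontrivial point is the absolute-continuity step, which depends crucially on the hypothesis $\mu(E) > 0$; were $E$ a null set, the conditional distribution would concentrate on $E$ itself and could in principle be supported on $N$. Once this is in hand, the argument reduces to quoting Theorem~\ref{thm:ag} and the standard fact that absolutely continuous measures assign probability zero to Lebesgue-null sets.
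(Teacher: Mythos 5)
Your proof is correct and follows essentially the same route as the paper: invoke Theorem~\ref{thm:ag} for the Lebesgue-null exceptional set $N$, then observe that the i.i.d.\ conditional-Gaussian sampling distribution is absolutely continuous with respect to Lebesgue measure on $(\mathbb{R}^d)^n$ (the paper makes the identical point by directly computing the conditional probability of landing in $N$ and getting $0/[\PP(x_1 \in E)]^n = 0$). The framing via absolute continuity versus the explicit ratio computation is a cosmetic difference only.
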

\begin{proof}
By Theorem~\ref{thm:ag}, there exists a set $N \in \mathbb{R}^d \times \ldots \mathbb{R}^d$ of Lebesgue measure zero, such that if $(x_1,\cdots,x_n) \notin N$, one can uniquely determine the $f^*$ by the observations on the $n$ samples. Therefore, we only need to show that with probability 1, the sampling procedure returns $(x_1,\dots,x_n) \notin N$. This is because
\begin{align*}
    \PP(x_1,\dots,x_n \in N) &= \PP_{x_i \in \gN(0,I_d)}(  (x_1,\dots,x_n) \in N  \mid  x_1, \dots, x_n \in E) \\
    &= \frac{ \PP_{x_i \in \gN(0,I_d)}(  (x_1,\dots,x_n) \in N \cap (E \times \dots \times  E)) }{\PP_{x_i \in \gN(0,I_d)}(  (x_1,\dots,x_n) \in (E \times \dots \times  E)) }  \\
    &= \frac{0}{[ \PP_{x_1 \in \gN(0,I_d)}(  x_1 \in  E)  ]^n} \\
    &=0.
\end{align*}
\end{proof}

By Lemma~\ref{lem:ag:sample} above, it is not hard to see that Algorithms~\ref{alg:querypoly} and~\ref{alg:onlinerlpoly} work. 

\section{Omitted Constructions and Proofs in Subsection~\ref{sec:det_lb}}

\paragraph{Construction of the Reward Functions} The following construction of the polynomial hard case is adopted from \cite{huang2021optimal}.

Let $d$ be the dimension of the feature space. Let $e_i$ denotes the $i$-th standard orthonormal basis of $\R^{d}$, i.e., $e_i$ has only one $1$ at the $i$-th entry and $0$'s for other entries. Let $p$ denote the highest order of the polynomial. We assume $d \gg p$. We use $\Lambda$ to denote a subset of the $p$-th multi-indices
\[
    \Lambda = \{ (\alpha_1,\dots,\alpha_p) | 1 \leq \alpha_1 \leq \dots \leq \alpha_p \leq d \}.
\]
For an $\alpha = (\alpha_1,\dots,\alpha_p) \in \Lambda$, denote $M_\alpha = e_{\alpha_1} \otimes \dots \otimes e_{\alpha_p}$, $x_\alpha =  e_{\alpha_1} + \dots + e_{\alpha_p}$. 

The model space $\cM$ is a subset of rank-1 $p$-th order tensors, which is defined as $\cM =   \{ M_\alpha | \alpha \in \Lambda \}$. We define two subsets of feature space $\cF_0$ and $\cF$ as $\cF_0 = \{ x_\alpha | \alpha \in \Lambda \}, \ \cF = \mathrm{conv}(\cF_0)$.
For $M_\alpha \in \cM$, $x \in \cF$, define $r(M_\alpha, x)$ as $ r(M_\alpha, x) = \langle M_\alpha, x^{\otimes p} \rangle =  \prod_{i=1}^p \langle e_{\alpha_i}, x\rangle$.
We assume that for each level $h$, there is a $M^{(h)} = M_{\alpha^{(h)}} \in \cM$, and the noiseless reward is $r_h(s,a) =  r(M^{(h)}, \phi_h(s,a))$.

We have the following properties.
\begin{proposition} [\cite{huang2021optimal}]
\label{prop:lb:delta_f}
For $M_\alpha \in \cM$ and $x_{\alpha'} \in \cF_0$, we have
\[
    r(M_\alpha, x_{\alpha'}) = \indict_{\{\alpha = \alpha' \}}.
\]
\end{proposition}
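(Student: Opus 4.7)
The identity is an elementary algebraic fact about the basis-aligned construction, so the proposal is essentially to unfold both sides and perform a two-case analysis using orthonormality of the standard basis $\{e_i\}_{i=1}^d$.

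First, I would simply expand the definition:
\[
r(M_\alpha, x_{\alpha'}) \;=\; \langle M_\alpha, x_{\alpha'}^{\otimes p}\rangle \;=\; \prod_{i=1}^p \langle e_{\alpha_i}, x_{\alpha'}\rangle \;=\; \prod_{i=1}^p \Big\langle e_{\alpha_i}, \sum_{j=1}^p e_{\alpha'_j}\Big\rangle \;=\; \prod_{i=1}^p \big|\{\,j \in [p] : \alpha'_j = \alpha_i\,\}\big|,
\]
using $\langle e_a, e_b\rangle = \indict_{\{a=b\}}$ inside each factor. Thus the proof reduces to a combinatorial count of how often each coordinate $\alpha_i$ appears in the multi-index $\alpha'$.

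Next I would split into two cases. If $\alpha \neq \alpha'$, then since both are sorted $p$-tuples in $\Lambda$, they differ as multisets, so there exists at least one index $i \in [p]$ with $\alpha_i \notin \{\alpha'_1,\dots,\alpha'_p\}$; this makes the corresponding factor $\langle e_{\alpha_i}, x_{\alpha'}\rangle = 0$, so the whole product vanishes. If $\alpha = \alpha'$, then each $\alpha_i$ appears in $\alpha'$ exactly as it does in $\alpha$, and under the convention (implicit in the hard-case construction inherited from \cite{huang2021optimal}) that the multi-indices in $\Lambda$ have distinct entries, i.e.\ $\alpha_1 < \alpha_2 < \cdots < \alpha_p$, each factor equals $1$ and the product is $1$.

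The only subtle point — and the main (minor) obstacle — is the handling of the definition of $\Lambda$: as written with weak inequalities $\alpha_1 \le \cdots \le \alpha_p$, a repeated-index multi-index like $\alpha=(1,1,2)$ would give $r(M_\alpha,x_\alpha) = 2\cdot 2\cdot 1 = 4$, breaking the claim. I would therefore open the proof by clarifying that $\Lambda$ is taken to consist of strictly increasing tuples (equivalently, $p$-element subsets of $[d]$), which is the regime in which the construction's features $x_\alpha = \sum_i e_{\alpha_i}$ are $0/1$-valued indicator vectors and the entire argument above goes through verbatim. With this convention fixed, the proof is the two-line case analysis sketched above.
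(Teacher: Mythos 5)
Your proof is correct; the paper does not prove this proposition itself (it is cited from \cite{huang2021optimal}), and your expansion of $r(M_\alpha,x_{\alpha'})$ into the product $\prod_{i=1}^p\langle e_{\alpha_i},x_{\alpha'}\rangle$ followed by the two-case analysis is exactly the intended elementary argument, matching the computation the paper does carry out in the proof of Proposition~\ref{prop:lb:max_reward}. Your remark that the statement needs the multi-indices in $\Lambda$ to have distinct entries (strictly increasing tuples, i.e.\ $p$-element subsets of $[d]$) is a legitimate catch rather than a defect: with repetitions allowed one gets, e.g., $r(M_\alpha,x_\alpha)=4$ for $\alpha=(1,1,2)$, and the distinct-entry convention is the one consistent with the $\binom{d}{p}$ count used in Theorem~\ref{thm:lb:ol}.
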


\begin{proposition}
\label{prop:lb:max_reward}
For $M_\alpha \in \cM$ , we have
\[
    \max_{x \in \cF} r(M_\alpha, x) =1.
\]
\end{proposition}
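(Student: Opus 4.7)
The plan is to prove the equality by showing two inequalities, $\max \geq 1$ and $\max \leq 1$. For the lower bound $\max_{x\in\cF} r(M_\alpha,x) \geq 1$, I would simply evaluate at $x = x_\alpha$: Proposition~\ref{prop:lb:delta_f} with $\alpha' = \alpha$ gives $r(M_\alpha, x_\alpha) = \indict_{\{\alpha=\alpha\}} = 1$, and $x_\alpha \in \cF_0 \subseteq \cF = \mathrm{conv}(\cF_0)$, so this value is attained in $\cF$.

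For the reverse direction $r(M_\alpha, x) \leq 1$ for every $x \in \cF$, I would unfold the convex-hull definition. Any $x \in \cF$ can be written as $x = \sum_{\beta \in \Lambda} \lambda_\beta x_\beta$ with $\lambda_\beta \geq 0$ and $\sum_\beta \lambda_\beta = 1$. Pairing with $e_{\alpha_i}$ coordinate-wise yields
\[
    \langle e_{\alpha_i}, x\rangle \;=\; \sum_{\beta \in \Lambda} \lambda_\beta \,\langle e_{\alpha_i}, x_\beta\rangle.
\]
Under the convention that the multi-indices in $\Lambda$ have strictly increasing entries $\alpha_1 < \cdots < \alpha_p$ (the convention under which Proposition~\ref{prop:lb:delta_f} is consistent), the quantity $\langle e_{\alpha_i}, x_\beta\rangle$ equals the indicator that $\alpha_i \in \{\beta_1,\dots,\beta_p\}$, hence takes values in $\{0,1\}$. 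Consequently $\langle e_{\alpha_i}, x\rangle$ is a convex combination of $0$'s and $1$'s, so it lies in $[0,1]$ and in particular is nonnegative.

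Finally, since $r(M_\alpha, x) = \prod_{i=1}^p \langle e_{\alpha_i}, x\rangle$ is a product of $p$ nonnegative factors each at most $1$, it is itself at most $1$ on all of $\cF$. Combining the two bounds gives $\max_{x \in \cF} r(M_\alpha, x) = 1$, attained at $x = x_\alpha$. There is no substantive obstacle in the argument; the only point requiring care is the convention on $\Lambda$ (strict vs.\ non-strict inequalities among the $\alpha_i$). The strict convention is forced by Proposition~\ref{prop:lb:delta_f} itself, for otherwise $r(M_{(1,\dots,1)}, x_{(1,\dots,1)}) = p^p \neq 1$; with this convention both propositions, and the upper-bound argument above, are coherent.
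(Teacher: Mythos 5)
Your proof is correct and follows essentially the same route as the paper's: write $x \in \cF$ as a convex combination of the $x_\beta$, observe that each factor $\langle e_{\alpha_i}, x\rangle$ is a convex combination of indicator values and hence lies in $[0,1]$, conclude the product is at most $1$, and obtain attainment at $x_\alpha$ via Proposition~\ref{prop:lb:delta_f}. Your side remark about needing distinct entries in the multi-indices (strict inequalities in $\Lambda$) is a fair observation about the construction's convention, but it does not change the argument, which matches the paper's.
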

\begin{proof}[proof of Proposition~\ref{prop:lb:max_reward}]
For all $x \in \cF $,  since $\cF = \mathrm{conv}(\cF_0)$, we can write 
\[
    x = \sum_{\alpha \in \Lambda} p_\alpha (e_{\alpha_1} + \dots + e_{\alpha_p}),
\]
where $\sum_{\alpha \in \Lambda } p_{\alpha} = 1$ and $p_{\alpha} \geq 0$. Therefore, 
\[
    r (M_{\alpha'}, x) =  \prod_{i=1}^p  \langle e_{\alpha'_i},  x  \rangle  .
\]
Plug in the expression of $x$, we have
\begin{align*}
    \langle e_{\alpha'_i}, x \rangle &= \sum_\alpha p_\alpha  \langle  e_{\alpha'_i} ,e_{\alpha_1} + \dots + e_{\alpha_p} \rangle  \\
    &= \sum_\alpha p_\alpha  \indict_{\{\alpha'_i \in \alpha \}} \\
    &\leq  \sum_\alpha p_\alpha = 1.
\end{align*}
Therefore, 
\begin{align*}
 r(M_{\alpha'}, x) &=  \prod_{i=1}^p  \langle e_{\alpha'_i},  x  \rangle  \\
   &= \Big( \sum_\alpha p_\alpha  \indict_{\{e_{\alpha'_1}  \in \alpha \}} \Big) \cdots \Big( \sum_\alpha p_\alpha  \indict_{\{e_{\alpha'_p}  \in \alpha \}} \Big) \\
    &\leq 1.
\end{align*}
Finally, since $r(M_{\alpha'}, x_{\alpha'}) = 1$, we have $\max_{x \in \cF} r(M_\alpha, x) =1$.
\end{proof}

\paragraph{MDP constructions}

Consider a family of MDPs with only two states $\gS = \{S_{\text{good}}, S_{\text{bad}}\}$. The action set $\gA$ is set to be $\cF$. Let $f$ be a mapping from $\cF$ to $\cF_0$ such that $f$ is identity when restricted to $\cF_0$. For all level $h \in [H]$, we define the feature map $\phi_h: S \times A \to \cF$ to be 
\[
    \phi_h(s,a) = \left\{ 
    \begin{array}{ll}
        a & \text{ if } s=S_{\text{good}},\\
        f(a) & \text{ if } s=S_{\text{bad}}.
    \end{array}
    \right.
\] 
Given an unknown sequence of indices $\alpha^{(1)}, \dots, \alpha^{(H)}$, the reward function at level $h$ is $r_h(s, a) = r(M_{\alpha^{(h)}}, \phi_h(s,a))$. Specifically, we have 
\[
    r_h(S_{\text{good}}, a) = r(M_{\alpha^{(h)}}, a),  \ r_h(S_{\text{bad}}, a) = r(M_{\alpha^{(h)}}, f(a)).
\]
The transition $P_h$ is constructed as 
\[
    P_h(S_{\text{bad}}   | s, a) = 1 \text{ for all }  s \in \gS,  a \in \gA.
\]
This construction means it is impossible for the online scenarios to reach the good state for $h>1$.

The next proposition shows that $Q^*_h$ is polynomial realizable and falls into the case of Example~\ref{example:qUx}.
\begin{proposition} \label{prop:lb:poly}
We have for all $h \in [H]$ and $s \in \gS$, $a \in \gA$, $V^*_h(s) = H-h+1$ and $Q^*_h(s, a) = r_h(s, a) + H - h + 1 $. Furthermore, $Q^*_h(s,a)$, viewed as the function of $\phi_h(s,a)$, is a polynomial of the form $q_h(U_h \phi_h(s,a))$ for some degree-$p$ polynomial $q_h$ and $U_h \in \mathbb{R}^{p \times d}$.
\end{proposition}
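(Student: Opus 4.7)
The plan is to prove both claims by backward induction on the level $h$, and then extract the polynomial structure directly from the closed-form expression for $Q^*_h$.

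First, for the value/Q-function computation, I would induct from $h=H$ down to $h=1$. For the base case, at level $H$ the value function is simply $V^*_H(s)=\max_{a\in\gA} r_H(s,a)$. When $s=S_{\text{good}}$ we have $\phi_H(S_{\text{good}},a)=a\in\cF$, so $V^*_H(S_{\text{good}})=\max_{a\in\cF} r(M_{\alpha^{(H)}},a)=1$ by Proposition~\ref{prop:lb:max_reward}, achieved by taking $a=x_{\alpha^{(H)}}$ via Proposition~\ref{prop:lb:delta_f}. When $s=S_{\text{bad}}$, $\phi_H(S_{\text{bad}},a)=f(a)\in\cF_0\subseteq\cF$, and since $f$ restricted to $\cF_0$ is the identity, choosing $a=x_{\alpha^{(H)}}$ again produces reward $1$, while the upper bound of $1$ again follows from Proposition~\ref{prop:lb:max_reward}. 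Hence $V^*_H(s)=1=H-H+1$ as claimed.

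For the inductive step, suppose $V^*_{h+1}(s)=H-h$ for both states. Since the transition kernel is deterministic with $\PP_h(S_{\text{bad}}\mid s,a)=1$, the Bellman equation gives
\begin{equation*}
Q^*_h(s,a)=r_h(s,a)+V^*_{h+1}(S_{\text{bad}})=r_h(s,a)+(H-h),
\end{equation*}
which matches the claimed formula (modulo an apparent off-by-one in the statement, which I read as the standard convention $V^*_{H+1}\equiv 0$). Taking a maximum over $a\in\gA=\cF$ and repeating the argument used in the base case — exploiting that $\phi_h(s,\cdot)$ surjects onto $\cF_0$ in both states — yields $V^*_h(s)=1+(H-h)=H-h+1$, closing the induction.

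Finally, to exhibit the polynomial structure, I would just unpack the definition: since $r_h(s,a)=\langle M_{\alpha^{(h)}},\phi_h(s,a)^{\otimes p}\rangle=\prod_{i=1}^{p}\langle e_{\alpha^{(h)}_i},\phi_h(s,a)\rangle$, I choose $U_h\in\R^{p\times d}$ to be the matrix whose $i$-th row is $e_{\alpha^{(h)}_i}^\top$, and $q_h:\R^p\to\R$ to be the degree-$p$ polynomial $q_h(u_1,\dots,u_p)=u_1 u_2\cdots u_p+(H-h)$. Then $U_h\phi_h(s,a)$ has $i$-th coordinate $\langle e_{\alpha^{(h)}_i},\phi_h(s,a)\rangle$, so $q_h(U_h\phi_h(s,a))=r_h(s,a)+(H-h)=Q^*_h(s,a)$, which is exactly the claimed form $q_h(U_h\phi_h(s,a))$ with $q_h$ of degree $p$.

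None of the steps are genuinely hard; the only small subtlety — and where I would be most careful — is verifying the base case and the inductive step uniformly for both $S_{\text{good}}$ and $S_{\text{bad}}$, which hinges on the identity property of $f$ on $\cF_0$ ensuring that the bad-state action range still contains the maximizer $x_{\alpha^{(h)}}$, so that the same value $H-h+1$ is attained in both states.
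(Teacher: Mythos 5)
Your proof is correct and follows essentially the same backward-induction route as the paper's own argument: establish $V^*_H(s)=1$ via Propositions~\ref{prop:lb:delta_f} and~\ref{prop:lb:max_reward}, use that the deterministic transition always lands in $S_{\mathrm{bad}}$, and then read off the polynomial form by taking $U_h$ to have rows $e_{\alpha_i^{(h)}}^\top$. You are also right that the additive constant in the statement (and repeated in the paper's proof) is off by one: since $Q^*_h(s,a)=r_h(s,a)+V^*_{h+1}(S_{\mathrm{bad}})$ and $V^*_{h+1}(\cdot)=H-h$, the correct identity is $Q^*_h(s,a)=r_h(s,a)+(H-h)$, and correspondingly $q_h(u_1,\dots,u_p)=u_1\cdots u_p+(H-h)$; this typo has no effect on the polynomial-realizability conclusion, which is what the proposition is used for, but your version is the consistent one.
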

\begin{proof}[proof of Proposition~\ref{prop:lb:poly}]
First notice that by Proposition~\ref{prop:lb:max_reward}, for all $h \in [H]$ and $s \in \gS$, we have
\[  
    \max_{a \in \gA} r_h(s,a) = 1.
\]
Therefore, by induction, suppose we have proved for all $s'$, $V^*_{h+1}(s') = H-h$, then we have
\begin{align*}
    V^*_h(s) &= \max_{a \in  \gA} Q^*_h(s,a) \\
    &= \max_{a \in  \gA} \{ r_h(s,a) + \E_{s' \sim P_h(\cdot|s,a)}  [V^*_{h+1}(s')] \}  \\
    &= 1 + H-h.
\end{align*}
Then we have $Q^*_h(s, a) = r_h(s, a) + H - h + 1$.

Furthermore, we have 
\begin{align*}
    Q^*_h(s, a) &= r_h(s, a) + H - h + 1 \\
            &= r(M_{\alpha^{(h)}}, \phi_h(s,a)) + H - h + 1\\
            &= \prod_{i=1}^p \langle e_{\alpha_i^{(h)}}, \phi_h(s,a)\rangle + H - h + 1 \\
            &= q_h(U_h \phi_h(s,a)),
\end{align*}    
where $q_h(x_1,\dots, x_p) = x_1  x_2 \cdots x_p + (H-h+1)$ and $U_h \in \mathbb{R}^{p \times d}$ is a matrix with $e_{\alpha_i^{(h)}}$ as the $i$-th row. 
\end{proof}

\begin{theorem}\label{thm:lb:ol}
Under the online RL setting, any algorithm needs to play at least $({d \choose p}-1) = \Omega(d^p)$ episodes to identify $\alpha^{(2)}, \dots, \alpha^{(H)}$ and thus to identify the optimal policy.
\end{theorem}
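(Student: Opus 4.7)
The plan is to exploit the construction so that an online learner is forced into $S_{\mathrm{bad}}$ at every level $h \ge 2$ and then faces a pure needle-in-a-haystack search. By the transition rule $P_h(S_{\mathrm{bad}} \mid s,a) = 1$ for all $(s,a)$, regardless of $s_1$ the learner sits at $S_{\mathrm{bad}}$ at every level $h = 2, \dots, H$. Hence for $h \ge 2$ the only available features are $\phi_h(S_{\mathrm{bad}}, a) = f(a) \in \cF_0$, and by Proposition~\ref{prop:lb:delta_f} the reward reduces to
\[
r_h(S_{\mathrm{bad}}, a) \;=\; \langle M_{\alpha^{(h)}}, f(a)^{\otimes p}\rangle \;=\; \mathbb{I}\{f(a) = x_{\alpha^{(h)}}\},
\]
so each query returns a single bit that is $1$ iff the queried index equals the hidden $\alpha^{(h)}$. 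Combined with Proposition~\ref{prop:lb:max_reward}, knowing $\alpha^{(h)}$ is necessary for optimal play at level $h$: any action with $f(a) \neq x_{\alpha^{(h)}}$ yields reward $0$ whereas the optimum yields $1$, so identifying the optimal policy is at least as hard as identifying the tuple $(\alpha^{(2)},\dots,\alpha^{(H)})$.

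Next I would run the standard adversary/decision-tree argument at a single level. Fix any deterministic algorithm and run it against the ``all-zero'' oracle at level $h$. Let $T$ be the number of distinct values of $f(a) \in \cF_0$ probed before the algorithm commits to an index. If $T < |\Lambda| - 1 = \binom{d}{p} - 1$, there remain at least two candidates $\alpha, \alpha' \in \Lambda$ both consistent with the all-zero transcript, so the algorithm cannot distinguish the MDP with $\alpha^{(h)} = \alpha$ from the one with $\alpha^{(h)} = \alpha'$, contradicting identification. Hence in the worst case over $\alpha^{(h)}$ at least $\binom{d}{p}-1$ episodes are needed at a single level, since each episode contributes at most one query per level. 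Because the hidden indices at different levels are decoupled, this lower bound for any single $h$ already forces $\Omega(d^p)$ episodes to identify the full tuple.

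The main obstacle I expect is making the adversary argument go through uniformly across all algorithms (in particular, randomized ones) with the clean $\binom{d}{p}-1$ bound rather than a fraction of it. The standard fix is Yao's minimax principle with a uniform prior over $\alpha^{(h)} \in \Lambda$, so that the expected number of queries made by any randomized algorithm before witnessing a reward $1$ is $\Omega(\binom{d}{p}) = \Omega(d^p)$; this suffices for the stated asymptotic lower bound. A secondary subtlety is verifying that the positive-measure property at $S_{\mathrm{good}}$ cannot be leveraged, but this is immediate because $S_{\mathrm{good}}$ is visited only at level $h=1$ (if at all) and the reward $r_1$ reveals no information about $\alpha^{(h)}$ for $h \ge 2$, since the hidden indices are independent across levels.
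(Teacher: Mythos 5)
Your proposal is correct and follows essentially the same route as the paper's (very terse) proof: the transition forces the learner into $S_{\mathrm{bad}}$ for all $h \ge 2$, where the feature map $f$ collapses actions into $\cF_0$, the reward becomes the indicator $\mathbb{I}\{f(a) = x_{\alpha^{(h)}}\}$ by Proposition~\ref{prop:lb:delta_f}, and an adversary forcing all-zero feedback shows that $\binom{d}{p}-1$ queries at a single level are needed in the worst case. You flesh out the decision-tree/adversary step and the Yao's-minimax extension to randomized algorithms, which the paper leaves implicit under ``in the worst case,'' but there is no substantive difference in approach.
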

\begin{proof}[proof of Theorem~\ref{thm:lb:ol}]
Under the online RL setting, any algorithm enters and remains in $S_{\text{bad}}$ for $h>1$. When $s_h = S_{\text{bad}}$, no matter what $a_h$ the algorithm chooses, we have $\phi_h(s_h, a_h) = f(a_h) \in \cF_0$. Notice that for any $M_{\alpha^{(h)}} \in \cM $ and any $x_{\alpha} \in \cF_0$, we have $r(M_{\alpha^{(h)}}, x_{\alpha}) = \indict_{\{\alpha = \alpha^{(h)} \}}$ as Proposition~\ref{prop:lb:delta_f} suggests. Hence, we need to play $({d \choose p}-1)$ times at level $h$ in the worst case to find out $\alpha^{(h)}$. The argument holds for all $h = 2,3,\dots, H$.
\end{proof}

\begin{theorem}\label{thm:lb:ag}
Under the generative model setting, by querying $2d(p+1)^p H = O(dH)$ samples, we can almost surely identify $\alpha^{(1)}$ ,$\alpha^{(2)}, \dots, \alpha^{(H)}$ and thus identify the optimal policy.
\end{theorem}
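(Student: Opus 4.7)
The plan is to invoke Theorem~\ref{thm:gen-rl} after verifying its two hypotheses for the hard-instance construction of this subsection. First, I would verify polynomial realizability with small algebraic dimension: by Proposition~\ref{prop:lb:poly}, at every level $h$ the optimal value function has the form $Q^*_h(s,a) = q_h(U_h \phi_h(s,a))$ with $q_h$ a degree-$p$ polynomial and $U_h \in \mathbb{R}^{p \times d}$. This is exactly the parametrization in Example~\ref{example:qUx} with $k = p$, so $Q^*_h$ lies in an admissible polynomial family whose algebraic dimension is bounded by $D \le d(p+1)^p$.

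Second, the positive-measure hypothesis on $\{\phi_h(s,a) : s \in \gS, a \in \gA\}$ is exactly what is asserted in the theorem's own statement. Intuitively this holds because, although the bad-state image $\{\phi_h(S_{\mathrm{bad}},\cdot)\} \subseteq \cF_0$ is a finite (measure-zero) set, the good-state image $\{\phi_h(S_{\mathrm{good}},\cdot)\} = \cF$ can be made to carry positive Lebesgue measure in $\R^d$; under the generative model the learner is free to reset to $S_{\mathrm{good}}$ and thereby sample features from this positive-measure region.

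Third, I would run Algorithm~\ref{alg:querypoly} backward from $h = H$ down to $h = 1$. At each level, draw $2D$ feature samples from the positive-measure region by setting $s = S_{\mathrm{good}}$ and choosing the action whose feature realizes the sampled point, then record the deterministic reward $r_h^{(i)}$ together with $V^*_{h+1}(\tilde s_h^{(i)})$, which is already known from the previous step of the backward sweep. By Lemma~\ref{lem:ag:sample}, the $2D$ noiseless equations of the form $Q^*_h(s_h^{(i)},a_h^{(i)}) = r_h^{(i)} + V^*_{h+1}(\tilde s_h^{(i)})$ uniquely determine $Q^*_h$ inside the admissible family $\F_{\gV_h}$ almost surely, which is equivalent to identifying $\alpha^{(h)}$ and hence the optimal action at level $h$. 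Summing across $H$ levels gives a total of $2DH \le 2d(p+1)^p H = O(d)\cdot H$ samples when $p$ is treated as a constant, matching the bound claimed in the theorem.

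The main obstacle I expect is the tension between the positive-measure hypothesis and the explicit construction: as defined, $\cF = \mathrm{conv}(\cF_0)$ lives inside the affine hyperplane $\{x \in \R^d : \sum_i x_i = p\}$, which is measure-zero in $\R^d$. I would resolve this by a minor adjustment of the construction, enlarging $\gA$ to a full-dimensional neighborhood of $\cF$ and extending $\phi_h(S_{\mathrm{good}},\cdot)$ to be the identity on this enlarged action set, while keeping $\phi_h(S_{\mathrm{bad}},\cdot)$ as the projection $f(\cdot)$ onto $\cF_0$. This preserves Proposition~\ref{prop:lb:poly} (the reward is still $\langle M_{\alpha^{(h)}}, \phi_h(s,a)^{\otimes p}\rangle$ and so $Q^*_h$ remains in the same admissible polynomial family) and preserves the $\Omega(d^p)$ online lower bound of Theorem~\ref{thm:lb:ol} (since the bad-state image is unchanged), but now the generative-model hypothesis of Theorem~\ref{thm:gen-rl} holds literally, and the argument above goes through unchanged.
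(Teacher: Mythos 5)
Your proposal is correct and invokes the same high-level machinery as the paper: Proposition~\ref{prop:lb:poly} to place $Q^*_h$ in the family of Example~\ref{example:qUx} with $k=p$, then Lemma~\ref{lem:ag:sample} / Theorem~\ref{thm:gen-rl} to conclude identification in $2DH \le 2d(p+1)^pH$ samples. You also correctly flag the central obstacle: the feature image $\gF = \mathrm{conv}(\cF_0)$ lies in the affine hyperplane $\{x : \sum_i x_i = p\}$ and hence has Lebesgue measure zero, so the literal positive-measure hypothesis of Theorem~\ref{thm:gen-rl} fails. Where you diverge from the paper is in how you remove this obstacle. You propose to alter the hard instance: enlarge $\gA$ to a full-dimensional neighborhood of $\cF$ and extend $\phi_h(S_{\mathrm{good}},\cdot)$ to the identity there, while keeping $\phi_h(S_{\mathrm{bad}},\cdot) = f(\cdot)$, so that the positive-measure hypothesis holds on the nose and the $\Omega(d^p)$ online lower bound (Theorem~\ref{thm:lb:ol}) is unaffected because the bad-state image is unchanged. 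The paper instead leaves the construction fixed and exploits the $p$-homogeneity of the reward: for any $c\in[0,1]$ and $\phi_h(s,a)\in\gF$, the reward at $c\cdot\phi_h(s,a)$ equals $c^p$ times the reward at $\phi_h(s,a)$, and since the transition is deterministic (next state is always $S_{\mathrm{bad}}$, so $V^*_{h+1}$ is known and fixed), one can synthetically generate noiseless $Q^*_h$-observations on the full-dimensional cone $\mathrm{conv}(\gF,\mathbf{0})$ from queries to the generative model restricted to $\gF$. Your fix is cleaner in that it makes the overarching theorem's claim that $\{\phi_h(s,a)\mid s\in\gS,a\in\gA\}$ is of positive measure literally true, but it obligates you to re-verify (as you begin to do) that polynomial realizability survives the enlargement --- the additive constant in $Q^*_h$ may shift since $\max_a r_h(S_{\mathrm{good}},a)$ can exceed $1$, but that constant is absorbed into $q_h$ and the value of the bad state is unaffected --- whereas the paper's homogeneity argument leaves the construction untouched and sidesteps that bookkeeping at the cost of a slightly less transparent ``simulated query'' step. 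Both resolutions are sound.
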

\begin{proof}[proof of Theorem~\ref{thm:lb:ag}]
By Proposition~\ref{prop:lb:poly}, we know that $Q^*_h(s,a)$, viewed as the function of $\phi_h(s,a)$, falls into the case of Example~\ref{example:qUx} with $k=p$. 

Next, notice that for all $h\in[H]$, $\{\phi_h(s,a) \mid s \in \gS, a \in \gA \} = \gF$. Although $\gF$ is not of positive measure, we can actually know the value of $Q^*_h$ when $\phi_h(s,a)$ is in $\mathrm{conv}(\gF, \mathbf{0})$ since the reward is $p$-homogenous. Specifically, for every feature of the form $ c \cdot\phi_h(s,a)$, where $0\leq c \leq 1$ and $ \phi_h(s,a) \in \gF$, the reward is $c^p$ times the reward of $(s,a)$. Therefore, to get the reward at $c \cdot \phi_h(s,a)$, we only need to query the generative model at $(s,a)$ of level $h$, and then multiply the reward by $c^p$.

Notice that $\mathrm{conv}(\gF, \mathbf{0})$ is of positive Lebesgue measure. By Theorem~\ref{thm:gen-rl}, we know that only $2d(p+1)^pH = O(dH)$ samples are needed to determine the optimal policy almost surely.

\end{proof}


\section{Technical claims}

\begin{claim}\label{cla:chi_squared_tail}
Let $\chi^2(d)$ denote $\chi^2$-distribution with degree of freedom $d$. For any $t > 0$ we have,
\begin{align*}
    \Pr_{z \sim \chi^2(d)} (z \geq d + 2t + 2\sqrt{dt}) \leq e^{-t}
\end{align*}
\end{claim}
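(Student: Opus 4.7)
This is the classical Laurent--Massart chi-squared tail bound, and the plan is to prove it via the standard Chernoff/Cram\'er--Chernoff approach applied to the moment generating function of $\chi^2(d)$. Write $u := d + 2t + 2\sqrt{dt}$, so the goal is to show $\Pr_{z \sim \chi^2(d)}(z \geq u) \leq e^{-t}$.

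First, I would recall that if $z \sim \chi^2(d)$ then $\E[e^{\lambda z}] = (1-2\lambda)^{-d/2}$ for every $\lambda \in (0, 1/2)$, which follows since $z$ is a sum of $d$ i.i.d.\ squared standard Gaussians and the one-dimensional MGF is $(1-2\lambda)^{-1/2}$. By Markov's inequality applied to $e^{\lambda z}$,
\begin{align*}
\Pr(z \geq u) \;\leq\; e^{-\lambda u} (1-2\lambda)^{-d/2},
\end{align*}
so taking logarithms it suffices to exhibit some $\lambda \in (0,1/2)$ with $\lambda u + \tfrac{d}{2}\log(1-2\lambda) \geq t$.

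The key step is the choice of $\lambda$. Minimizing $-\lambda u - \tfrac{d}{2}\log(1-2\lambda)$ over $\lambda$ yields the optimizer $\lambda^\ast = (u-d)/(2u)$, which lies in $(0,1/2)$ since $u > d$. Plugging this in gives the bound
\begin{align*}
\log \Pr(z \geq u) \;\leq\; -\tfrac{u-d}{2} + \tfrac{d}{2}\log\!\tfrac{u}{d},
\end{align*}
so it remains to verify the numerical inequality
\begin{align*}
\tfrac{u-d}{2} - \tfrac{d}{2}\log\!\tfrac{u}{d} \;\geq\; t.
\end{align*}

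The main (and only nontrivial) step is to carry out this verification for $u = d + 2t + 2\sqrt{dt}$. Setting $s := \sqrt{t/d}$, the inequality reduces after substitution to $2s \geq \log(1 + 2s + 2s^2)$, which I would establish by noting that the Taylor expansion of $e^{2s}$ around $s=0$ satisfies
\begin{align*}
e^{2s} \;=\; 1 + 2s + 2s^2 + \tfrac{(2s)^3}{6} + \cdots \;\geq\; 1 + 2s + 2s^2
\end{align*}
for all $s \geq 0$, whence taking logarithms yields the claim. This purely algebraic verification is the only place any work happens; the rest is bookkeeping around the Chernoff bound.
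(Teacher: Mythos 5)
Your proof is correct. The paper does not give a proof of Claim~\ref{cla:chi_squared_tail}; it is stated as a known fact (the upper-tail bound of Laurent and Massart) in the ``Technical claims'' appendix, so there is no internal argument to compare against. Your derivation is the standard Cram\'er--Chernoff route, and each step checks out: the MGF $(1-2\lambda)^{-d/2}$ is correct on $\lambda\in(0,1/2)$; the optimizer $\lambda^\ast=(u-d)/(2u)$ lies in $(0,1/2)$ since $u>d>0$; plugging it in gives $\log\Pr(z\geq u)\leq -\tfrac{u-d}{2}+\tfrac{d}{2}\log\tfrac{u}{d}$; and with $u=d+2t+2\sqrt{dt}$ and $s=\sqrt{t/d}$, the requirement $\tfrac{u-d}{2}-\tfrac{d}{2}\log\tfrac{u}{d}\geq t$ reduces exactly to $2s\geq\log(1+2s+2s^2)$, which follows from $e^{2s}=1+2s+2s^2+\tfrac{4s^3}{3}+\cdots\geq 1+2s+2s^2$ for $s\geq 0$. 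Nothing is missing.
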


We use the following facts from \cite{zsj+17}.

\begin{claim}\label{cla:gaussian_inner_prod_bound}
Given a fixed vector $z \in \R^d$, for any $C \geq 1$ and $n\geq 1$, we have
\begin{align*}
\underset{ x\sim {\cal N}(0,I_d) }{ \Pr } [  | \langle x , z \rangle |^2 \leq 5C \| z\|^2 \log n ]  \geq 1-1/(nd^C).
\end{align*}
\end{claim}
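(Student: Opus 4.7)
The plan is to reduce this to a one-dimensional standard Gaussian tail bound. Observe that for fixed $z \in \mathbb{R}^d$ and $x \sim \mathcal{N}(0, I_d)$, the inner product $\langle x, z\rangle$ is a centered Gaussian with variance $\|z\|^2$ (if $z = 0$ the claim is trivial, so assume $z \neq 0$). Therefore $g := \langle x, z\rangle / \|z\|$ is a standard Gaussian scalar, and the event $\{|\langle x, z\rangle|^2 \le 5C \|z\|^2 \log n\}$ is exactly $\{g^2 \le 5C \log n\}$.

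The next step is to apply a standard sub-Gaussian tail bound to $g$: for any $t > 0$,
\begin{align*}
\Pr[|g| > t] \;\le\; 2\exp\!\left(-t^2/2\right).
\end{align*}
Taking $t = \sqrt{5C \log n}$ yields the tail bound
\begin{align*}
\Pr\!\left[|\langle x, z\rangle|^2 > 5C \|z\|^2 \log n\right] \;\le\; 2\exp\!\left(-\tfrac{5C}{2}\log n\right) \;=\; 2 n^{-5C/2}.
\end{align*}

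Finally, it remains to compare $2 n^{-5C/2}$ against the claimed bound $1/(n d^C)$. This reduces to verifying $2 n d^C \le n^{5C/2}$, i.e.\ $2 d^C \le n^{5C/2 - 1}$, which holds in the parameter regimes in which the claim is invoked throughout the paper (where $n$ is polynomially large in $d$ and $C \ge 1$); in those cases the factor of $2$ and the $d^C$ are easily dominated by $n^{3C/2}$. The whole argument is only a few lines; the only ``obstacle,'' if any, is bookkeeping the constants so that the final prefactor matches $1/(nd^C)$ exactly, which is a routine arithmetic check rather than a substantive step.
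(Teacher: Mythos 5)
The paper does not prove this claim at all; it is imported verbatim from the reference \cite{zsj+17} with the note ``We use the following facts from \cite{zsj+17}.'' So there is no in-paper proof to compare against. That said, your reduction and choice of tool are exactly what one would use: $\langle x, z\rangle/\|z\|$ is a standard Gaussian, and the sub-Gaussian tail $\Pr[|g|>t]\le 2e^{-t^2/2}$ with $t=\sqrt{5C\log n}$ gives $\Pr[|\langle x,z\rangle|^2 > 5C\|z\|^2\log n] \le 2n^{-5C/2}$.

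The issue is the last step. You write that comparing $2n^{-5C/2}$ to $1/(nd^C)$ ``is a routine arithmetic check,'' and then defer it to ``the parameter regimes in which the claim is invoked.'' But the claim is stated unconditionally for all $n\ge 1$ and $C\ge 1$, and the required inequality $2d^C\le n^{5C/2-1}$ simply fails when $n$ is small relative to $d$. The extreme case $n=1$ makes this vivid: the left-hand side of the claim becomes $\Pr[\langle x,z\rangle = 0]=0$ (for $z\neq 0$), while the right-hand side is $1-d^{-C}>0$ whenever $d\ge 2$, so the stated probability bound is literally false there. In other words, what you have identified is not a bookkeeping nuisance but a genuine restriction: the bound only holds under a quantitative relation between $n$, $d$, and $C$ (roughly $n\gtrsim d^{2/3}$ for $C=1$, and more generally $n^{5C/2-1}\ge 2d^C$). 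The honest version of the claim would carry such a hypothesis, and a complete proof would either state it or observe that the ambient sample sizes in the paper always satisfy it. As written, your proof establishes the tail estimate but does not establish the claim's stated conclusion at the stated level of generality, and the sentence ``this is a routine arithmetic check'' papers over the fact that the arithmetic does not in fact go through universally.
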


\begin{claim}\label{cla:gaussian_norm_bound}
For any $C\geq 1$ and $n\geq 1$, we have
\begin{align*}
\underset{ x\sim {\cal N}(0,I_d) }{ \Pr } [  \| x \|^2 \leq 5C d \log n ]  \geq 1- 1/(nd^C).
\end{align*}
\end{claim}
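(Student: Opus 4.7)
The plan is to reduce this claim directly to the chi-squared tail bound in Claim~\ref{cla:chi_squared_tail}. Since the coordinates of $x \sim \mathcal{N}(0, I_d)$ are independent standard Gaussians, $\|x\|^2 = \sum_{i=1}^d x_i^2$ is distributed exactly as $\chi^2(d)$, so I can apply Claim~\ref{cla:chi_squared_tail} with an appropriate parameter $t$.

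First I would choose $t$ so that the failure probability matches the target. Setting $t = \log n + C \log d$ gives $e^{-t} = 1/(n d^C)$, which is exactly the bound on the right-hand side. With this choice, Claim~\ref{cla:chi_squared_tail} yields
\[
\Pr_{x \sim \mathcal{N}(0, I_d)}\bigl[\|x\|^2 \geq d + 2t + 2\sqrt{dt}\bigr] \leq \frac{1}{n d^C}.
\]
Next I would verify that $d + 2t + 2\sqrt{dt} \leq 5 C d \log n$ so that the event implied above is contained in $\{\|x\|^2 \leq 5Cd\log n\}$. Using the AM-GM inequality $2\sqrt{dt} \leq d + t$, I get
\[
d + 2t + 2\sqrt{dt} \;\leq\; 2d + 3t \;=\; 2d + 3 C \log d + 3 \log n.
\]
For $C \geq 1$, $d \geq 1$ and $n$ in the nontrivial regime (where $\log n \geq 1$, so the claim is informative), each of the three summands is dominated by $C d \log n$ with room to spare, since $2d \leq 2Cd\log n$, $3C\log d \leq 3Cd\log n$ (using $\log d \leq d$ and $\log n \geq 1$), and trivially $3\log n \leq 3Cd\log n$. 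Summing gives $2d + 3C\log d + 3\log n \leq 5Cd\log n$, as required.

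Combining these two steps yields the stated bound. The only mildly delicate part is the constant chasing in the last inequality and handling the degenerate case $n = 1$ (where the claim becomes vacuous or trivially satisfied depending on $d, C$); the main conceptual step—identifying $\|x\|^2$ as chi-squared and invoking the previous claim with $t = \log n + C \log d$—is straightforward, so I do not anticipate a real obstacle.
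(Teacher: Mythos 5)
The paper itself never proves this claim: it is quoted as a fact from \cite{zsj+17}, so there is no internal argument to compare against, and your reduction of $\|x\|^2\sim\chi^2(d)$ to the tail bound of Claim~\ref{cla:chi_squared_tail} with $t=\log n+C\log d$ is the standard (and surely the intended) route. However, your final accounting does not close as written: you bound the three summands by $2d\le 2Cd\log n$, $3C\log d\le 3Cd\log n$, and $3\log n\le 3Cd\log n$, and these add up to $8Cd\log n$, not the claimed $5Cd\log n$. The inequality you want is nevertheless true under your standing assumption $\log n\ge 1$: use $\log d\le d-1$ to get $3C\log d\le 3C(d-1)\log n$ and $3\log n\le 3C\log n$, so that $2d+3C\log d+3\log n\le\bigl(2Cd+3C(d-1)+3C\bigr)\log n=5Cd\log n$. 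So this is an arithmetic slip rather than a wrong idea, but as submitted the key inequality does not follow from the bounds you state.

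A further caveat: the restriction $\log n\ge 1$ is not merely a degenerate corner to wave away. For $n=1$ and $d\ge 2$ the statement as literally written is false, since $\Pr[\|x\|^2\le 0]=0<1-1/d^{C}$; and even for $n=2$ your containment $d+2t+2\sqrt{dt}\le 5Cd\log n$ genuinely fails for small $d$ (e.g.\ $C=1$, $d=1$ gives roughly $4.08>5\log 2\approx 3.47$), although the claim itself still holds there by a direct check. You should therefore state explicitly that your argument establishes the claim for $n\ge 3$ (equivalently $\log n\ge 1$), which is the only regime in which it is used, rather than asserting that the small-$n$ cases are vacuous or trivially satisfied.
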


\begin{claim}\label{cla:three_gaussian}
Let $a,b,c \geq 0$ be three constants, let $u,v,w\in \mathbb{R}^d$ be three vectors, we have
\begin{align*}
\underset{x\sim {\cal N}(0,I_d)}{\E} \left[ |u^\top x|^a |v^\top x|^b |w^\top x|^c \right] \approx \|u\|^a \| v\|^b \| w\|^c.
\end{align*}
\end{claim}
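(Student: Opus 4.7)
The plan is to reduce to the unit-norm case by homogeneity and then bound the normalized expectation above and below by constants depending only on $a,b,c$. Writing $u = \|u\|\bar u$, $v = \|v\|\bar v$, $w = \|w\|\bar w$ for unit vectors $\bar u,\bar v,\bar w$, the integrand factors exactly, yielding
\[
\E_{x\sim\mathcal{N}(0,I_d)}\!\left[|u^\top x|^a|v^\top x|^b|w^\top x|^c\right] = \|u\|^a\|v\|^b\|w\|^c\cdot \E\!\left[|\bar u^\top x|^a|\bar v^\top x|^b|\bar w^\top x|^c\right],
\]
so it suffices to show that the normalized expectation is $\Theta(1)$ with constants depending only on $a,b,c$.

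For the upper bound I would apply H\"older's inequality with the balanced exponents $(3,3,3)$:
\[
\E\!\left[|\bar u^\top x|^a|\bar v^\top x|^b|\bar w^\top x|^c\right] \leq \left(\E[|\bar u^\top x|^{3a}]\right)^{1/3}\!\left(\E[|\bar v^\top x|^{3b}]\right)^{1/3}\!\left(\E[|\bar w^\top x|^{3c}]\right)^{1/3},
\]
and since the projection of $x\sim\mathcal{N}(0,I_d)$ onto any unit vector is itself $\mathcal{N}(0,1)$, each factor is an absolute moment of a standard Gaussian and equals a finite $\Gamma$-function constant depending only on its exponent. This alone already establishes the $\lesssim$ direction of $\approx$, which is all that appears to be invoked downstream (e.g.\ in Lemma~\ref{lem:error_P2} and Lemma~\ref{lem:error_R3} when bounding operator norms of empirical moment tensors).

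For the matching lower bound, since the integrand is nonnegative I would write
\[
\E\!\left[|\bar u^\top x|^a|\bar v^\top x|^b|\bar w^\top x|^c\right] \geq (1/2)^{a+b+c}\,\mathbb{P}\!\left(\min\{|\bar u^\top x|,|\bar v^\top x|,|\bar w^\top x|\}\geq 1/2\right),
\]
and then lower-bound this probability uniformly over the covariance of the centered Gaussian triple $(\bar u^\top x,\bar v^\top x,\bar w^\top x)$. The main obstacle is that the three projections can be nearly collinear, so I would case-split on $k=\dim\mathrm{span}(\bar u,\bar v,\bar w)\in\{1,2,3\}$; in each case the joint distribution is parameterized by a positive semidefinite matrix with unit diagonal, drawn from a compact set, and the probability in question is a strictly positive continuous function of those parameters, hence attains a positive minimum depending only on $a,b,c$. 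Combining both directions yields the stated $\approx$.
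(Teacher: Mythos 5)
Your proof is correct and is essentially the standard self-contained argument for this Gaussian-moment comparability. Note, however, that the paper itself does not prove this claim at all: Claim~\ref{cla:three_gaussian} is simply quoted as a ``fact from \cite{zsj+17},'' so there is no in-paper proof to compare against. Your homogeneity reduction to unit vectors, generalized H\"older with exponents $(3,3,3)$ for the upper bound, and the truncation/compactness argument for the lower bound together give a complete proof whose hidden constants depend only on $a,b,c$, which is exactly what the claim requires (and, as you correctly observe, only the $\lesssim$ direction is actually invoked in Lemmas~\ref{lem:error_P2} and~\ref{lem:error_R3}).

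The one place where the write-up is slightly off is the case-split on $k=\dim\mathrm{span}(\bar u,\bar v,\bar w)$. The point of the case-split is to get compactness, but the set of Gram matrices of exact rank $k$ with unit diagonal is \emph{not} compact for $k=2,3$ (rank-$2$ matrices accumulate on rank-$1$ ones, and the infimum could in principle escape the stratum). The cleaner route is to drop the case-split entirely and work directly on the full elliptope $\mathcal{E}=\{\Sigma\in\R^{3\times 3}:\Sigma\succeq 0,\ \Sigma_{ii}=1\}$, which is a compact set. The map $\Sigma\mapsto\PP_{(X,Y,Z)\sim\mathcal{N}(0,\Sigma)}\bigl(\min\{|X|,|Y|,|Z|\}\ge 1/2\bigr)$ is continuous on $\mathcal{E}$ (weak convergence via characteristic functions, plus the boundary of the region is contained in finitely many hyperplanes $\{|x|=1/2\}$ etc., each of which has measure zero under any $\mathcal{N}(0,\Sigma)$ with unit-diagonal $\Sigma$, so Portmanteau applies across rank changes) and is strictly positive at every $\Sigma\in\mathcal{E}$ (check $\mathrm{rank}\,\Sigma\in\{1,2,3\}$ separately: in each case the support of $\mathcal{N}(0,\Sigma)$ intersects the interior of the region in a set of positive Hausdorff measure of the appropriate dimension). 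Compactness then yields a positive minimum depending only on $a,b,c$, and your argument goes through unchanged. With that fix your proof is complete and correct; it also matches the spirit of the argument in \cite{zsj+17}, which similarly reduces to a low-dimensional Gaussian and bounds both directions.
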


\begin{claim}\label{cla:moments}
Let $M_j,j \in [4]$ be defined in Definition~\ref{def:moments}. For each $j\in [4]$, $ M_j = \sum_{i=1}^k v_i m_{j,i} \ov{w}_i^{\otimes j}$.
\end{claim}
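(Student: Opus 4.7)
The plan is to recognize the identity as an instance of Stein's lemma (Gaussian integration by parts) applied to each neuron $v_i \sigma(w_i^\top x)$ separately, and then sum over $i \in [k]$ by linearity of expectation. The key observation is that the tensors being subtracted from $x^{\otimes j}$ in Definition~\ref{def:moments} are precisely the probabilist's tensor Hermite polynomials $He_j(x)$ with respect to the standard Gaussian on $\R^d$: namely $He_1(x) = x$, $He_2(x) = x \otimes x - I$, $He_3(x) = x^{\otimes 3} - x\,\widetilde{\otimes}\,I$, and $He_4(x) = x^{\otimes 4} - (x \otimes x)\,\widetilde{\otimes}\,I + I\,\widetilde{\otimes}\,I$; the symmetrized outer product $\widetilde{\otimes}$ of Definition~\ref{def:outer_product} is precisely the combinatorial device that makes these lower-order corrections come out as tensor Hermites rather than mere $x^{\otimes j}$.

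First I would verify, by iterating one-variable Gaussian integration by parts $j$ times, the tensor Stein identity
\[
\E_{x \sim \mathcal{N}(0, I_d)}\bigl[He_j(x) \cdot f(x)\bigr] \;=\; \E\bigl[\nabla^{\otimes j} f(x)\bigr]
\]
for any smooth $f$ with at most polynomial growth. Specializing to the single-neuron case $f(x) = \sigma(w_i^\top x)$, the $j$-th tensor derivative collapses to $\sigma^{(j)}(w_i^\top x)\, w_i^{\otimes j}$. Writing $w_i = \|w_i\|\,\overline{w}_i$ and using that $\overline{w}_i^\top x \sim \mathcal{N}(0,1)$ under the standard Gaussian measure on $\R^d$, this yields
\[
\E\bigl[\sigma(w_i^\top x)\, He_j(x)\bigr] \;=\; \|w_i\|^j\, \E_{z \sim \mathcal{N}(0,1)}\bigl[\sigma^{(j)}(\|w_i\| z)\bigr] \cdot \overline{w}_i^{\otimes j}.
\]

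To finish I would reduce the scalar factor $\|w_i\|^j \E_z[\sigma^{(j)}(\|w_i\| z)]$ to $m_{j,i}$ using the scalar probabilist's Hermite polynomials $He_0(z)=1,\, He_1(z)=z,\, He_2(z)=z^2-1,\, He_3(z)=z^3-3z,\, He_4(z)=z^4-6z^2+3$: iterating one-dimensional integration by parts gives $\|w_i\|^j \E_z[\sigma^{(j)}(\|w_i\| z)] = \E_z[\sigma(\|w_i\| z)\, He_j(z)]$, and a line-by-line comparison with Definition~\ref{def:moments} shows this equals exactly $m_{j,i}$ for each $j \in [4]$ (e.g.\ $m_{2,i} = \gamma_2(\|w_i\|) - \gamma_0(\|w_i\|) = \E_z[\sigma(\|w_i\|z)(z^2-1)]$, and similarly for $j=3,4$). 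Summing $v_i$ times this identity over $i \in [k]$ then produces $M_j = \sum_i v_i m_{j,i} \overline{w}_i^{\otimes j}$ as claimed.

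The one technical subtlety I would be careful about is smoothness: ReLU-type activations satisfying Assumption~\ref{asp:1st_derivative_poly_bound} are not $C^j$ for $j \geq 2$, so the pointwise derivative $\sigma^{(j)}$ does not exist classically and the tensor Stein identity above is not directly applicable. The intended fix is the standard mollification argument: replace $\sigma$ by $\sigma_\varepsilon = \sigma * \rho_\varepsilon$ for a Gaussian mollifier $\rho_\varepsilon$, carry the identity through for each $\varepsilon > 0$ where $\sigma_\varepsilon$ is smooth, and then pass $\varepsilon \to 0$; the polynomial growth bound in Assumption~\ref{asp:1st_derivative_poly_bound} combined with Gaussian moment bounds on the entries of $He_j(x)$ supplies the dominating integrand for dominated convergence on both sides. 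Equivalently, one can expand $\sigma$ in its one-dimensional Hermite series and apply tensor Hermite orthogonality directly, which never differentiates $\sigma$ beyond what Assumption~\ref{asp:1st_derivative_poly_bound} permits and yields the same formula.
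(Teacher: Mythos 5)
The paper itself gives no proof of Claim~\ref{cla:moments}; it is introduced under the blanket statement ``We use the following facts from \cite{zsj+17}'' and taken as imported. So there is no in-paper argument to compare against. Your approach---identify the corrections to $x^{\otimes j}$ as the tensor Hermite polynomials $He_j(x)$, apply Gaussian integration by parts (Stein) once per order to get $\E[He_j(x)f(x)]=\E[\nabla^{\otimes j}f(x)]$, specialize to $f(x)=\sigma(w_i^\top x)$, and then match the one-dimensional factor to $m_{j,i}$ via the scalar Hermite polynomials---is the standard route and is exactly how one would prove this from scratch. The mollification paragraph is the right way to handle the fact that ReLU is not $C^j$ for $j\ge2$, and your dominated-convergence justification is adequate given Assumption~\ref{asp:1st_derivative_poly_bound}.

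There is, however, one concrete point you should not leave unexamined: your identification $x^{\otimes4}-(x\otimes x)\,\widetilde{\otimes}\,I+I\,\widetilde{\otimes}\,I=He_4(x)$ is asserted without checking it against the paper's Definition~\ref{def:outer_product}. Working that definition out with $M=I=\sum_{i}e_ie_i^\top$ (so $r=d$, $s_i=1$, $v_i=e_i$) and summing the six placements $A_{1,i,j},\dots,A_{6,i,j}$ over $i,j$ gives
\begin{align*}
(I\,\widetilde{\otimes}\,I)_{abcd}=2\bigl(\delta_{ab}\delta_{cd}+\delta_{ac}\delta_{bd}+\delta_{ad}\delta_{bc}\bigr),
\end{align*}
because each of the three Wick pairings appears in two of the six placements (e.g.\ $A_1$ and $A_6$ both yield $\delta_{ab}\delta_{cd}$). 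The fourth Hermite tensor needs that quantity with coefficient one, not two, so as literally written the object inside $M_4$ is $He_4(x)+\bigl(\delta_{ab}\delta_{cd}+\delta_{ac}\delta_{bd}+\delta_{ad}\delta_{bc}\bigr)$, and a direct computation then gives $M_4=(\gamma_4-6\gamma_2+6\gamma_0)\,\ov w^{\otimes4}+\gamma_0\,S$ for a nonzero symmetric tensor $S$ supported off $\ov w^{\otimes4}$---not $m_{4}\,\ov w^{\otimes4}$ with $m_4=\gamma_4-6\gamma_2+3\gamma_0$. The cases $j=1,2,3$ check out exactly; only $j=4$ is affected, and only because the paper's matrix version of $\widetilde{\otimes}$ double-counts when $M$ itself is $I$. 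In other words, your Hermite/Stein argument proves the claim that was surely intended, but you should either note explicitly that $I\,\widetilde{\otimes}\,I$ must be read with the three distinct pairings counted once (equivalently, halve the definition's output in this degenerate case), or flag the discrepancy; as it stands your step asserting the $j=4$ Hermite identification is not consistent with Definition~\ref{def:outer_product} as printed.
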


\begin{claim}
\label{cla:modified_bernstein_non_zero}
Let ${\cal B}$ denote a distribution over $\mathbb{R}^{d_1 \times d_2}$. Let $d = d_1 +d_2$. Let $B_1, B_2, \cdots B_n$ be i.i.d. random matrices sampled from ${\cal B}$. Let $\overline{B} = \mathbb{E}_{B\sim {\cal B}} [B]$ and $\wh{B}  = \frac{1}{n} \sum_{i=1}^n B_i$. For parameters $m\geq 0, \gamma \in (0,1),\nu >0 ,L>0$, if the distribution ${\cal B}$ satisfies the following four properties,
\begin{align*}
\mathrm{({1})} \quad & \quad \underset{B \sim {\cal B}}{\Pr}\left[ \left\| B \right\| \leq  m \right] \geq 1 - \gamma; \\ 
\mathrm{({2})} \quad & \quad \left\| \underset{B \sim {\cal B}}{\mathbb{E}}[B]  \right\| >0; \\
\mathrm{({3})} \quad & \quad \max \left( \left\| \underset{B \sim {\cal B}}{\mathbb{E}} [ B B^\top ] \right\|, \left\| \underset{B \sim {\cal B}}{\mathbb{E}} [ B^\top B ] \right\| \right) \leq \nu ;\\ 
\mathrm{({4})} \quad & \quad \max_{\| a\|=\| b\|=1} \left( \underset{B \sim {\cal B}}{\mathbb{E}} \left[ \left( a^\top B  b \right)^2 \right]  \right)^{1/2} \leq L.
\end{align*}

Then we have for any $0<\epsilon <1$ and $t\geq 1$, if
\begin{align*}
n \geq  ( 18 t \log d  ) \cdot ( \nu + \| \ov{B} \|^2+ m \| \ov{B} \| \epsilon )  / ( \epsilon^2 \| \ov{B} \|^2 ) \quad \text{~and~} \quad \gamma \leq (\epsilon \| \ov{B} \| /(2L) )^2
\end{align*} 
with probability  at least $1-1/d^{2t} - n\gamma$,
\begin{equation*}
\| \wh{B} - \ov{B} \| \leq \epsilon \| \ov{B} \|.
\end{equation*}
\end{claim}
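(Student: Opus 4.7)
The strategy is a standard truncation reduction to the classical matrix Bernstein inequality. The only reason we cannot invoke Bernstein directly is that Property (1) is a high-probability norm bound rather than an almost-sure one, so the main task is to absorb the contribution of the untruncated tail into the final error bound.

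I would introduce the truncated random matrix $\tilde B := B\,\mathbbm{1}\{\|B\|\le m\}$ and the events $E_i := \{\|B_i\|\le m\}$. A union bound using Property (1) gives $\Pr[\bigcap_{i=1}^n E_i]\ge 1-n\gamma$, and on this event the empirical averages of $B_i$ and $\tilde B_i$ coincide, so it suffices to control $\|\tfrac{1}{n}\sum_i\tilde B_i - \ov B\|$. To handle the mean shift induced by truncation, note that for any unit vectors $a,b$,
\begin{align*}
\big|a^\top(\mathbb{E} B - \mathbb{E}\tilde B)b\big| = \big|\mathbb{E}[(a^\top Bb)\mathbbm{1}\{\|B\|>m\}]\big| \le L\sqrt{\gamma},
\end{align*}
by Cauchy--Schwarz combined with Properties (1) and (4). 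Taking the supremum yields $\|\mathbb{E} B - \mathbb{E}\tilde B\|\le L\sqrt{\gamma}$, and the hypothesis $\gamma\le(\epsilon\|\ov B\|/(2L))^2$ makes this at most $\tfrac12\epsilon\|\ov B\|$.

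Next I would apply standard matrix Bernstein to $\tilde B_1,\ldots,\tilde B_n$, which are i.i.d., bounded in norm by $m$ almost surely, and whose second-moment matrices are dominated in the PSD order by those of $B$ (truncation only removes nonnegative rank-one summands $BB^\top$ or $B^\top B$), so Property (3) yields variance proxy at most $\nu$. Bernstein then gives
\begin{align*}
\Big\|\tfrac{1}{n}\sum_i \tilde B_i - \mathbb{E}\tilde B\Big\| \lesssim \sqrt{\tfrac{t\nu\log d}{n}} + \tfrac{tm\log d}{n}
\end{align*}
with probability $\ge 1-1/d^{2t}$. Substituting the two regimes encoded in the hypothesis on $n$ — the $\nu/(\epsilon^2\|\ov B\|^2)$ part dominates the variance term and the $m\|\ov B\|\epsilon/(\epsilon^2\|\ov B\|^2) = m/(\epsilon\|\ov B\|)$ part dominates the linear term — makes each of the two summands above at most $\tfrac14\epsilon\|\ov B\|$.

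Combining by triangle inequality on the intersection of $\bigcap_i E_i$ with the Bernstein concentration event, which has probability $\ge 1-1/d^{2t}-n\gamma$, gives $\|\hat B - \ov B\|\le \epsilon\|\ov B\|$ as desired. There is no conceptual obstacle; the only mild issue is bookkeeping the numerical constants so that the three pieces — truncation bias, variance term, and deterministic Bernstein remainder — all fit under a single $\epsilon\|\ov B\|$ budget, which is precisely what the specific form of the hypothesis on $n$ is engineered to arrange.
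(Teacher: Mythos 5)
The paper does not prove this claim; it is imported verbatim from \cite{zsj+17} under the heading ``Technical claims,'' so there is no in-paper proof to compare against. Your proposal is a correct, self-contained derivation, and it follows precisely the route that the cited reference takes: truncate to $\tilde B = B\,\mathbbm{1}\{\|B\|\le m\}$, use a union bound to reduce to the truncated averages at a cost of $n\gamma$ in the failure probability, control the truncation bias via Cauchy--Schwarz against Properties (1) and (4) to get $\|\E B - \E\tilde B\|\le L\sqrt{\gamma}\le\tfrac12\epsilon\|\ov B\|$, and then apply matrix Bernstein to the bounded, centered truncated variables whose variance proxy is dominated by $\nu$ because truncation only removes a PSD contribution from $\E[BB^\top]$ and $\E[B^\top B]$.

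Two small bookkeeping points are worth flagging, neither of which is a gap. First, when you center $\tilde B_i$, the almost-sure norm bound becomes $\|\tilde B_i - \E\tilde B\|\le m + \|\E\tilde B\|$; you should note $\|\E\tilde B\|\le m$ trivially (or $\|\E\tilde B\|\le\|\ov B\|+L\sqrt\gamma\le\tfrac32\|\ov B\|$), so the linear term in Bernstein is of order $m$, as you use. Second, the $\|\ov B\|^2$ term appearing in the paper's numerator for $n$ is not needed under your tighter variance bound $\nu$; it is there as slack, presumably because the source bounds the second moment of the \emph{uncentered} $\tilde B$ by $\nu + \|\ov B\|^2$ rather than exploiting the PSD cancellation. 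Since the hypothesis on $n$ is a lower bound, your argument, which needs slightly less, still delivers the stated conclusion.
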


\begin{claim}\label{cla:P2_P3}
Let $P_2$ and $P_3$ be defined in Definition~\ref{def:P_2_P_3}. Then 
$$P_2 = \sum_{i=1}^k v_i m_{j_2,i} ( \alpha^\top \ov{ w}_i)^{{j_2-2}} \ov{ w}_i^{\otimes 2}$$
and
$$P_3=\sum_{i=1}^k v_i m_{j_3,i}( \alpha^\top \ov{ w}_i)^{{j_3-3}} \ov{ w}_i^{\otimes 3}.$$
\end{claim}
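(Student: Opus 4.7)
The plan is to derive both identities by plugging the moment identity from Claim~\ref{cla:moments} into the definitions of $P_2$ and $P_3$ from Definition~\ref{def:P_2_P_3}, then evaluating the multilinear contractions with the identity matrix and the vector $\alpha$. Since $M_j$ is already a rank-$k$ symmetric tensor $M_j = \sum_{i=1}^k v_i m_{j,i} \bar{w}_i^{\otimes j}$, the result is nothing more than the linearity of the multilinear form $(u_1,\dots,u_j) \mapsto T(u_1,\dots,u_j)$ applied to a symmetric rank-one term.

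First I would fix $j = j_2$ and expand
\begin{align*}
P_2 = M_{j_2}(I,I,\alpha,\dots,\alpha) = \sum_{i=1}^k v_i m_{j_2,i}\, \bar{w}_i^{\otimes j_2}(I,I,\alpha,\dots,\alpha).
\end{align*}
For a rank-one tensor $\bar{w}_i^{\otimes j_2}$, the contraction with the identity in a slot leaves $\bar{w}_i$ in that slot, while the contraction with $\alpha$ in a slot produces the scalar $\bar{w}_i^\top \alpha$. There are two identity slots and $j_2 - 2$ copies of $\alpha$, yielding
\begin{align*}
\bar{w}_i^{\otimes j_2}(I,I,\alpha,\dots,\alpha) = (\bar{w}_i^\top \alpha)^{j_2 - 2}\, \bar{w}_i^{\otimes 2}.
\end{align*}
Summing over $i$ gives the stated formula for $P_2$.

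Next I would repeat the same expansion for $P_3 = M_{j_3}(I,I,I,\alpha,\dots,\alpha)$ with three identity slots and $j_3-3$ copies of $\alpha$, obtaining $(\bar{w}_i^\top \alpha)^{j_3-3}\, \bar{w}_i^{\otimes 3}$ for each rank-one component. The only subtle point is to confirm the indexing: Definition~\ref{def:P_2_P_3} specifies that the identity slots occupy the first two (respectively three) coordinates, but because $\bar{w}_i^{\otimes j}$ is symmetric in its slots, the placement does not matter, and the contraction yields the same scalar factor regardless of which slots hold $\alpha$. There is no real obstacle here — the claim is essentially a bookkeeping check that the multilinear action of the tensor on $(I,\dots,I,\alpha,\dots,\alpha)$ equals the product of the appropriate scalar with the remaining outer-product structure, and Claim~\ref{cla:moments} already supplies the needed decomposition of $M_j$.
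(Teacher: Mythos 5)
Your proposal is correct and is exactly the intended argument: the paper states this claim without a separate proof (it is imported alongside the other facts from the Zhong et al. reference), and the only content is precisely what you do — substitute the symmetric rank-$k$ decomposition $M_j = \sum_i v_i m_{j,i}\,\ov{w}_i^{\otimes j}$ from Claim~\ref{cla:moments} and contract each rank-one term, where identity slots leave a copy of $\ov{w}_i$ and each $\alpha$ slot contributes a factor $\alpha^\top \ov{w}_i$. Your remark that symmetry of $\ov{w}_i^{\otimes j}$ makes the slot placement irrelevant is the right bookkeeping check, so nothing is missing.
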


\end{document}